\def\independenT#1#2{\mathrel{\rlap{$#1#2$}\mkern2mu{#1#2}}}
\newcommand\indep{\protect\mathpalette{\protect\independenT}{\perp}}
\newcommand{\customitemref}[2]{#2\def\@currentlabel{#2}\label{#1}}
\let\proof\relax 
\let\endproof\relax
  \newcommand{\mathmat}[1]{\mathbfit{#1}}
  \newcommand{\mathvec}[1]{\mathbfit{#1}}
  \newcommand{\mathvecgreek}[1]{\mathbfit{#1}}
  \newcommand{\mathmatgreek}[1]{\mathbfit{#1}}
  \newcommand{\boldupright}[1]{\symbfup{#1}}
  \newcommand{\mathrv}[1]{\mathsfit{#1}}
  \newcommand{\mathrvvec}[1]{\mathbfsfit{#1}}
  \newcommand{\mathrvgreek}[1]{\mathsfit{#1}}
  \newcommand{\mathrvvecgreek}[1]{\mathbfsfit{#1}}
  \newcommand{\mathmat}[1]{\mathbfit{#1}}
  \newcommand{\mathvec}[1]{\mathbfit{#1}}
  \newcommand{\mathvecgreek}[1]{\mathbfit{#1}}
  \newcommand{\mathmatgreek}[1]{\mathbfit{#1}}
  \newcommand{\boldupright}[1]{\mathbf{#1}}
  \newcommand{\mathrv}[1]{\mathsfit{#1}}
  \newcommand{\mathrvvec}[1]{\mathbfsfit{#1}}
  \newcommand{\mathrvgreek}[1]{\mathsfit{#1}}
  \newcommand{\mathrvvecgreek}[1]{\mathbfsfit{#1}}
\declaretheoremstyle[
    spaceabove    = \parsep,
    spacebelow    = \parsep,
    %postheadspace = \newline,
    %headpunct     = {},
    bodyfont      = \normalfont\itshape,
]{theoremsty}
\declaretheorem[name=Proposition,style=theoremsty, mdframed={style = coloredstyle}]{proposition}
\declaretheorem[name=Corollary,  style=theoremsty, mdframed={style = coloredstyle}]{corollary}
\declaretheorem[name=Lemma,      style=theoremsty, mdframed={style = coloredstyle}]{lemma}
\declaretheorem[name=Theorem,    style=theoremsty, mdframed={style = coloredstyle}, numbered=no]{theorem*}
\declaretheorem[name=Proposition,style=theoremsty, mdframed={style = coloredstyle}, numbered=no]{proposition*}
\declaretheorem[name=Corollary,  style=theoremsty, mdframed={style = coloredstyle}, numbered=no]{corollary*}
\declaretheorem[name=Lemma,      style=theoremsty, mdframed={style = coloredstyle}, numbered=no]{lemma*}
\declaretheorem[name=Open Problem,style=theoremsty, mdframed={style = coloredstyle}, numbered=no]{openproblem*}
\declaretheorem[name=Conjecture,  style=theoremsty, mdframed={style = coloredstyle}, numbered=no]{conjecture*}
\declaretheoremstyle[
    spaceabove=\parsep,
    spacebelow=\parsep,
    bodyfont=\normalfont,
]{normalsty}
\declaretheorem[name=Remark,      style=normalsty]{remark}
\declaretheorem[name=Definition,  style=normalsty, mdframed={style = 
 coloredstyle}]{definition}
\declaretheorem[name=Assumption,  style=normalsty, mdframed={style = 
 coloredstyle}]{assumption}
\declaretheorem[name=Remark,      style=normalsty, numbered=no]{remark*}
\declaretheorem[name=Definition,  style=normalsty, mdframed={style = 
 coloredstyle}, numbered=no]{definition*}
\declaretheorem[name=Assumption,  style=normalsty, mdframed={style = 
 coloredstyle}, mdframed={style = 
 coloredstyle}numbered=no]{assumption*}
\providecommand\IfFormatAtLeastTF{\@ifl@t@r\fmtversion}
  \renewcommand*{\backref}[1]{}
  \renewcommand*{\backrefalt}[4]{({\footnotesize%
  \ifcase #1 Not cited.%
    \or page~#2%
    \else pages #2%
  \fi%
  })}
\definecolor{linkcolor}{HTML}{6929C4}
\definecolor{citecolor}{HTML}{0043CE}
\crefname{assumption}{Assumption}{Assumption}
\crefname{condition}{Condition}{Condition}
\crefname{framedtheorem}{Theorem}{Theorems}
\crefname{framedproposition}{Proposition}{Propositions}
\crefname{framedlemma}{Lemma}{Lemmas}
\def\adl@drawiv#1#2#3{%
        \hskip.5\tabcolsep
        \xleaders#3{#2.5\@tempdimb #1{1}#2.5\@tempdimb}%
                #2\z@ plus1fil minus1fil\relax
        \hskip.5\tabcolsep}
\newcommand{\cdashlinelr}[1]{%
  \noalign{\vskip\aboverulesep
           \global\let\@dashdrawstore\adl@draw
           \global\let\adl@draw\adl@drawiv}
  \cdashline{#1}
  \noalign{\global\let\adl@draw\@dashdrawstore
           \vskip\belowrulesep}}
\DeclareMathOperator*{\minimize}{minimize}
\DeclareMathOperator*{\argmax}{arg\,max}
\DeclareMathOperator*{\argmin}{arg\,min} 
\newcommand*\xbar[1]{%
  \hbox{%
    \vbox{%
      \hrule height 0.6pt % The actual bar
      \kern0.33ex%         % Distance between bar and symbol
      \hbox{%
        \kern-0.1em%      % Shortening on the left side
        \ensuremath{#1}%
        \kern-0.1em%      % Shortening on the right side
      }%
    }%
  }%
} 
\newcommand{\E}[1]{\mathbb{E}\left[ #1 \right]}
\newcommand{\Esub}[2]{\mathbb{E}_{#1}\left[ #2 \right]}
\newcommand{\V}[1]{\mathbb{V}\left[ #1 \right]}
\newcommand{\Vsub}[2]{\mathbb{V}_{#1}\left[ #2 \right]}
\newcommand{\Cov}[1]{\mathrm{Cov}\left( #1 \right)}
\newcommand{\norm}[1]{{\left\lVert #1 \right\rVert}}
\newcommand{\abs}[1]{{\left| #1 \right|}}
\def\do#1{\csdef{v#1}{\mathvec{#1}}}
\def\do#1{\csdef{v#1}{\mathvecgreek{\csname#1\endcsname}}}
\def\do#1{\csdef{rv#1}{\mathrv{#1}}}
\def\do#1{\csdef{rv#1}{\mathrvgreek{\csname#1\endcsname}}}
\def\do#1{\csdef{rvv#1}{\mathrvvec{#1}}}
\def\do#1{\csdef{rvv#1}{\mathrvvecgreek{\csname#1\endcsname}}}
\def\do#1{\csdef{rvm#1}{\mathrvvecgreek{#1}}}
\def\do#1{\csdef{m#1}{\mathmat{#1}}}
\def\do#1{\csdef{m#1}{\mathmatgreek{\csname#1\endcsname}}}
\newcommand{\inner}[2]{\left\langle #1, #2 \right\rangle}
\definecolor{cherry1}{rgb}{0.215686, 0.215686, 0.215686}
\definecolor{cherry2}{rgb}{0.563899, 0.155919, 0.156577}
\definecolor{cherry3}{rgb}{0.747389, 0.178584, 0.180272}
\definecolor{cherry4}{rgb}{0.836168, 0.264453, 0.26819}
\definecolor{cherry5}{rgb}{0.880144, 0.397868, 0.404399}
\definecolor{cherry6}{rgb}{0.911942, 0.567676, 0.576412}
\begin{document}
\twocolumn[
\icmltitle{Demystifying SGD with Doubly Stochastic Gradients}
% It is OKAY to include author information, even for blind
% submissions: the style file will automatically remove it for you
% unless you've provided the [accepted] option to the icml2024
% package.

% List of affiliations: The first argument should be a (short)
% identifier you will use later to specify author affiliations
% Academic affiliations should list Department, University, City, Region, Country
% Industry affiliations should list Company, City, Region, Country

% You can specify symbols, otherwise they are numbered in order.
% Ideally, you should not use this facility. Affiliations will be numbered
% in order of appearance and this is the preferred way.
\icmlsetsymbol{equal}{*}

\begin{icmlauthorlist}
\icmlauthor{Kyurae Kim}{penn}
\icmlauthor{Joohwan Ko}{kaist} 
\icmlauthor{Yi-An Ma}{ucsd}
\icmlauthor{Jacob R. Gardner}{penn}
\end{icmlauthorlist}

\icmlaffiliation{penn}{Department of Computer and Information Sciences, University of Pennsylvania, Philadelphia, PA, U.S.A.}
\icmlaffiliation{kaist}{KAIST, Daejeon, South Korea, Republic of}
\icmlaffiliation{ucsd}{Hal\i{}c\i{}o\u{g}lu Data Science Institute, University of California San Diego, San Diego, CA, U.S.A.}

\icmlcorrespondingauthor{Kyurae Kim}{kyrkim@seas.upenn.edu}
%\icmlcorrespondingauthor{Firstname2 Lastname2}{first2.last2@www.uk}

% You may provide any keywords that you
% find helpful for describing your paper; these are used to populate
% the "keywords" metadata in the PDF but will not be shown in the document
\icmlkeywords{Stochastic Optimization, Variational Inference, SGD, Stochastic Gradient Descent, Stochastic Optimization, Doubly Stochastic Optimization}

\vskip 0.3in
]

% this must go after the closing bracket ] following \twocolumn[ ...

% This command actually creates the footnote in the first column
% listing the affiliations and the copyright notice.
% The command takes one argument, which is text to display at the start of the footnote.
% The \icmlEqualContribution command is standard text for equal contribution.
% Remove it (just {}) if you do not need this facility.

\printAffiliationsAndNotice{}  % leave blank if no need to mention equal contribution
% \printAffiliationsAndNotice{\icmlEqualContribution} % otherwise use the standard text.

\begin{abstract}
Optimization objectives in the form of a sum of intractable expectations are rising in importance (\textit{e.g.}, diffusion models, variational autoencoders, and many more), a setting also known as ``finite sum with infinite data.''
For these problems, a popular strategy is to employ SGD with \textit{doubly stochastic gradients} (doubly SGD): the expectations are estimated using the gradient estimator of each component, while the sum is estimated by subsampling over these estimators.
Despite its popularity, little is known about the convergence properties of doubly SGD, except under strong assumptions such as bounded variance.
In this work, we establish the convergence of doubly SGD with independent minibatching and random reshuffling under general conditions, which encompasses dependent component gradient estimators.
In particular, for dependent estimators, our analysis allows fined-grained analysis of the effect correlations.
As a result, under a per-iteration computational budget of \(b \times m\), where \(b\) is the minibatch size and \(m\) is the number of Monte Carlo samples, our analysis suggests where one should invest most of the budget in general.
Furthermore, we prove that random reshuffling (RR) improves the complexity dependence on the subsampling noise.
\end{abstract}

\section{Introduction}\label{section:introduction}

Stochastic gradient descent (SGD; \citealp{robbins_stochastic_1951,bottou_online_1999,nemirovski_robust_2009,shalev-shwartz_pegasos_2011}) is the \textit{de facto} standard for solving large scale optimization problems of the form of finite sums such as
{%
\setlength{\abovedisplayskip}{1ex} \setlength{\abovedisplayshortskip}{1ex}
\setlength{\belowdisplayskip}{1ex} \setlength{\belowdisplayshortskip}{1ex}
\begin{equation}
  \minimize_{\vx \in \mathcal{X} \subseteq \mathbb{R}^d}\; 
  \left\{\;
  F\left(\vx\right) \;\triangleq\;  
  {\textstyle\frac{1}{n}  \sum_{i=1}^n} f_i\left(\vx\right)
  \;\right\}.
  \label{eq:objective}
\end{equation}
}%
When \(n\) is large, SGD quickly converges to low-accuracy solutions by subsampling over components \(f_1, \ldots, f_n\).
The properties of SGD on the finite sum class have received an immense amount of interest~\citep{bottou_optimization_2018} as it includes empirical risk minimization (ERM; \citealp{vapnik_principles_1991}).

Unfortunately, for an emerging large set of problems in machine learning, we may not have direct access to the components \(f_1, \ldots, f_n\).
That is, each \(f_i\) may be defined as an intractable expectation, or an ``infinite sum'' 
{%
\setlength{\abovedisplayskip}{1ex} \setlength{\abovedisplayshortskip}{1ex}
\setlength{\belowdisplayskip}{1ex} \setlength{\belowdisplayshortskip}{1ex}
\begin{equation}
  f_i(\vx) = \mathbb{E}_{\rvveta \sim \varphi} f_i\left(\vx; \rvveta\right),
  \label{eq:expectation_subcomponent}
\end{equation}
}%
where we only have access to the noise distribution \(\varphi\) and the integrand \(f_i\left(\vx; \rvveta\right)\), and \(\rvveta\) is a potentially continuous and unbounded source of stochasticity;
a setting \citet{zheng_lightweight_2018,bietti_stochastic_2017} have previously called ``finite sum with infinite data.'' 
Such problems include the training of diffusion models~\citep{sohl-dickstein_deep_2015,ho_denoising_2020,song_generative_2019}, variational autoencoders~\citep{kingma_autoencoding_2014,rezende_stochastic_2014}, solving ERM under differential privacy~\citep{bassily_private_2014,song_stochastic_2013}, and also classical problems such as variational inference~\citep{ranganath_black_2014,titsias_doubly_2014,kucukelbir_automatic_2017}, and variants of empirical risk minimization~\citep{dai_scalable_2014,bietti_stochastic_2017,shi_triply_2021,orvieto_explicit_2023,liu_noisy_2021a}.
In contrast to the finite sum setting where SGD has traditionally been applied, our problem takes the form of
{%
\setlength{\abovedisplayskip}{1ex} \setlength{\abovedisplayshortskip}{1ex}
\setlength{\belowdisplayskip}{1ex} \setlength{\belowdisplayshortskip}{1ex}
\begin{equation*}
       \minimize_{\vx \in \mathcal{X} \subseteq \mathbb{R}^d}\; 
  \left\{\;
  F\left(\vx\right) \;\triangleq\;  
  {
  {\textstyle\frac{1}{n} \sum_{i=1}^n} \mathbb{E}_{\rvveta \sim \varphi} f_i\left(\vx; \rvveta\right)
  }
  \;\right\}.
\end{equation*}
}%
These optimization problems are typically solved using SGD with \textit{doubly stochastic gradients} (doubly SGD; coined by \citealt{dai_scalable_2014,titsias_doubly_2014}), so-called because, in addition to subsampling over $f_{i}$, stochastic estimates of each component \(f_i\) are used.

%Furthermore, it is common to \textit{correlate} the Monte Carlo estimators by sharing the same \(\rvveta\) over the components in the same batch. 
%For instance, in diffusion models, it is common to compute the whole minibatch at the same ``temperature.''
% This complicates the variance properties of the estimator.

% A typical approach to solving these problems using SGD is to subsample over the components \(f_1, \ldots, f_n\) and also stochastically estimate the gradient of each subcomponent through Monte Carlo estimation.
% As a result, the resulting gradient estimator is a mix of two distinct types of stochasticities: one is discrete, and the other is continuous.
% Hence, ``doubly stochastic'' gradient descent (doubly SGD), a term coined by~\citet{dai_scalable_2014,titsias_doubly_2014}.
% Furthermore, it is common to \textit{correlate} the Monte Carlo estimators by sharing the same \(\rvveta\) over the components in the same batch. 
% For instance, in diffusion models, it is common to compute the whole minibatch at the same ``temperature.''
% This complicates the variance properties of the estimator.

%Monte Carlo estimators~\citep{mohamed_monte_2020}.
%Hence, ``doubly'' SGD.
%Despite the widespread use of doubly SGD for probabilistic inference and generative modeling, little is known about its convergence properties.

Previous studies have relied on strong assumptions to analyze doubly stochastic gradients.
For instance, \citet{kulunchakov_estimate_2020,bietti_stochastic_2017,zheng_lightweight_2018} have 
\begin{enumerate*}[label=\textbf{(\roman*)}]
    \item assumed that the variance of each component estimator is bounded by a constant, which contradicts componentwise strong convexity~\citep{nguyen_sgd_2018} when \(\mathcal{X} = \mathbb{R}^d\),
    \item or that the integrand \(\nabla f_i\left(\vx; \veta\right)\), is \(L\)-Lipschitz smooth ``uniformly'' over \(\veta\).
\end{enumerate*}
That is, for any fixed \(\veta\) and \(i\),
{%
\setlength{\abovedisplayskip}{1ex} \setlength{\abovedisplayshortskip}{1ex}
\setlength{\belowdisplayskip}{1ex} \setlength{\belowdisplayshortskip}{1ex}
\[
   \norm{ \nabla f_i\left(\vx; \veta\right) - \nabla f_i\left(\vy; \veta\right)} \leq L \norm{\vx - \vy}_2^2
\]
}%
holds for all \((\vx, \vy) \in \mathcal{X}^2\).
Unfortunately, this only holds for additive noise and is otherwise unrealizable when \(\rvveta\) has an unbounded support.
Therefore, analyses relying on uniform smoothness obscure a lot of interesting behavior.
Meanwhile, weaker assumptions such as expected smoothness (ES;~\citealp{moulines_nonasymptotic_2011,gower_stochastic_2021}) have shown to be realizable even for complex gradient estimators~\citep{domke_provable_2019,kim_convergence_2023}.
Therefore, a key question is how these ES-type assumptions propagate to doubly stochastic estimators.
Among these, we focus on the expected residual (ER; \citealp{gower_sgd_2019}) condition.

Furthermore, in practice, certain applications of doubly SGD share the randomness \(\rvveta\) across the batch \(\rvB\).
(See \cref{section:dsgd} for examples.)
This introduces dependence between the gradient estimate for each component such that \(\nabla f_i\left(\vx; \rvveta\right) \not{\indep} \nabla f_j\left(\vx; \rvveta\right)\) for \(i, j \in \rvB\).
Little is known about the effect of this practice apart from some empirical results~\cite {kingma_variational_2015}.
For instance, when \(m\) Monte Carlo samples of \(\rvveta\) and a minibatch of size \(b\) are used, what is the trade-off between \(m\) and \(b\)?
To answer this question, we provide a theoretical analysis of doubly SGD that encompasses dependent gradient estimators.

\vspace{2ex}

% Overall, this paper fill these gaps through the following:
\begin{mdframed}[
    frametitle      = \textcolor{cherry1}{Technical Contributions},
    backgroundcolor = cherry1!5,
    linewidth       = 2pt,  
    linecolor       = cherry1,  
    % frametitlerule  = true,%
    topline         = false,  
    rightline       = false,  
    bottomline      = false,  
    leftline        = true,  
]
{\hypersetup{linkbordercolor=black,linkcolor=black}
\begin{itemize}[leftmargin=1ex]
    \setlength\itemsep{0ex}
    \item \textbf{\cref{thm:doubly_stochastic_variance}}: For doubly stochastic estimators, we establish a general variance bound of the form of
{%
\setlength{\abovedisplayskip}{1ex} \setlength{\abovedisplayshortskip}{1ex}
\setlength{\belowdisplayskip}{.5ex} \setlength{\belowdisplayshortskip}{.5ex}
    \[
       \mathcal{O}\,\Bigg(\, \frac{\textstyle \frac{1}{n} \sum^n_{i=1} \sigma^2_{i} }{m b} + \rho \frac{ {\textstyle\left(\frac{1}{n} \sum^n_{i=1} \sigma_{i} \right)}^2  }{m} + \frac{\tau^2}{b} \,\Bigg),
    \]
}%
    where \(\sigma_{i}^2\) is the variance of the estimator of \(\nabla f_i\), \(\rho \in [0, 1]\) is the correlation between the estimators, and \(\tau^2\) is the variance of subsampling.
    
    \item \textbf{\cref{thm:general_conditions,thm:bounded_variance}}: 
    Using the general variance bound, we show that a doubly stochastic estimator subsampling over correlated estimators satisfying the ER condition and the bounded variance (BV; \cref{assumption:bounded_variance}; bounded only on the solution set) condition equally satisfies the ER and BV conditions as well.
    This is sufficient to guarantee the convergence of doubly SGD on convex, quasar convex, and non-convex smooth objectives. 
    
    \item \textbf{\cref{thm:strongly_convex_reshuffling_sgd_convergence}}: Under similar assumptions, we also prove the convergence of doubly SGD with random reshuffling (doubly SGD-RR), instead of independent subsampling, on a strongly convex objective with strongly convex components.
\end{itemize}
}%
\end{mdframed}

\newpage
\begin{mdframed}[
    frametitle      = \textcolor{cherry2}{Practical Insights},
    backgroundcolor = cherry2!5,
    linewidth       = 2pt,  
    linecolor       = cherry2,  
    % frametitlerule  = true,%
    topline         = false,  
    rightline       = false,  
    bottomline      = false,  
    leftline        = true,  
]
{\hypersetup{linkbordercolor=black,linkcolor=black}
\begin{itemize}[leftmargin=1ex]
    \setlength\itemsep{0ex}
    \item \textbf{Should I invest in (increase) \(m\) or \(b\)?} 
    When dependent gradient estimators are used, increasing \(m\) or \(b\) does not have the same impact on the gradient variance as the subsampling strategy also affects the resulting correlation between the estimators.
    Through \cref{thm:expected_variance_lemma}, our analysis provides insight into this effect. 
    In particular, we reveal that reducing subsampling variance also reduces Monte Carlo variances.
    Therefore, for a fixed budget \(m \times b\), increasing \(b\) should always be preferred over increasing \(m\).
    
    \item \textbf{Random Reshuffling Improves Complexity.} 
    Our analysis of doubly SGD-RR reveals that, for strongly convex objectives, random reshuffling improves the iteration complexity of doubly SGD from \(\mathcal{O}\left( \frac{1}{\epsilon} \sigma^2_{\mathrm{mc}} + \frac{1}{\epsilon} \sigma^2_{\mathrm{sub}} \right)\) to \(\mathcal{O}\left( \frac{1}{\epsilon} \sigma^2_{\mathrm{mc}} + \frac{1}{\sqrt{\epsilon}} \sigma_{\mathrm{sub}} \right)\).
    Furthermore, for dependent gradient estimators, doubly SGD-RR is ``super-efficient'':
    for a batch taking \(\Theta(m b)\) samples to compute, it achieves a \(n/b\) tighter asymptotic sample complexity compared to full-batch SGD.
\end{itemize}
}
\end{mdframed}

\vspace{-1ex}
\section{Preliminaries}
\vspace{-1ex}

\paragraph{Notation}
%Let \((\mathcal{X}, \mathcal{B}\left(\mathcal{X}\right))\) be a measurable space \(\mathcal{X} \subseteq \mathbb{R}^d\), where \(\mathcal{B}\left(\mathcal{X}\right)\) denotes the Borel measurable subsets of \(\mathcal{X}\).
We denote random variables (RVs) in serif (\textit{e.g.}, \(\rvx\), \(\rvvx\), \(\rvmX\), \(\rvB\)), vectors and matrices in bold (\textit{e.g.}, \(\vx\), \(\rvvx\), \(\mA\), \(\rvmA\)).
For a vector \(\vx\), we denote the \(\ell_2\)-norm as \(\norm{\vx}_2 \triangleq \sqrt{\inner{\vx}{\vx}} = \sqrt{\vx^{\top}\vx}\), where \(\inner{\vx}{\vx} = \vx^{\top}\vx\) is the inner product.
Lastly, \(\rvX \indep \rvY\) denotes independence of \(\rvX\) and \(\rvY\).

%Also, for some random vectors \(\rvvx, \rvvy\), the covariance is \(\Cov{\rvvx, \rvvy} = \mathbb{E} {\left(\rvvx - \vmu \right)} {\left(\rvvx - \vmu \right)}^{\top}\).

%For some matrix \(\mA\), \(\mathrm{tr}\left(\mA\right) \triangleq \sum_i A_{ii}\) denotes the trace, \(\norm{\mA}_{\mathrm{F}} \triangleq \sqrt{\mathrm{tr}\left(\mA^{\top}\mA\right)}\) denotes the Frobenius norm.

\begin{table}[h]
    \setlength{\tabcolsep}{.3em}
    \vspace{-3ex}
    \centering
    \caption{Nomenclature}
    \vspace{1ex}
    \begin{tabular}{clc}
      \textbf{Symb.} & \multicolumn{1}{c}{\textbf{Description}} & \multicolumn{1}{c}{\textbf{Ref.}} \\ \midrule
      \( F\left(\vx\right) \) & Objective function & \cref{eq:objective} \\
      \( f_i\left(\vx\right) \) & \(i\)th component of \(F\) & \cref{eq:objective} \\
      \(\nabla f_{\rvB}\left(\vx\right) \) & Minibatch subsampling estimator of \(\nabla F\) & \cref{eq:minibatch_gradient} \\
      \(\rvB\) & Minibatch of component indices & \cref{eq:finite_sum} \\
      \(\pi\) & Minibatch subsampling strategy & \cref{eq:finite_sum} \\
      \(b_{\mathrm{eff}}\) & Effective sample size of \(\pi\) & \cref{eq:ess} \\
      \(\rvvg_i\left(\vx\right) \) & Unbiased stochastic estimator of \(\nabla f_i\) & \cref{eq:monte_carlo_gradient} \\
      \(\vg_i\left(\vx; \rvveta\right) \) & Integrand of estimator \(\rvvg_i\left(\vx\right)\) & \cref{eq:monte_carlo_gradient} \\
      \(\rvvg_{\rvB}\left(\vx\right) \) & Doubly stochastic estimator of \(\nabla F\) & \cref{eq:doubly_stochastic_gradient} 
      \\
      \(\mathcal{L}_{\mathrm{sub}}\) & ER constant (\cref{assumption:expected_residual}) of \(\pi\) & 
      \hyperref[assumption:subsampling_er]{Assu.~\labelcref*{assumption:subsampling_er}} 
      \\
      \(\mathcal{L}_i \) & ER constant (\cref{assumption:expected_residual}) of \(\rvvg_i\) & \hyperref[assumption:montecarlo_er]{Assu.~\labelcref*{assumption:montecarlo_er}} 
      \\
      \(\tau^2\) & BV constant (\cref{assumption:bounded_variance}) of \(\pi\) & 
      \hyperref[assumption:bounded_variance_both]{Assu.~\labelcref*{assumption:bounded_variance_both}} 
      \\
      \(\sigma_i^2 \) & BV constant (\cref{assumption:bounded_variance}) of \(\rvvg_i\) & 
      \hyperref[assumption:bounded_variance_both]{Assu.~\labelcref*{assumption:bounded_variance_both}} 
    \end{tabular}
    %\label{tab:my_label}
    \vspace{-3ex}
\end{table}

% \vspace{-1ex}
\subsection{Stochastic Gradient Descent on Finite-Sums}
% \vspace{-1ex}
Stochastic gradient descent (SGD) is an optimization algorithm that repeats the steps
{%
\setlength{\abovedisplayskip}{1ex} \setlength{\abovedisplayshortskip}{1ex}
\setlength{\belowdisplayskip}{.5ex} \setlength{\belowdisplayshortskip}{.5ex}
\[
  \vx_{t+1} = \Pi_{\mathcal{X}}\left( \vx_t - \gamma_t \rvvg\left(\vx_t\right) \right),
\]
}%
where, \(\Pi_{\mathcal{X}}\) is a projection operator onto \(\mathcal{X}\), \({(\gamma_t)}_{i=0}^{T-1}\) is some stepsize schedule, \(\rvvg\left(\vx\right)\) is an unbiased estimate of \(\nabla F\left(\vx\right)\).

\vspace{-1.5ex}
\paragraph{Finite-Sum Problems.}
When the objective can be represented as a ``finite sum'' it is typical to approximate the gradients of the objective as
{%
\setlength{\abovedisplayskip}{1ex} \setlength{\abovedisplayshortskip}{1ex}
\setlength{\belowdisplayskip}{1ex} \setlength{\belowdisplayshortskip}{1ex}
\begin{align}
  \nabla F\left(\vx\right) 
  = \frac{1}{n} {\textstyle\sum^n_{i=1}} \nabla f_i\left(\vx\right)
  = \Esub{\rvB \sim \pi}{ \frac{1}{b} {\textstyle\sum_{i \in \rvB}} \nabla f_i\left(\vx\right) },
  \label{eq:finite_sum}
\end{align}
}%
where \(\rvB \sim \pi\) is an index set of cardinality \(\abs{\rvB} = b\), or ``minibatch,'' formed by subsampling over the datapoint indices \(\{1, \ldots, n\}\).
More formally, we are approximating \(\nabla F\) using the (minibatch) subsampling estimator
{%
\setlength{\abovedisplayskip}{.5ex} \setlength{\abovedisplayshortskip}{.5ex}
\setlength{\belowdisplayskip}{.5ex} \setlength{\belowdisplayshortskip}{.5ex}
\begin{equation}
  \nabla f_{\rvB}\left(\vx\right)
  \triangleq
  \frac{1}{b} \sum_{i \in \rvB} \nabla f_i\left(\vx\right),
  \label{eq:minibatch_gradient}
\end{equation}
}%
where the performance of this estimator, or equivalently, of the subsampling strategy \(\pi\), can be quantified by its variance
{%
\setlength{\abovedisplayskip}{0ex} \setlength{\abovedisplayshortskip}{0ex}
\setlength{\belowdisplayskip}{1ex} \setlength{\belowdisplayshortskip}{1ex}
\begin{equation}
  \mathrm{tr}\V{ \nabla f_{\rvB}\left(\vx\right) } = \frac{1}{b_{\mathrm{eff}}} 
  \underbrace{\textstyle
  \frac{1}{n} \sum^n_{i=1} \norm{ \nabla f_i\left(\vx\right) - \nabla F\left(\vx\right) }_2^2,
  }_{\text{(unit) subsampling variance}}
  \label{eq:ess}
\end{equation}
}%
where we say \(b_{\mathrm{eff}}\) is the ``effective sample size'' of \(\pi\).
For instance, independent subsampling achieves \(b_{\mathrm{eff}} = b\), and sampling without replacement, also known as ``\(b\)-nice sampling''~\citep{gower_sgd_2019,richtarik_parallel_2016,csiba_importance_2018}, achieves \(b_{\mathrm{eff}} = \nicefrac{(n-1) b}{n - b}\) (\cref{thm:variancewithoutreplacement}).

\vspace{-0.5ex}
\subsection{Doubly Stochastic Gradients}\label{section:dsgd}
\vspace{-0.5ex}
For problems where the components are defined as intractable expectations as in \cref{eq:expectation_subcomponent}, we have to rely on an additional Monte Carlo approximation step such as
{%
\setlength{\abovedisplayskip}{1ex} \setlength{\abovedisplayshortskip}{1ex}
\setlength{\belowdisplayskip}{1ex} \setlength{\belowdisplayshortskip}{1ex}
\begin{alignat}{2}
  \nabla F\left(\vx\right) 
  &= \frac{1}{n} \sum^n_{i=1} \nabla f_i\left(\vx\right) 
  = \Esub{\rvB \sim \pi}{ \frac{1}{b} \sum_{i \in \rvB} \Esub{\rvveta \sim \varphi}{ \nabla f_i\left(\vx; \rvveta\right) } }
  \nonumber
  \\
  &= \Esub{\rvB \sim \pi,\;\; \rvveta_{j} \sim \varphi}{
    \frac{1}{m b} \sum_{i \in \rvB} \sum_{j=1}^m \nabla f_i\left(\vx; \rvveta_{j}\right)
  },
  \label{eq:doubly_stochastic_full}
\end{alignat}
}%
where \(\rvveta_{j} \sim \varphi\) are \(m\) independently and identically distributed (\textit{i.i.d.}) Monte Carlo samples from \(\varphi\).

\vspace{-1.5ex}
\paragraph{Doubly Stochastic Gradient}
Consider an unbiased estimator of the component gradient \(\nabla f_i\) such that
{%
\setlength{\abovedisplayskip}{1ex} \setlength{\abovedisplayshortskip}{1ex}
\setlength{\belowdisplayskip}{1ex} \setlength{\belowdisplayshortskip}{1ex}
\begin{equation}
  \mathbb{E} \rvvg_{i}\left(\vx\right) 
  = 
  \mathbb{E}_{\rvveta \sim \varphi} \vg_i\left(\vx; \rvveta\right) 
  = \nabla f_i\left(\vx\right),
  \label{eq:monte_carlo_gradient}
\end{equation}
}%
where \(\vg_i\left(\vx; \rvveta\right)\) is the measurable integrand.
Using these, we can estimate \(\nabla F\) through the \textit{doubly stochastic} gradient estimator
{
\setlength{\abovedisplayskip}{1ex} \setlength{\abovedisplayshortskip}{1ex}
\setlength{\belowdisplayskip}{1ex} \setlength{\belowdisplayshortskip}{1ex}
\begin{equation}
  \rvvg_{\rvB}\left(\vx\right) 
  \triangleq
  \frac{1}{b} \sum_{i \in \rvB} \rvvg_{i}\left(\vx\right), 
  \label{eq:doubly_stochastic_gradient}
\end{equation}
}%
We separately define the integrand \(\vg\left(\vx; \veta\right)\) since, in practice, a variety of unbiased estimators of \(\nabla f_i\) can be obtained by appropriately defining the integrand \(\vg_i\).
For example, one can form the \(m\)-sample ``naive'' Monte Carlo estimator by setting
{%
\setlength{\abovedisplayskip}{.5ex} \setlength{\abovedisplayshortskip}{.5ex}
\setlength{\belowdisplayskip}{.5ex} \setlength{\belowdisplayshortskip}{.5ex}
\[
  \vg_i\left(\vx; \rvveta\right)
  =
  {\textstyle\frac{1}{m} \sum^m_{j=1}} \nabla f_i\left(\vx; \rvveta_{j}\right),
\]
}%
where \(\rvveta = [\rvveta_1, \ldots, \rvveta_m] \sim \varphi^{\otimes m}\). 

\vspace{-1.5ex}
\paragraph{Dependent Component Gradient Estimators.}
Notice that, in \cref{eq:doubly_stochastic_full}, the subcomponents in the batch share the Monte Carlo samples, which may occur in practice.
This means \(\rvvg_i\) and \(\rvvg_j\) in the same batch are dependent and, in the worst case, positively correlated, which complicates the analysis.
While it is possible to make the estimators independent by sampling \(m\) unique Monte Carlo samples for each component (\(m b\) Monte Carlo samples in total) as highlighted by~\citet{kingma_variational_2015}, it is common to use dependent estimators for various practical reasons:
\begin{enumerate}
    \vspace{-2ex}
    \setlength\itemsep{0ex}
    % \item \textbf{Diffusion models}~\citep{ho_denoising_2020,song_generative_2019}: Typically, the estimators in the same batch \({\left(\rvvg_i\right)}_{i \in \rvB}\) are valuated on the same ``temperature.''
    % This corresponds to sharing part of \(\rvveta\) across the batch.
    
    \item \textbf{ERM with Randomized Smoothing}: 
    In the ERM context, recent works have studied the generalization benefits of randomly perturbing the model weights before computing the gradient~\citep{orvieto_explicit_2023,liu_noisy_2021a}.
    When subsampling is used, perturbing the weights independently for each datapoint is computationally inefficient.
    Therefore, the perturbation is shared across the batch, creating dependence.
    
    \item \textbf{Black-Box Variational inference}~\citep{titsias_doubly_2014,kucukelbir_automatic_2017}: Here, each component can be decomposed as 
{%
\setlength{\abovedisplayskip}{1ex} \setlength{\abovedisplayshortskip}{1ex}
\setlength{\belowdisplayskip}{1ex} \setlength{\belowdisplayshortskip}{1ex}
    \[
      f_i\left(\vx; \rvveta\right) = \ell_i\left(\vx; \rvveta\right) + r\left(\vx; \rvveta\right),
    \]
}%
    where \(\ell_i\) is the log likelihood and \(r\) is the log-density of the prior.
    By sharing \({(\rvveta_j)}_{j=1}^m\), \(r\) only needs to be evaluated \(m\) times.
    To create independent estimators, it needs to be evaluated \(m b\) times instead, but \(r\) can be expensive to compute.
    
    \item \textbf{Random feature kernel regression} with doubly SGD~\citep{dai_scalable_2014}: The features are shared across the batch \footnote{See the implementation at \url{https://github.com/zixu1986/Doubly_Stochastic_Gradients}}.
    This reduces the peak memory requirement from \(b m d_{\veta}\), where \(d_{\veta}\) is the size of the random features, to \(m d_{\veta}\).
    \vspace{-2ex}
\end{enumerate}
One of the analysis goals of this work is to characterize the effect of dependence in the context of SGD. 

% \vspace{-1ex}
\subsection{Technical Assumptions on Gradient Estimators}\label{section:conditions}
% \vspace{-.5ex}
To establish convergence of SGD, contemporary analyses use the ``variance transfer'' strategy~\citep{moulines_nonasymptotic_2011,johnson_accelerating_2013,nguyen_sgd_2018,gower_sgd_2019,gower_stochastic_2021}.
That is, by assuming the gradient noise satisfies some condition resembling smoothness, it is possible to bound the gradient noise on some arbitrary point \(\vx\) by the gradient variance on the solution set \(\vx_* \in \argmin_{\vx \in \mathcal{X}} F\left(\vx\right)\).

% \vspace{-1.5ex}
\paragraph{ER Condition.}
In this work, we will use the \textit{expected residual} (ER) condition by \citet{gower_sgd_2021}:
\vspace{-1ex}
\begin{definition}[\textbf{Expected Residual; ER}]\label{assumption:expected_residual}
A gradient estimator \(\rvvg\) of \(F : \mathcal{X} \to \mathbb{R}\) is said to satisfy \(\mathrm{ER}\left(\mathcal{L}\right)\) if  
{%
\setlength{\abovedisplayskip}{1ex} \setlength{\abovedisplayshortskip}{1ex}
\setlength{\belowdisplayskip}{1ex} \setlength{\belowdisplayshortskip}{1ex}
\[
  \mathrm{tr}\V{ 
    \rvvg\left(\vx\right) - \rvvg\left(\vx_*\right) 
  }
  \leq
  2 \mathcal{L} \left( F(\vx) - F(\vx_*) \right),
\]
}%
for some \(0 < \mathcal{L} < \infty\) and all \(\vx\in \mathcal{X}\) and all \(\vx_* \in \argmin_{\vx \in \mathcal{X}} F\left(\vx\right)\).
\end{definition}
\vspace{-1ex}
When \(f\) is convex, a weaker form can be used:
% \begin{definition}[\textbf{Convex Expected Residual; CER}]\label{assumption:convex_expected_residual}
% An unbiased gradient estimator \(\rvvg\) of some convex function \(f : \mathcal{X} \to \mathbb{R}\) is said to satisfy \(\mathrm{CER}\left(\mathcal{L}, f, F\right)\) if  
% {%
% \setlength{\abovedisplayskip}{1ex} \setlength{\abovedisplayshortskip}{1ex}
% \setlength{\belowdisplayskip}{1ex} \setlength{\belowdisplayshortskip}{1ex}
% \[
%   \mathrm{tr}\V{ 
%     \rvvg\left(\vx\right) - \rvvg\left(\vx_*\right) 
%   }
%   \leq
%   2 \mathcal{L} \mathrm{D}_{f}\left(\vx, \vx_*\right),
% \]
% }%
% for some \(0 < \mathcal{L} < \infty\) and all \(\vx\in \mathcal{X}\) and all \(\vx_* \in \argmin_{\vx \in \mathcal{X}} F\left(\vx\right)\).
% \end{definition}
We will also consider the \textit{convex} variant of the ER condition that uses the Bregman divergence defined as
{%
\setlength{\abovedisplayskip}{1.ex} \setlength{\abovedisplayshortskip}{1.ex}
\setlength{\belowdisplayskip}{1.ex} \setlength{\belowdisplayshortskip}{1.ex}
\[
{\textstyle
\mathrm{D}_{\phi}\left(\vy, \vx\right)
\triangleq 
\phi\left(\vy\right) - \phi\left(\vx\right) - \inner{\nabla \phi\left(\vx\right)}{\vy - \vx},
}%
\]
}%
\(\forall(\vx, \vy) \in \mathcal{X}^2\), where \(\phi : \mathcal{X} \to \mathbb{R}\) is a convex function.

 \vspace{-1.ex}
\paragraph{Why the ER condition?}
A way to think about the ER condition is that it corresponds to the ``variance form'' equivalent of the expected smoothness (ES) condition by \citet{gower_stochastic_2021} defined as
{%
\setlength{\abovedisplayskip}{1.5ex} \setlength{\abovedisplayshortskip}{1.5ex}
\setlength{\belowdisplayskip}{1.5ex} \setlength{\belowdisplayshortskip}{1.5ex}
\begin{align*}
  &\mathbb{E}\norm{\rvvg\left(\vx\right) - \rvvg\left(\vx_*\right) }_2^2 \leq 2 \mathcal{L} \left( F\left(\vx\right) - F\left(\vx_*\right) \right),
  &\quad
  \text{(ES)}
\end{align*}
}%
but is slightly weaker, as shown by \citet{gower_sgd_2021}.
The main advantage of the ER condition is that, due to the properties of the variance, it composes more easily:
\vspace{-1.5ex}
\begin{proposition}
    Let \(\rvvg\) satisfy \(\mathrm{ER}\left(\mathcal{L}\right)\). Then, the \(m\)-sample i.i.d. average of \(\rvvg\) satisfy \(\mathrm{ER}\left(\nicefrac{\mathcal{L}}{m}\right)\).
\end{proposition}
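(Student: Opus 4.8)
The plan is to reduce the statement to the elementary fact that the trace-variance of an i.i.d.\ average scales as \(1/m\), the only real point being to identify \emph{which} random vectors are i.i.d.\ across the Monte Carlo samples. First I would spell out the object in question: writing \(\rvvg\left(\vx\right) = \vg\left(\vx; \rvveta\right)\) with \(\rvveta \sim \varphi\), the \(m\)-sample i.i.d.\ average is
\[
  \bar{\rvvg}\left(\vx\right) \triangleq \frac{1}{m} \sum_{j=1}^m \vg\left(\vx; \rvveta_j\right),
  \qquad \rvveta_1, \ldots, \rvveta_m \overset{\text{i.i.d.}}{\sim} \varphi .
\]
By linearity of expectation \(\mathbb{E}\bar{\rvvg}\left(\vx\right) = \nabla F\left(\vx\right)\), so \(\bar{\rvvg}\) is again an unbiased estimator of \(\nabla F\), and it makes sense to ask whether it satisfies the ER condition.

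Next comes the key step. Fix \(\vx \in \mathcal{X}\) and \(\vx_* \in \argmin_{\vx \in \mathcal{X}} F\left(\vx\right)\), and write the relevant increment as
\[
  \bar{\rvvg}\left(\vx\right) - \bar{\rvvg}\left(\vx_*\right)
  = \frac{1}{m} \sum_{j=1}^m \big( \vg\left(\vx; \rvveta_j\right) - \vg\left(\vx_*; \rvveta_j\right) \big).
\]
The crucial observation is that within the \(j\)-th term the \emph{same} draw \(\rvveta_j\) appears at both \(\vx\) and \(\vx_*\); hence the summands \(\vg\left(\vx; \rvveta_j\right) - \vg\left(\vx_*; \rvveta_j\right)\) are i.i.d.\ across \(j\) (each is a measurable function of \(\rvveta_j\) alone, and the \(\rvveta_j\) are i.i.d.), and each is distributed exactly as \(\rvvg\left(\vx\right) - \rvvg\left(\vx_*\right)\). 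Since covariance matrices of independent random vectors add, and rescaling by \(1/m\) multiplies the covariance by \(1/m^2\),
\[
  \mathrm{tr}\V{ \bar{\rvvg}\left(\vx\right) - \bar{\rvvg}\left(\vx_*\right) }
  = \frac{1}{m^2} \sum_{j=1}^m \mathrm{tr}\V{ \vg\left(\vx; \rvveta_j\right) - \vg\left(\vx_*; \rvveta_j\right) }
  = \frac{1}{m} \, \mathrm{tr}\V{ \rvvg\left(\vx\right) - \rvvg\left(\vx_*\right) }.
\]

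Finally, applying \(\mathrm{ER}\left(\mathcal{L}\right)\) for \(\rvvg\) bounds the right-hand side by \(\frac{1}{m} \cdot 2\mathcal{L}\left(F\left(\vx\right) - F\left(\vx_*\right)\right) = 2\left(\mathcal{L}/m\right)\left(F\left(\vx\right) - F\left(\vx_*\right)\right)\); since \(\vx\) and \(\vx_*\) were arbitrary, \(\bar{\rvvg}\) satisfies \(\mathrm{ER}\left(\mathcal{L}/m\right)\), as claimed. The only subtlety — and the single place where care is needed — is that it is the \emph{increments} \(\rvvg^{(j)}\left(\vx\right) - \rvvg^{(j)}\left(\vx_*\right)\), not the estimators evaluated at a single point, that must be recognized as i.i.d.; this is precisely what kills the cross terms in the variance and delivers the \(1/m\) factor rather than something weaker. (Note this is also exactly why the ER/variance formulation composes so cleanly here, in contrast to the norm-squared ES formulation.) Everything else is the textbook additivity of variance for independent summands.
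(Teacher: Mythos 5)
Your proof is correct and is exactly the argument the paper intends (the paper states this proposition without an explicit proof, as an immediate consequence of the additivity of variance for independent summands): the increments \(\vg\left(\vx;\rvveta_j\right)-\vg\left(\vx_*;\rvveta_j\right)\) are i.i.d.\ copies of \(\rvvg\left(\vx\right)-\rvvg\left(\vx_*\right)\), so the trace-variance scales by \(1/m\) and the ER bound follows. Your remark correctly identifies the one subtlety — that the same draw \(\rvveta_j\) is used at both \(\vx\) and \(\vx_*\), so it is the increments (not the pointwise evaluations) that must be recognized as i.i.d. — and also why this works for the variance (ER) form but not the norm-squared (ES) form.
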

\vspace{-1.5ex}

\paragraph{BV Condition.}
From the ER property, the gradient variance on any point \(\vx \in \mathcal{X}\) can be bounded by the variance on the solution set as long as the following holds:
\begin{definition}[\textbf{Bounded Gradient Variance}]\label{assumption:bounded_variance}
A gradient estimator \(\rvvg\) of \(F : \mathcal{X} \to \mathbb{R}\) satisfies \(\mathrm{BV}\left(\sigma^2\right)\) if
{%
\setlength{\abovedisplayskip}{1ex} \setlength{\abovedisplayshortskip}{1ex}
\setlength{\belowdisplayskip}{1ex} \setlength{\belowdisplayshortskip}{1ex}
\[
  \mathrm{tr} \V{
    \rvvg\left(\vx_*\right) 
  }
  \leq
  \sigma^2
\]
}%
for some \(\sigma^2 < \infty\) and all \(\vx_* \in \argmax_{\vx \in \mathcal{X}} F\left(\vx\right)\).
\end{definition}
%
% (See \cref{thm:expected_residual_gradient_variance_bound} in \cref{section:proof_sgd_convergence} for an example of how ER and BV are used in combination.)

\subsection{Convergence Guarantees for SGD}\label{section:sgd_convergence}
\paragraph{Sufficiency of ER and BV.}
From the ER and BV conditions, other popular conditions such as ES~\citep{gower_stochastic_2021} and ABC~\citep{khaled_better_2023} can be established with minimal additional assumptions.
As a result, we retrieve the previous convergence results on SGD established for various objective function classes:
\begin{itemize}
\vspace{-2ex}
    \setlength\itemsep{0ex}
    \item[\ding{228}] strongly convex~\citep{gower_sgd_2019},
    \item[\ding{228}] quasar convex (+PL)~\citep{gower_sgd_2021}, 
    \item[\ding{228}] smooth (+PL)~\citep{khaled_better_2023}.
\vspace{-2ex}
\end{itemize}
(Note: quasar convexity is strictly weaker than convexity~\citealp{guminov_accelerated_2023}; PL: Polyak-{\L}ojasiewicz.)
(See also the comprehensive treatment by \citealp{garrigos_handbook_2023}.)
Therefore, ER and BV are sufficient conditions for SGD to converge on problem classes typically considered in SGD convergence analysis.

In this work, we will specifically focus on smooth and strongly convex objectives:
\begin{assumption}\label{assumption:objective}
    There exists some \(\mu, L\) satisfying \(0 < \mu \leq L < \infty\) suc that the objective function \(F : \mathcal{X} \to \mathbb{R}\) is \(\mu\)-strongly convex and \(L\)-smooth as
   {%
\setlength{\abovedisplayskip}{0.5ex} \setlength{\abovedisplayshortskip}{0.5ex}
\setlength{\belowdisplayskip}{0.5ex} \setlength{\belowdisplayshortskip}{0.5ex}
   \begin{align*}
      F\left(\vy\right) - F\left(\vx\right) &\geq  \inner{\nabla F\left(\vx\right)}{\vy - \vx} + \frac{\mu}{2} \norm{\vx - \vy}_2^2 
      \\
      F\left(\vy\right) - F\left(\vx\right)  &\leq  \inner{\nabla F\left(\vx\right)}{\vy - \vx}  + \frac{L}{2} \norm{\vx - \vy}_2^2
   \end{align*}
   }%
   hold for all \((\vx, \vy) \in \mathcal{X}^2\).
\end{assumption}
Also, we will occasionally assume that \(F\) is comprised of a finite sum of convex and smooth components:
\begin{assumption}\label{assumption:components}
   The objective function \(F : \mathcal{X} \to \mathbb{R}\) is a finite sum as \(F = \frac{1}{n} \left(f_1 + \ldots + f_n\right)\), where each component is \(L_i\)-smooth and convex such that
   {%
\setlength{\abovedisplayskip}{1ex} \setlength{\abovedisplayshortskip}{1ex}
\setlength{\belowdisplayskip}{1ex} \setlength{\belowdisplayshortskip}{1ex}
   \[
      \norm{\nabla f_i\left(\vx\right) - \nabla f_i\left(\vy\right)}_2^2 \leq 2 L_i 
 \, \mathrm{D}_{f_i}\left(\vx, \vy\right)
   \]
   }%
   holds for all \((\vx, \vy) \in \mathcal{X}^2\).
\end{assumption}
Note that \cref{assumption:components} alone already implies that \(F\) is convex and \(L_{\mathrm{max}}\)-smooth with \(L_{\mathrm{max}} = \max\left\{L_1, \ldots, L_n\right\}\).

\vspace{-1.5ex}
\paragraph{Why focus on strongly convex functions?}
We focus on strongly convex objectives as the effect of stochasticity is the most detrimental: in the deterministic setting, one only needs \(\mathcal{O}\left(\log\left(\nicefrac{1}{\epsilon}\right)\right)\) iterations to achieve an \(\epsilon\)-accurate solution.
But with SGD, one actually needs \(\mathcal{O}\left(\nicefrac{1}{\epsilon}\right)\) iterations due to noise.
As such, we can observe a clear contrast between the effect of optimization and noise in this setting.

With that said, for completeness, we provide full proof of convergence on strongly convex-smooth objectives:

\begin{theoremEnd}[all end, category=expectedresidualgradientvariancebound]{lemma}\label{thm:expected_residual_gradient_variance_bound}
  Let \(F:\mathcal{X} \to \mathbb{R}\) be \(L\)-smooth function.
  Then, the expected squared norm of a gradient estimator \(\rvvg\) satisfying both \(\mathrm{ER}\left(\mathcal{L}\right)\) and \(\mathrm{BV}\left(\sigma^2\right)\) is bounded as
  \[
     \mathbb{E} \norm{\rvvg\left(\vx\right)}_2^2
     \leq
     4 \left(\mathcal{L} + L\right) \left(F\left(\vx\right) - F\left(\vx_*\right)\right) + 2 \sigma^2,
  \]
  for any \(\vx \in \mathcal{X}\) and \(\vx_* \in \argmax_{\vx \in \mathcal{X}} F\left(\vx\right)\).
\end{theoremEnd}
\vspace{-1.5ex}
\begin{proofEnd}
    The proof is a minor modification of Lemma 2.4 by \citet{gower_sgd_2019} and Lemma 3.2 by \citet{gower_sgd_2021}.

    By applying the bound \({(a + b)}^2 \leq 2 a^2 + 2 b^2\),  we can ``transfer'' the variance on \(\vx\) to the variance of \(\vx_*\).
    That is,
    \begin{align*}
      \mathbb{E} \norm{\rvvg\left(\vx\right)}_2^2
      &=
      \mathbb{E} \norm{\rvvg\left(\vx\right) - \rvvg\left(\vx_*\right) + \rvvg\left(\vx_*\right)}_2^2
      \\
      &\;\leq
      2 \, 
      \underbrace{
      \mathbb{E} \norm{\rvvg\left(\vx\right) - \rvvg\left(\vx_*\right)}_2^2
      }_{V_1}
      + 
      2 \, \underbrace{ \mathbb{E} \norm{\rvvg\left(\vx_*\right)}_2^2  }_{V_2}
    \end{align*}

    The key is to bound \(V_1\).
    It is typical to do this using expected-smoothness-type assumptions such as the ER assumption.
    That is,
    \begin{align*}
      V_1
      &=
      \mathbb{E} \norm{\rvvg\left(\vx\right) - \rvvg\left(\vx_*\right)}_2^2
      \\
      &=
      \mathrm{tr}\V{ \rvvg\left(\vx\right) - \rvvg\left(\vx_*\right)}
      +
      \left( \nabla F \left(\vx\right) - \nabla F\left(\vx_*\right) \right),
\shortintertext{from the \(L\)-smoothness of \(F\),}
      &\leq
      \mathrm{tr}\V{ \rvvg\left(\vx\right) - \rvvg\left(\vx_*\right)}
      +
      2 L \left(F\left(\vx\right) - F\left(\vx_*\right)\right),
\shortintertext{and the ER condition,}
      &\leq
      2 \mathcal{L} \left( F\left(\vx\right) - F\left(\vx_*\right) \right)
      +
      2 L \left(F\left(\vx\right) - F\left(\vx_*\right)\right)
      \\
      &=
      2 \left(L + \mathcal{L}\right)
      \big( 
        F\left(\vx\right) - F\left(\vx_*\right)
      \big).
    \end{align*}
    Finally, \(V_2\) immediately follows from the BV condition as
    \[
      V_2 = \mathbb{E} \norm{\rvvg\left(\vx_*\right)}_2^2 \leq \sigma^2.
    \]
\end{proofEnd}

\begin{theoremEnd}[all end, category=stronglyconvexsgdconvergence]{lemma}\label{thm:strongly_convex_sgd_convergence}
Let the objective function \(F\) satisfy \cref{assumption:objective} and the gradient estimator \(\rvvg\) be unbiased and satisfy both \(\mathrm{ER}\left(\mathcal{L}\right)\) and \(\mathrm{BV}\left(\sigma^2\right)\).
Then, the last iterate of SGD guarantees
{%
\setlength{\abovedisplayskip}{1ex} \setlength{\abovedisplayshortskip}{1ex}
\setlength{\belowdisplayskip}{1ex} \setlength{\belowdisplayshortskip}{1ex}
\[
  \E{ 
    \norm{\vx_T - \vx^*}_2^2
  }
  \leq
  {\left(1 - \mu \gamma\right)}^{T} \norm{\vx_0 - \vx_*}_2^2
  +
  \frac{2 \sigma^2}{\mu} \gamma
\]
}%
where \(\vx_* = \argmin_{\vx \in \mathcal{X}} F\left(\vx\right)\) is the global optimum.
\end{theoremEnd}
\begin{proofEnd}
    Firstly, we have
    \begin{align*}
        &\norm{\vx_{t+1} - \vx_*}_2^2
        \\
        &\;=
        \norm{\Pi_{\mathcal{X}}\left( \vx_{t} - \gamma \rvvg\left(\vx_t\right) \right) - \Pi\left(\vx_*\right)}_2^2,
\shortintertext{and since the projection onto a convex set under a Euclidean metric is non-expansive,}
        &\;\leq
        \norm{\vx_{t} - \gamma \rvvg\left(\vx_t\right) - \vx_*}_2^2
        \\
        &\;=
        \norm{\vx_{t} - \vx_*}_2^2
        - 2 \gamma \inner{\rvvg\left(\vx_t\right) }{\vx_{t} - \vx_*}
        + \gamma^2 \norm{\rvvg\left(\vx_t\right)}_2^2.
    \end{align*}
    Denoting the \(\sigma\)-algebra formed by the randomness and the iterates up to the \(t\)th iteration as \(\mathcal{F}_t\) such that \({(\mathcal{F}_t)}_{t \geq 1}\) forms a filtration, the conditional expectation is 
    \begin{align*}
        &\E{ \norm{\vx_{t+1} - \vx_*}_2^2 \mid \mathcal{F}_t }
        \\
        &\;=
        \norm{\vx_{t} - \vx_*}_2^2
        - 2 \gamma \inner{\E{ \rvvg\left(\vx_t\right) \mid \mathcal{F}_t }}{\vx_{t} - \vx_*}
        \\
        &\qquad+ \gamma^2 \E{\norm{\rvvg\left(\vx_t\right)}_2^2 \mid \mathcal{F}_t }.
        \\
        &\;=
        \norm{\vx_{t} - \vx_*}_2^2
        - 2 \gamma \inner{\nabla F \left(\vx_t\right)}{\vx_{t} - \vx_*}
        \\
        &\qquad+ \gamma^2 \E{\norm{\rvvg\left(\vx_t\right)}_2^2 \mid \mathcal{F}_t },
\shortintertext{applying the \(\mu\)-strong convexity of \(F\),}
        &\;\leq
        \norm{\vx_{t} - \vx_*}_2^2
        - 
        2 \gamma \left(F\left(\vx_t\right) - F\left(\vx_*\right) + \frac{\mu}{2} \norm{\vx_t - \vx_*}_2^2\right)
        \\
        &\qquad+ \gamma^2 \E{\norm{\rvvg\left(\vx_t\right)}_2^2 \mid \mathcal{F}_t }
        \\
        &\;=
        \left(1 - \gamma \mu\right) \norm{\vx_{t} - \vx_*}_2^2
        - 
        2 \gamma \left(F\left(\vx_t\right) - F\left(\vx_*\right)\right)
        \\
        &\qquad+ \gamma^2 \E{\norm{\rvvg\left(\vx_t\right)}_2^2 \mid \mathcal{F}_t }
    \end{align*}

    From \cref{thm:expected_residual_gradient_variance_bound}, we have
    \begin{align*}
        \E{\norm{\rvvg\left(\vx_t\right)}_2^2 \mid \mathcal{F}_t }
        \leq
        \left(4 \left(\mathcal{L}+L\right) \left(F\left(\vx_t\right) - F\left(\vx_*\right) \right) + 2 \sigma^2 \right).
    \end{align*}
    Therefore, 
    \begin{align*}
        &\E{ \norm{\vx_{t+1} - \vx_*}_2^2 \mid \mathcal{F}_t }
        \\
        &\leq
        \left(1 - \gamma \mu\right) \norm{\vx_{t} - \vx_*}_2^2
        - 
        2 \gamma \left(F\left(\vx_t\right) - F\left(\vx_*\right)\right)
        \\
        &\quad+ \gamma^2 \left(4 \left(\mathcal{L}+L\right) \left(F\left(\vx_t\right) - F\left(\vx_*\right) \right) + 2 \sigma^2 \right)
        \\
        &=
        \left(1 - \gamma \mu\right) \norm{\vx_{t} - \vx_*}_2^2
        \\
        &\quad- 
        2 \gamma \left(1 - 2 \gamma \left(\mathcal{L} + L\right)\right) \left(F\left(\vx_t\right) - F\left(\vx_*\right)\right)
        + 2 \gamma^2 \sigma^2,
\shortintertext{and with a small-enough stepsize satisfying \(\gamma < \frac{1}{2 \left(\mathcal{L} + L\right)}\), we can guarantee a partial contraction as}
        &\leq
        \left(1 - \gamma \mu\right) \norm{\vx_{t} - \vx_*}_2^2 + 2 \gamma^2 \sigma^2.
    \end{align*}
    Note that the coefficient \(1 - \gamma \mu\) is guaranteed to be strictly smaller than 1 since \(\mu \leq L\), which means that we indeed have a partial contraction.
    
    Now, taking full expectation, we have
    \[
      \mathbb{E}\norm{\vx_{t+1} - \vx_*}_2^2
      \leq
      \left(1 - \gamma \mu\right) \mathbb{E}\norm{\vx_{t} - \vx_*}_2^2 + 2 \gamma^2 \sigma^2.
    \]
    Unrolling the recursion from \(0\) to \(T-1\), we have
    \begin{align*}
      \mathbb{E}\norm{\vx_{T} - \vx_*}_2^2
      &\leq
      {\left(1 - \gamma \mu\right)}^T \mathbb{E}\norm{\vx_{0} - \vx_*}_2^2 
      \\
      &\qquad+ 2 \gamma^2 \sigma^2 \sum^{T-1}_{t=0} {\left(1 - \gamma \mu\right)}^{t}.
      \\
      &\leq
      {\left(1 - \gamma \mu\right)}^T \mathbb{E}\norm{\vx_{0} - \vx_*}_2^2 
      + \frac{2 \sigma^2}{\mu} \gamma.
    \end{align*}
    where the last inequality follows from the asymptotic bound on geometric sums.
\end{proofEnd}

%We have a corresponding iteration complexity guarantee:

\begin{theoremEnd}[category=stronglyconvexsgdcomplexity]{lemma}\label{thm:strongly_convex_sgd_complexity}
Let the objective \(F\) satisfy \cref{assumption:objective} and the gradient estimator \(\rvvg\) satisfy \(\mathrm{ER}\left(\mathcal{L}\right)\) and \(\mathrm{BV}\left(\sigma^2\right)\).
Then, the last iterate of SGD is \(\epsilon\)-close to the global optimum \(\vx_* = \argmin_{\vx \in \mathcal{X}} F\left(\vx\right)\) such that \(\mathbb{E}\norm{\vx_T - \vx_*}_2^2 \leq \epsilon\) if
{%
\setlength{\abovedisplayskip}{.5ex} \setlength{\abovedisplayshortskip}{.5ex}
\setlength{\belowdisplayskip}{.5ex} \setlength{\belowdisplayshortskip}{.5ex}
\[
    T \geq 2 \max\left(\frac{\sigma^2}{\mu^2} \frac{1}{\epsilon}, \frac{\mathcal{L} + L}{\mu}\right) \log\left(2 \norm{\vx_0 - \vx_*}_2^2 \frac{1}{\epsilon} \right)
\]
}%
and the fixed stepsize
\[
    \gamma = \min\left( \frac{\epsilon \mu}{2 \sigma^2}, \frac{1}{2 \left( \mathcal{L} + L \right)} \right).
\]
\end{theoremEnd}
\vspace{-1.5ex}
\begin{proofEnd}
We can apply \cref{thm:geometric_complexity} to the result of \cref{thm:strongly_convex_sgd_convergence} with the constants
\[
  r_0 = \norm{\vx_0 - \vx_*}_2^2,
  \quad
  B = \frac{2 \sigma^2}{\mu},
  \;\text{and}\;\;
  C = 2 \left( \mathcal{L} + L \right).
\]
Then, we can guarantee an \(\epsilon\)-accurate solution with the stepsize 
\[
  \gamma = \min\left( \frac{\epsilon \mu}{2 \sigma^2}, \frac{1}{2 \left( \mathcal{L} + L \right)} \right)
\]
and a number of iterations of at least
\begin{align*}
  T 
  &\geq 
  \frac{1}{\mu} \max\left( \frac{2 \sigma^2}{\mu}, 2 \left(\mathcal{L} + L\right) \right) \log\left(2 \norm{\vx_0 - \vx_*}_2^2 \frac{1}{\epsilon} \right)
  \\
  &=
  2 \max\left( \frac{\sigma^2}{\mu^2}, \frac{\mathcal{L} + L}{\mu} \right) \log\left(2 \norm{\vx_0 - \vx_*}_2^2 \frac{1}{\epsilon} \right).
\end{align*}

\end{proofEnd}

Note that our complexity guarantee is only \(\mathcal{O}(\nicefrac{1}{\epsilon} \log\left(\nicefrac{1}{\epsilon}\right))\) due to the use of a fixed stepsize.
It is also possible to establish a \(\mathcal{O}(\nicefrac{1}{\epsilon})\) guarantee using decreasing stepsize schedules proposed by \citet{gower_sgd_2019,stich_unified_2019}.
In practice, these schedules are rarely used, and the resulting complexity guarantees are less clear than with fixed stepsizes.
Therefore, we will stay on fixed stepsizes.

\section{Main Results}

\subsection{Doubly Stochastic Gradients}
\begin{wraptable}{r}{0.35\columnwidth}
%\begin{table}[h]
    \setlength{\tabcolsep}{.3em}
    \vspace{-14ex}
    \centering
    \caption{Rosetta Stone}\label{table:rosetta}
    \vspace{1ex}
    \hspace{-1.5em}
    \begin{tabular}{ccc}
      \S 3.1.1 & & \S 3.1.2 \\ \midrule
      \(\rvvx_i \) & \(\leftrightarrow\) & \(\rvvg_i\) \\
      \(\rvvx_{\rvB}\) & \(\leftrightarrow\) & \(\rvvg_{\rvB}\) \\
      \(\bar{\vx}_i\) & \(\leftrightarrow\) & \(\nabla f_i\) \\
      \(\bar{\vx}\) & \(\leftrightarrow\) & \(\nabla F\) \\
    \end{tabular}
    %\label{tab:my_label}
    \vspace{-3ex}
%\end{table}
\end{wraptable}
First, while taming notational complexity, we will prove a general result that holds for combining unbiased but potentially correlated estimators through subsampling.
All of the later results on SGD will fall out as special cases following the correspondence in \cref{table:rosetta}.

\subsubsection{General Variance Bound}
\vspace{-.5ex}
\paragraph{Theoretical Setup.}
Consider the problem of estimating the population mean \(\bar{\vx} = \frac{1}{n} \sum_{i=1}^n \bar{\vx}_i\) with a collection of RVs \(\rvvx_{1}, \ldots, \rvvx_{n}\), each an unbiased estimator of the component \(\bar{\vx}_i = \mathbb{E}\rvvx_{i}\).
Then, any subsampled ensemble
{
\setlength{\abovedisplayskip}{0.5ex} \setlength{\abovedisplayshortskip}{0.5ex}
\setlength{\belowdisplayskip}{0.5ex} \setlength{\belowdisplayshortskip}{0.5ex}
\[
    \rvvx_{\rvB} \triangleq \frac{1}{b} \sum_{i \in \rvB} \rvvx_i
    \quad\text{with}\quad
    \rvB \sim \pi,
\]
}%
where \(\pi\) is an unbiased subsampling strategy with an effective sample size of \(b_{\mathrm{eff}}\), is also an unbiased estimator of \(\bar{\vx}\).
The goal is to analyze how the variance of the component estimators \(\mathrm{tr}\mathbb{V}\rvvx_i\) for \(i = 1, \ldots, n\) and the variance of \(\pi\) affect the variance of \(\rvvx_{\rvB}\).
The following condition characterizes the correlation between the component estimators:
\vspace{-2ex}
\begin{assumption}
\label{assumption:correlation}
    The component estimators \(\rvvx_1, \ldots, \rvvx_n\) have finite variance \(\mathrm{tr}\mathbb{V}\rvvx_i < \infty\) for all \(i = 1, \ldots, n\) and, there exists some \(\rho \in [0, 1]\) for all \(i \neq j\) such that
    {%
    \setlength{\abovedisplayskip}{1.ex} \setlength{\abovedisplayshortskip}{1.ex}
    \setlength{\belowdisplayskip}{1.ex} \setlength{\belowdisplayshortskip}{1.ex}
    \[
        \mathrm{tr}\,
        \Cov{\rvvx_i, \rvvx_j}
        \leq
        \rho \, \sqrt{\mathrm{tr}\mathbb{V}\rvvx_i} \sqrt{\mathrm{tr}\mathbb{V}\rvvx_j}.
    \]
    }%
\end{assumption}
\begin{remark}\label{remark:rho1}
    \cref{assumption:correlation} always holds with \(\rho = 1\) as a basic consequence of the Cauchy-Schwarz inequality.
\end{remark}
\vspace{.3ex}
\begin{remark}\label{remark:independence_rho0}
    For a collection of mutually independent estimators \(\rvvx_1, \ldots, \rvvx_n\) such that \(\rvvx_i \indep \rvvx_j\) for all \(i \neq j\), \cref{assumption:correlation} holds with \(\rho = 0\).
\end{remark}
\vspace{.3ex}
\begin{remark}
    The equality in \cref{assumption:correlation} holds with \(\rho = 0\) for independent estimators, while it holds with \(\rho = 1\) when they are perfectly positively correlated such that, for all \(i \neq j\), there exists some constant \(\alpha_{ij} \geq 0\) such that \(\rvvx_i = \alpha_{i,j} \rvvx_j\)
\end{remark}
\vspace{-1ex}

%Equipped with \cref{thm:expected_variance_lemma}, we now present a general bound on vector-valued doubly stochastic estimators:

\begin{theoremEnd}[all end, category=variancewithoutreplacement]{lemma}\label{thm:variancewithoutreplacement}
Consider a finite population of \(n\) vector-variate random variables \(\vx_1, \ldots, \vx_n\).
Then, the variance of the average of \(b\) samples chosen without replacement is 
\[
   \mathrm{tr}\V{ \frac{1}{b} \sum_{i=1}^b \rvvx_{\rvB_i} }
   =
   \frac{n - b}{b \left(n - 1\right)} \sigma^2,
\]
where \(\rvB = \{\rvB_1, \ldots, \rvB_b\}\) is the collection of random indices of the samples and \(\sigma^2\) is the variance of independently choosing a single sample.
\end{theoremEnd}
\begin{proofEnd}
From the variance of the sum of random variables, we have
\begin{alignat}{2}
    \mathrm{tr}\V{ \sum_{i=1}^b \rvvx_{\rvB_i} }
    &=
    \sum_{i=1}^b \mathrm{tr}\V{ \rvvx_{\rvB_i} }
    +
    \sum_{i=1}^b \sum_{i \neq j}^b \Cov{\rvvx_{\rvB_i}, \rvvx_{\rvB_j}},
    \nonumber
\shortintertext{and noticing that the covariance is independent of the index in the batch,}
    &=
    b \, \mathrm{tr}\V{ \rvvx_{\rvB_i} }
    +
    b (b-1) C,
    \label{eq:sampling_wihout_replacement_eq1}
\end{alignat}
where \(C = \mathrm{Cov}\left(\rvvx_{\rvB_i}, \rvvx_{\rvB_j}\right)\).
Using the fact that the variance is 0 for \(b = n\), we can solve for \(C\) such that
\[
   C = - \frac{1}{n-1}\, \mathrm{tr}\V{ \rvvx_{\rvB_i} },
\]
which is negative, and a negative correlation is always great.
Plugging this expression to \cref{eq:sampling_wihout_replacement_eq1}, we have
\begin{align*}
  \mathrm{tr}\V{ \sum_{i=1}^b \rvvx_{\rvB_i} }
  &=
  b \, \mathrm{tr}\V{ \rvvx_{\rvB_i} }
  -
  b (b-1) \frac{1}{n-1}\, \mathrm{tr}\V{ \rvvx_{\rvB_i} },
  \\
  &=
  b \left(1 - \frac{b-1}{n-1} \right) \mathrm{tr}\V{ \rvvx_{\rvB_i} }
  \\
  &=
  b\left(\frac{n - b}{n - 1}\right) \mathrm{tr}\V{ \rvvx_{\rvB_i} }.
\end{align*}
Dividing both sides by \(b^2\) yields the result.
\end{proofEnd}

\begin{theoremEnd}[all end, category=varianceofsum]{lemma}\label{thm:varianceofsum}
Let \(\rvvx_1, \ldots, \rvvx_n\) be vector-variate random variables.
Then, the variance of the sum is upper-bounded as 
\begin{align}
  \mathrm{tr}\V{ \sum_{i=1}^n \rvvx_i }
  &\leq {\left( {\textstyle\sum^n_{i=1} \sqrt{ \mathrm{tr}\V{ \rvvx_i } } } \right)}^2  
  \label{eq:varianceofsumeq1}
  \\
  &\leq n {\textstyle\sum^n_{i=1} \mathrm{tr}\V{ \rvvx_i }}.
  \label{eq:varianceofsumeq2}
\end{align}
The equality in \cref{eq:varianceofsumeq1} holds if and only if \(\rvvx_i\) and \(\rvvx_j\) are constant multiples such that there exists some \(\alpha_{ij} \geq 0\) such that
{%
\setlength{\abovedisplayskip}{1ex} \setlength{\abovedisplayshortskip}{1ex}
\setlength{\belowdisplayskip}{1ex} \setlength{\belowdisplayshortskip}{1ex}
\[ 
    \rvvx_i
    =
    \alpha_{ij} \rvvx_j
\]
}%
for all \(i, j\).
\end{theoremEnd}
\begin{proofEnd}
The variance of a sum is
\begin{alignat*}{2}
    \mathrm{tr}\V{ \sum_{i=1}^n \rvvx_i }
    =
    \sum_{i=1}^n \sum_{j=1}^n
    \mathrm{tr}\, \Cov{\rvvx_i, \rvvx_j}.
\end{alignat*}
From the Cauchy-Schwarz inequality for expectations,
\begin{align*}
\mathrm{tr}\,
\Cov{\rvvx_i, \rvvx_j}
&=
\mathbb{E}{\left(\rvvx_i - \mathbb{E}\rvvx_i\right)}^{\top} {\left(\rvvx_j - \mathbb{E}\rvvx_j\right)}
\\
&\leq
\mathbb{E}\norm{\rvvx_i - \mathbb{E}\rvvx_i}_2 \mathbb{E}{\lVert\rvvx_j - \mathbb{E}\rvvx_j \rVert}_2
\\
&=
\sqrt{\mathrm{tr}\V{\rvvx_i}} \sqrt{\mathrm{tr}\V{\rvvx_j}}.
\end{align*}
This implies 
\begin{alignat}{2}
    \mathrm{tr}\V{ \sum_{i=1}^n \rvvx_i }
    &=
    \sum_{i=1}^n \sum_{j=1}^n \mathrm{tr}\,\Cov{\rvvx_i, \rvvx_j}
    \nonumber
    \\
    &\leq
    \sum_{i=1}^n \sum_{j=1}^n 
    \sqrt{\mathrm{tr}\V{\rvvx_i}} \sqrt{\mathrm{tr}\V{\rvvx_j}}
    \nonumber
    \\
    &=
    {\left(
      \sum_{i=1}^n  
      \sqrt{\mathrm{tr}\V{\rvvx_i}} 
    \right)}^2.
    \label{eq:sumofvariance_eq3}
\end{alignat}
The equality statement comes from the property of the Cauchy-Schwarz inequality.
Lastly, \cref{eq:varianceofsumeq2} follows from additionally applying Jensen's inequality as
\begin{alignat*}{2}
    {\left(
      \sum_{i=1}^n  
      \sqrt{\mathrm{tr}\V{\rvvx_i}} 
    \right)}^2
    &=
    n^2
    {\left(
      \frac{1}{n}
      \sum_{i=1}^n  
      \sqrt{\mathrm{tr}\V{\rvvx_i}} 
    \right)}^2
    \\
    &\leq
    n^2
    \frac{1}{n}
    \sum_{i=1}^n  
    {\left(
      \sqrt{\mathrm{tr}\V{\rvvx_i}} 
    \right)}^2
    \\
    &=
    n
    \sum_{i=1}^n \mathrm{tr}\V{\rvvx_i}.
\end{alignat*}
An equivalent proof strategy is to expand the quadratic in \cref{eq:sumofvariance_eq3} and apply the arithmetic mean-geometric mean inequality to the cross terms.
\end{proofEnd}

\begin{theoremEnd}[all end, category=geometriccomplexity]{lemma}[Lemma A.2; \citealp{garrigos_handbook_2023}]
\label{thm:geometric_complexity}
  For a recurrence relation given as
  \[
    r_{T} \leq {\left(1 - \gamma \mu\right)}^{T} r_{0} + B \gamma,
  \]
  for some constant  \(0 < \gamma < 1/C\), \[r_T \leq \epsilon\] can be guaranteed by setting
  \begin{align*}
    \gamma &= \min\left(\frac{\epsilon}{2 B}, \frac{1}{C} \right) \text{ and } \\
    T      &\geq \frac{1}{\mu} \max\left(2 B \, \frac{1}{\epsilon}, C\right) \log\left(2 \frac{r_0}{\epsilon}\right), 
 \end{align*}
  where \(\mu, B > 0\) and \(0 < C < \mu\) are some finite constants.
\end{theoremEnd}
\begin{proofEnd}
  First, notice that the recurrence
  \[
    r_{T} \leq \underbrace{{\left(1 - \gamma \mu\right)}^{T} r_{0}}_{\text{bias}} 
    +
    \underbrace{
      B \gamma
    }_{\text{variance}}
    ,
  \]
  is a sum of monotonically increasing (variance) and decreasing (bias) terms with respect to \(\gamma\).
  Therefore, the bound is minimized when both terms are equal.
  This implies that \(r_t \leq \epsilon\) can be achieved by solving for 
  \[
    {\left(1 - \gamma \mu\right)}^{T} r_{0} \leq \frac{\epsilon}{2}
    \quad\text{and}\quad
    B \gamma \leq \frac{\epsilon}{2}
  \]
  First, for the variance term, 
  \begin{alignat*}{3}
    B \gamma &\leq \frac{\epsilon}{2}
    \quad\Leftrightarrow\quad
    \gamma \leq \frac{\epsilon}{2 B}.
  \end{alignat*}
  
  For the bias term, as long as \(\gamma < \frac{1}{\mu}\),
  \begin{alignat*}{3}
    & &
    {\left(1 - \gamma \mu\right)}^{T} r_{0} &\leq \frac{\epsilon}{2}
    \\
    &\Leftrightarrow&\qquad
    T  \log{\left(1 - \gamma \mu\right)} &\leq \log \frac{\epsilon}{2 r_{0}}
    \\
    &\Leftrightarrow&\qquad
     T  &\leq \frac{\log \frac{\epsilon}{2 r_{0}}}{\log{\left(1 - \gamma \mu\right)}}
    \\
    &\Leftrightarrow&\qquad
     T  &\geq \frac{\log \frac{2 r_{0}}{\epsilon}}{ \log{\left(1/\left(1 - \gamma \mu\right)\right)}}
  \end{alignat*}
  Furthermore, using the bound \(\log 1/x \geq 1 - x \) for \(0 < x < 1\), we can achieve the guarantee with 
  \[
     T  \geq  \frac{1}{\gamma \mu} \log\left( \frac{2 r_{0}}{\epsilon} \right).
  \]
  Therefore, \(1/\gamma\) determines the iteration complexity.
  Plugging in the minimum over the constraints on \(\gamma\) yields the iteration complexity.
\end{proofEnd}

\begin{theoremEnd}[all end, category=geometriccomplexitysquared]{lemma}\label{thm:geometric_complexity_squared}
  For a recurrence relation given as
  \[
    r_{T} \leq {\left(1 - \gamma \mu\right)}^{T} r_{0} + A \gamma^{2} + B \gamma,
  \]
  for some constant  \(0 < \gamma < 1/C\), \[r_T \leq \epsilon\] can be guaranteed by setting
  \begin{align*}
    \gamma &= \min\left(\frac{-B + \sqrt{B^2 + 2A\epsilon}}{2 A}, \frac{1}{C} \right) \text{ and } \\
    T      &\geq \frac{1}{\mu} \max\left(
       2 B \, \frac{1}{\epsilon} + \sqrt{2 A}\,\frac{1}{\sqrt{\epsilon}}, C
    \right) \log\left(2 \frac{r_0}{\epsilon}\right), 
 \end{align*}
  where \(\mu, A, B > 0\) and \(0 < C < \mu\) are some finite constants.
\end{theoremEnd}
\begin{proofEnd}
  This theorem is a generalization of Lemma A.2 by \citet{garrigos_handbook_2023}.
  First, notice that the recurrence
  \[
    r_{T} \leq \underbrace{{\left(1 - \gamma \mu\right)}^{T} r_{0}}_{\text{bias}} 
    +
    \underbrace{
      A \gamma^{2}
      +
      B \gamma
    }_{\text{variance}}
    ,
  \]
  is a sum of monotonically increasing (variance) and decreasing (bias) terms with respect to \(\gamma\).
  Therefore, the bound is minimized when both terms are equal.
  This implies that \(r_t \leq \epsilon\) can be achieved by solving for 
  \[
    {\left(1 - \gamma \mu\right)}^{T} r_{0} \leq \frac{\epsilon}{2}
    \quad\text{and}\quad
    A \gamma^{2} + B \gamma \leq \frac{\epsilon}{2}
  \]
  First, for the variance term, 
  \begin{alignat*}{3}
    & &
    A \gamma^{2} + B \gamma &\leq \frac{\epsilon}{2}
    \\
    &\Leftrightarrow&\qquad
    A \gamma^{2} + B \gamma - \frac{\epsilon}{2} &\leq 0
  \end{alignat*}
  The solution to this equation is given by the positive solution of the quadratic equation as
  \[
    0 < \gamma \leq \frac{- B + \sqrt{ B^2 + 2 A \epsilon }}{2 A}.
  \]
  
  For the bias term, as long as \(\gamma < \frac{1}{\mu}\), the solution is identical to \cref{thm:geometric_complexity}.
  Therefore,
  \begin{equation}
     T  \geq  \frac{1}{\gamma \mu} \log\left( \frac{2 r_{0}}{\epsilon} \right)
     \label{eq:reshuffle_recurrence_t_lowerbound}
  \end{equation}
  can guarantee the bias term to be smaller than \(\nicefrac{\epsilon}{2}\), while \(1/\gamma\) determines the iteration complexity.
  Plugging in the minimum over the constraints on \(\gamma\),
  \begin{equation}
     \gamma = \min\left( \frac{- B + \sqrt{ B^2 + 2 A \epsilon }}{2 A}, \frac{1}{C}\right)
     \label{eq:reshuffling_reccurrence_stepsize}
  \end{equation}
  yields the iteration complexity.

  Now, since the quadratic formula is not very interpretable, let us simplify the expression for \(1/\gamma\) using the bound
  \[
     \frac{a}{2 \sqrt{b^2 + a}} \leq - b + \sqrt{ b^2 + a},
  \]
  which holds for any \(a,b > 0\) and is tight for \(\epsilon \to 0\).
  With our constants, this reads
  \begin{align*}
     \frac{A \epsilon}{\sqrt{B^2 + 2 A \epsilon}} \leq - B + \sqrt{ B^2 + 2 A \epsilon },
  \end{align*}
  and therefore
  \begin{align*}
    \frac{2 A}{- B + \sqrt{ B^2 + 2 A \epsilon }}     
    &\leq
    \frac{2 \sqrt{B^2 + 2 A \epsilon}}{\epsilon}
    \\
    &\leq
    \frac{2 B + \sqrt{2 A \epsilon}}{\epsilon}
    \\
    &=
    2 B \frac{1}{\epsilon} + \sqrt{2 A} \frac{1}{\sqrt{\epsilon}}.
  \end{align*}
  Therefore, for the stepsize choice of \cref{eq:reshuffling_reccurrence_stepsize}, 
  \[
     \frac{1}{\gamma} \leq \min\left(2 B \frac{1}{\epsilon} + \sqrt{2 A} \frac{1}{\sqrt{\epsilon}}, \frac{1}{C}\right).
  \]
  Plugging this into \cref{eq:reshuffle_recurrence_t_lowerbound} yields the statement.
\end{proofEnd}

\begin{theoremEnd}[all end, category=averagebregman]{lemma}\label{thm:average_bregman}
  Let \(F : \mathcal{X} \to \mathbb{R}\) be a finite sum of convex functions as \(F = \frac{1}{n}\left(f_1 + \ldots + f_n\right)\), where \(f_i : \mathcal{X} \to \mathbb{R}\).
  Then,
  \[
     \frac{1}{n} \sum^n_{i=1} \mathrm{D}_{f_i}\left(\vx, \vx'\right)
     =
     \mathrm{D}_{F}\left(\vx, \vx'\right),
  \]
  for any \(\vx, \vx' \in \mathcal{X}\).
\end{theoremEnd}
\begin{proofEnd}
The result immediately follows from the definition of Bregman divergences as
  \begin{align*}
    &\frac{1}{n} \sum^n_{i=1} \mathrm{D}_{f_i}\left(\vx, \vx'\right)     
    \\
    &\;=
    \frac{1}{n} \sum^n_{i=1} 
    \left(
    f_i\left(\vx\right) - f_i\left(\vx'\right) - \inner{\nabla f_i\left(\vx'\right)}{\vx - \vx'}
    \right)
    \\
    &\;=
    \left(\frac{1}{n} \sum^n_{i=1} 
    f_i\left(\vx\right) \right)
    - 
    \left(\frac{1}{n} \sum^n_{i=1} f_i\left(\vx'\right) \right) 
    \\
    &\qquad- 
    \inner{\frac{1}{n} \sum^n_{i=1} \nabla f_i\left(\vx'\right)}{\vx - \vx'}
    \\
    &\;=
    F\left(\vx\right)
    - 
    F\left(\vx'\right)
    - 
    \inner{\nabla F\left(\vx'\right)}{\vx - \vx'}
    \\
    &\;=
    \mathrm{D}_{F}\left(\vx, \vx'\right).
  \end{align*}
\end{proofEnd}

\pgfkeys{/prAtEnd/local custom defaults/.style={
    no link to proof
}
}

\begin{theoremEnd}[category=doublystochasticvariance]{theorem}\label{thm:doubly_stochastic_variance}
Let the component estimators \(\rvvx_1, \ldots, \rvvx_n\) satisfy \cref{assumption:correlation}.
Then, the variance of the doubly stochastic estimator \(\rvvx_{\rvB}\) is bounded as
{%
\setlength{\abovedisplayskip}{1ex} \setlength{\abovedisplayshortskip}{1ex}
\setlength{\belowdisplayskip}{-.5ex} \setlength{\belowdisplayshortskip}{-.5ex}
\begin{align*}
  \mathrm{tr}\V{\rvvx_{\rvB}} 
  \leq
  V_{\mathrm{com}}
  +
  V_{\mathrm{cor}}
  +
  V_{\mathrm{sub}},
\end{align*}
}%
where
{%
\setlength{\abovedisplayskip}{1ex} \setlength{\abovedisplayshortskip}{1ex}
\setlength{\belowdisplayskip}{1ex} \setlength{\belowdisplayshortskip}{1ex}
\begin{align*}
   V_{\mathrm{com}} 
   &=
   {\textstyle
   \left( \frac{\rho}{b_{\mathrm{eff}}} + \frac{1 - \rho}{b} \right)  
   {\left(\frac{1}{n} \sum^n_{i=1} \mathrm{tr}\V{\rvvx_i} \right)},
   }
   \\
   V_{\mathrm{cor}}
   &=
   {\textstyle
   \rho \left(1 - \frac{1}{b_{\mathrm{eff}}}\right) {\textstyle\left(\frac{1}{n} \sum^n_{i=1} \sqrt{\mathrm{tr}\V{\rvvx_i}} \right)}^2, \text{ and}
   }
   \\
   V_{\mathrm{sub}} 
   &=
   {\textstyle
   \frac{1}{b_{\mathrm{eff}}}
   \frac{1}{n}\sum^{n}_{i=1} \norm{ \bar{\vx}_i - \bar{\vx} }_2^2.
   }
\end{align*}
Equality holds when the equality in \cref{assumption:correlation} holds.
}%
\end{theoremEnd}
\begin{proof}
We start from the law of total (co)variance,
{%
\setlength{\abovedisplayskip}{1ex} \setlength{\abovedisplayshortskip}{1ex}
\setlength{\belowdisplayskip}{1ex} \setlength{\belowdisplayshortskip}{1ex}
\[
    \mathrm{tr}\V{\rvvx_{\rvB}}
    =
    \underbrace{
    \Esub{\pi}{
    \mathrm{tr}\V{
      \rvvx_{\rvB}
      \mid
      \rvB
    }
    }
    }_{\text{Variance of ensemble}}
    +
    \underbrace{
    \mathrm{tr}\Vsub{\pi}{
      \E{
      \rvvx_{\rvB}
      \mid 
      \rvB
      }
    }.
    }_{\text{Variance of subsampling}}
\]
}%
This splits the variance into the variance of the specific ensemble of \(\rvB\) and subsampling variance.
The main challenge is to relate the variance of the ensemble of \(\rvB\) with the variance of the individual estimators in the sum
{%
\setlength{\abovedisplayskip}{1ex} \setlength{\abovedisplayshortskip}{1ex}
\setlength{\belowdisplayskip}{1ex} \setlength{\belowdisplayshortskip}{1ex}
\begin{align}
    \Esub{\pi}{\textstyle
    \mathrm{tr}\V{
      \rvvx_{\rvB}
      \mid
      \rvB
    }
    }
    &=
    \Esub{\pi}{\textstyle
    \mathrm{tr}\V{
      \frac{1}{b} \sum_{i \in \rvB} \rvvx_{i}
    }
    }.
\end{align}
}%
Since the individual estimators may not be independent, analyzing the variance of the sum can be tricky.
However, the following lemma holds generally:
\vspace{-1ex}
\begin{theoremEndRestateBefore}{lemma}[]{expectedvariancelemma}\label{thm:expected_variance_lemma}
Let \(\rvvx_1, \ldots, \rvvx_b\) be a collection of vector-variate RVs dependent on some random variable \(\rvB\) satisfying \cref{assumption:correlation}.
Then, the expected variance of the sum of \(\rvvx_1, \ldots, \rvvx_b\) conditioned on \(\rvB\) is bounded as
{%
\setlength{\abovedisplayskip}{1.5ex} \setlength{\abovedisplayshortskip}{1.5ex}
\setlength{\belowdisplayskip}{1.5ex} \setlength{\belowdisplayshortskip}{1.5ex}
\begin{align*}
   \E{ {\textstyle \mathrm{tr}\V{ \sum_{i=1}^b \rvvx_i \mid \rvB }} }
   \leq
   \rho \V{\rvS} + \rho {\left(\mathbb{E}\rvS\right)}^2 + \left(1 - \rho\right) \E{\rvV},
\end{align*}
}%
where
{%
\setlength{\abovedisplayskip}{1ex} \setlength{\abovedisplayshortskip}{1ex}
\setlength{\belowdisplayskip}{1ex} \setlength{\belowdisplayshortskip}{1ex}
\[
{\textstyle
  \rvS = \sum^{b}_{i=1} \sqrt{\mathrm{tr}\V{\rvvx_i \mid \rvB}}
  \;\;\text{and}\;\;
  \rvV = \sum^{b}_{i=1} \mathrm{tr}\V{\rvvx_i \mid \rvB}.
}
\]
}%
Equality holds when the equality in \cref{assumption:correlation} holds.
\end{theoremEndRestateBefore}
\vspace{-1ex}
Here, \(\rvS\) is the sum of conditional standard deviations, while \(\rvV\) is the sum of conditional variances.
Notice that the ``variance of the variances'' is playing a role: if we reduce the subsampling variance, then the variance of the ensemble, \(V_{\mathrm{com}}\), also decreases.

The rest of the proof, along with the proof of \cref{thm:expected_variance_lemma}, can be found in \cref{section:doubly_stochastic_variance} page \pageref{section:doubly_stochastic_variance}.
\end{proof}
\begin{proofEnd}
Starting from the law of total covariance, we have
{%
\setlength{\abovedisplayskip}{1.5ex} \setlength{\abovedisplayshortskip}{1.5ex}
\setlength{\belowdisplayskip}{1.5ex} \setlength{\belowdisplayshortskip}{1.5ex}
\begin{align}
    \V{\rvvx_{\rvB}}
    &=
    \underbrace{
    \Esub{\rvB \sim \pi}{
    \mathrm{tr}\V{
      \rvvx_{\rvB}
      \mid
      \rvB
    }
    }
    }_{\text{Ensemble Variance}}
    +
    \underbrace{
    \mathrm{tr}\Vsub{\rvB \sim \pi}{
      \E{
      \rvvx_{\rvB}
      \mid 
      \rvB
      }
    }.
    }_{\text{Subsampling Variance}}
    \label{eq:doubly_stochastic_variance_main}
\end{align}
}

\paragraph{Ensemble Variance}
Bounding the variance of each ensemble is key.
From \cref{thm:expected_variance_lemma}, we have
\begin{align}
    \E{
    \mathrm{tr}\V{
      \rvvx_{\rvB}
      \mid
      \rvB
    }
    }
    &=
    \E{
    \mathrm{tr}\V{
      \frac{1}{b} \sum_{i \in \rvB} \rvvx_{i}
      \,\middle|\,
      \rvB
    }
    }
    \nonumber
    \\
    &=
    \E{
    \mathrm{tr}\V{
      \sum_{i \in \rvB} \left( \frac{1}{b} \rvvx_{i} \right)
      \,\middle|\,
      \rvB
    }
    }
    \nonumber
    \\
    &\leq
    \rho \mathbb{V}{\rvS} + \rho {\left(\mathbb{E}{\rvS}\right)}^2
    +
    \left(1 - \rho\right) \mathbb{E}{\rvV}
    ,
    \label{eq:doubly_stochastic_variance_eq1}
\end{align}
where 
\begin{align*}
  \rvS  
  &\triangleq \sum_{i \in \rvB} \sqrt{\mathrm{tr} \V{ \frac{1}{b} \rvvx_{i}}}
  = \frac{1}{b} \sum_{i \in \rvB} \sqrt{\mathrm{tr} \V{ \rvvx_{i}}},
  \\
  \rvV &\triangleq 
  \sum_{i \in \rvB} \mathrm{tr} \V{ \frac{1}{b} \rvvx_{i} }
  =
  \frac{1}{b^2} \sum_{i \in \rvB} \mathrm{tr} \V{ \rvvx_{i} }.
\end{align*}
In our context, \(\rvS\) is the batch average of the standard deviations, and \(\rvV\) is the batch average of the variance (scaled with a factor of \(1/b\)).

Notice that \(\rvS\) is an \(b\)-sample average of the standard deviations.
Therefore, if \(\pi\) is an unbiased subsampling strategy, we retrieve the population average standard deviation as 
\begin{align}
   \Esub{\rvB \sim \pi}{\rvS} = 
   \Esub{\rvB \sim \pi}{ \frac{1}{b} \sum_{i \in \rvB} \sqrt{\mathrm{tr} \V{ \rvvx_{i} }} }
   =
   \frac{1}{n} \sum_{i = 1}^n \sqrt{\mathrm{tr} \V{ \rvvx_{i} }}.
    \label{eq:doubly_stochastic_variance_eq2}
\end{align}
Under a similar reasoning, the variance of the standard deviations follows as
\begin{align}
    &\Vsub{\rvB \sim \pi}{\rvS}
    \nonumber
    \\
    &= 
    \Vsub{\rvB \sim \pi}{ \frac{1}{b} \sum_{i \in \rvB} \sqrt{\mathrm{tr} \V{ \rvvx_{i} }}   }
    \nonumber
    \\
    &\;=
    \frac{1}{b_{\mathrm{eff}}} 
    \Vsub{i \sim \mathrm{Uniform}\left\{1, \ldots, n\right\}}{  
    \sqrt{\mathrm{tr} \V{ \rvvx_{i} }} 
    }
    \nonumber
    \\
    &\;=
    \frac{1}{b_{\mathrm{eff}}} \left( 
        \frac{1}{n} \sum_{i=1}^n \mathrm{tr} \V{ \rvvx_{i} }
        -
        {\left( \frac{1}{n} \sum_{i=1}^n \sqrt{\mathrm{tr} \V{ \rvvx_{i} }}  \right)}^2
    \right),
    \label{eq:doubly_stochastic_variance_eq3}
\end{align}
where the last identity is the well-known formula for the variance: \(\mathbb{V}\rvX = \mathbb{E}\rvX^2 - {(\mathbb{E}\rvX)}^2\).
Likewise, the average variance follows as
\begin{align}
  \mathbb{E}_{\rvB \sim \pi} \rvV
  &=
  \frac{1}{b^2}
  \Esub{\rvB \sim \pi}{ \sum_{i \in \rvB } \mathrm{tr} \V{ \rvvx_{i} } }
  \nonumber
  \\
  &=
  \frac{1}{b}
  \Esub{\rvB \sim \pi}{ \frac{1}{b} \sum_{i \in \rvB } \mathrm{tr} \V{ \rvvx_{i} } }
  \nonumber
  \\
  &=
  \frac{1}{b} 
  \left(
  \frac{1}{n} \sum_{i=1}^n \mathrm{tr} \V{ \rvvx_{i} }
  \right)
    \label{eq:doubly_stochastic_variance_eq4}
\end{align}
Plugging 
\cref{eq:doubly_stochastic_variance_eq2,eq:doubly_stochastic_variance_eq3,eq:doubly_stochastic_variance_eq4} into \cref{eq:doubly_stochastic_variance_eq1}, we have
\begin{align}
    &\Esub{\rvB \sim \pi}{
    \mathrm{tr}\V{
      \rvvx_{\rvB} \mid \rvB
    }}
    \nonumber
    \\
    &\;\leq
    \rho \mathbb{V}{\rvS} + \rho {\left(\mathbb{E}{\rvS}\right)}^2
    +
    \left(1 - \rho\right) \mathbb{E}{\rvV}
    \nonumber
    \\
    &\;=
    \frac{\rho}{b_{\mathrm{eff}}} \left( 
        \frac{1}{n} \sum_{i=1}^n \mathrm{tr} \V{ \rvvx_{i} }
        -
        {\left( \frac{1}{n} \sum_{i=1}^n \sqrt{\mathrm{tr} \V{ \rvvx_{i} }}  \right)}^2
    \right)
    \nonumber
    \\
    &\quad+
    \rho
    {\left( \frac{1}{n} \sum_{i = 1}^n \sqrt{\mathrm{tr} \V{ \rvvx_{i} }} \right)}^2
    \nonumber
    \\
    &\quad+
    \frac{1 - \rho}{b} 
    {\left(
    \frac{1}{n} \sum_{i=1}^n \mathrm{tr} \V{ \rvvx_{i} }
    \right)}
    \nonumber
    \\
    &\;=
    \left(
      \frac{\rho}{b_{\mathrm{eff}}} 
      +
      \frac{1 - \rho}{b}
    \right)
    \left( 
        \frac{1}{n} \sum_{i=1}^n \mathrm{tr} \V{ \rvvx_{i} }
    \right)   
    \nonumber
    \\
    &\quad+
    \rho \left(1 - \frac{1}{b_{\mathrm{eff}}}\right)
    {\left( \frac{1}{n} \sum_{i=1}^n \sqrt{\mathrm{tr} \V{ \rvvx_{i} }}  \right)}^2.
    \label{eq:doubly_stochastic_variance_ensemble_variance}
\end{align}

\paragraph{Subsampling Variance}
The subsampling noise is straightforward.
For this, we will denote the minibatch subsampling estimator of the component means as
\[
    \bar{\vx}_{\rvB} \triangleq \frac{1}{b} {\sum_{i \in \rvB}} \bar{\vx}_{i}.
\]
Since each component estimator \(\rvvx_i\) is unbiased, the expectation conditional on the minibatch \(\rvB\) is 
\begin{align*}
    \E{
      \bar{\vx}_{\rvB}
      \mid 
      \rvB
    }
    =
    \frac{1}{b} {\sum_{i \in \rvB}} \bar{\vx}_{i}.
\end{align*}
Therefore, 
\begin{align}
    \mathrm{tr}\Vsub{\rvB \sim \pi}{
      \E{
      \rvvx_{\rvB}
      \mid 
      \rvB
      }
    }
    &=
    \mathrm{tr}\Vsub{\rvB \sim \pi}{
    \bar{\vx}_{\rvB}
    }
    \nonumber
    \\
    &=
    \frac{1}{b_{\mathrm{eff}}} 
    \left(\frac{1}{n} {\sum^n_{i=1}} \norm{ \bar{\vx}_{i} - \bar{\vx}}_2^2 \right).
    \label{eq:doubly_stochastic_variance_subsampling_variance}
\end{align}

Combining \cref{eq:doubly_stochastic_variance_ensemble_variance,eq:doubly_stochastic_variance_subsampling_variance} into \cref{eq:doubly_stochastic_variance_main} yields the result.
Notice that the only inequality we used is \cref{eq:doubly_stochastic_variance_eq1}, \cref{thm:expected_variance_lemma}, in which equality holds if the equality in \cref{assumption:correlation} holds.
\end{proofEnd}

\pgfkeys{/prAtEnd/local custom defaults/.style={
    link to proof
}
}
\vspace{-2ex}
In \cref{thm:doubly_stochastic_variance}, \(V_{\mathrm{com}}\) is the contribution of the variance of the component estimators, while \(V_{\mathrm{cor}}\) is the contribution of the correlation between component estimators
, and \(V_{\mathrm{sub}}\) is the subsampling variance.

\vspace{-1ex}
\paragraph{Monte Carlo with Subampling Without Replacement.}
\cref{thm:doubly_stochastic_variance} is very general: it encompasses both the correlated and uncorrelated cases and matches the constants of all of the important special cases.
We will demonstrate this in the following corollary along with variance reduction by Monte Carlo averaging of \(m\) i.i.d. samples.
That is, we subsample over \(\rvvx_1^m, \ldots, \rvvx_n^m\), where each estimator is an \(m\)-sample Monte Carlo estimator:
{%
\setlength{\abovedisplayskip}{1.ex} \setlength{\abovedisplayshortskip}{1.ex}
\setlength{\belowdisplayskip}{1.ex} \setlength{\belowdisplayshortskip}{1.ex}
\[
    \rvvx_i^{m} \triangleq {\textstyle\frac{1}{m} \sum^m_{j=1}} \rvvx_i^{(j)},
\]
}%
where \(\rvvx_i^{(1)}, \ldots, \rvvx_i^{(m)}\) are i.i.d replications with mean \(\bar{\vx}_i = \mathbb{E}\rvvx^{(j)}_i\).
Then, the variance of the doubly stochastic estimator \(\rvvx_{\rvB}\) of the mean \(\bar{\vx} = \frac{1}{n} \sum^n_{i=1} \bar{\vx}_i\) defined as 
{
\setlength{\abovedisplayskip}{1ex} \setlength{\abovedisplayshortskip}{1ex}
\setlength{\belowdisplayskip}{1ex} \setlength{\belowdisplayshortskip}{1ex}
\[
    \rvvx_{\rvB}^m \triangleq {\textstyle\frac{1}{b} \sum_{i \in \rvB}} \rvvx_i^m
    \quad\text{with}\quad
    \rvB \sim \pi,
\]
}%
can be bounded as follows:
% We will later demonstrate that both the ER and the BV conditions fall out as special cases.
\begin{corollary}\label{thm:general_variance_corollary}
    For each \(j=1, \ldots, m\), let \(\rvvx_1^{(j)}, \ldots, \rvvx_n^{(j)}\) satisfy \cref{assumption:correlation}.
    Then, the variance of the doubly stochastic estimator \(\rvvx_{\rvB}^m\) of the mean \(\bar{\vx} = \frac{1}{n} \sum^n_{i=1} \bar{\vx}_i\), where \(\pi\) is \(b\)-minibatch sampling without replacement, satisfy the following corollaries:
\begin{enumerate}[label=\textbf{(\roman*)}]
\vspace{-2ex}
    \item \(\rho=1\) and \(1 < b < n\): \label{item:doubly_stochastic_without_replacement_general}
{
\setlength{\abovedisplayskip}{1ex} \setlength{\abovedisplayshortskip}{1ex}
\setlength{\belowdisplayskip}{1ex} \setlength{\belowdisplayshortskip}{1ex}
\begin{align*}
\hspace{-2em}
    \mathrm{tr}\V{\rvvx_{\rvB}^m}
    &\leq
    \frac{n-b}{(n-1) m b} {\textstyle\left(\frac{1}{n} \sum^n_{i=1}  \sigma^2_{i}\right)
    }
    \\
    &\;\;+
    \frac{n \left(b-1\right)}{(n-1) m b} {\textstyle\left(\frac{1}{n} \sum^n_{i=1}  \sigma_{i}\right)}^2
    +
    \frac{n-b}{(n-1) b} \tau^2
\end{align*}
}

    \item \(\rho = 1\) and \(b = 1\):
{
\setlength{\abovedisplayskip}{0ex} \setlength{\abovedisplayshortskip}{0ex}
\setlength{\belowdisplayskip}{1ex} \setlength{\belowdisplayshortskip}{1ex}
\begin{align*}
    \mathrm{tr}\V{\rvvx_{\rvB}^m}
    &\leq
    \frac{1}{m} {\textstyle\left(\frac{1}{n} \sum^n_{i=1} \sigma_i^2 \right)} 
    +
    \tau^2
    %\sigma^2_{\mathrm{sub}}
\end{align*}
}%

    \item \(\rho=1\) and \(b = n\):
{
\setlength{\abovedisplayskip}{0ex} \setlength{\abovedisplayshortskip}{0ex}
\setlength{\belowdisplayskip}{1ex} \setlength{\belowdisplayshortskip}{1ex}
\begin{align*}
    \mathrm{tr}\V{\rvvx_{\rvB}^m}
    \leq
    \frac{1}{m} {\textstyle\left(\frac{1}{n} \sum^n_{i=1} \sigma_i \right)}^2
\end{align*}
}%

    \item \(\sigma_{i} = 0\) for all \(i = 1, \ldots, n\):
{
\setlength{\abovedisplayskip}{1ex} \setlength{\abovedisplayshortskip}{1ex}
\setlength{\belowdisplayskip}{0ex} \setlength{\belowdisplayshortskip}{0ex}
\begin{align*}
    \mathrm{tr}\V{\rvvx_{\rvB}^m}
    \leq
    \frac{n-b}{(n-1) b} 
    \tau^2,
\end{align*}%
}%
\vspace{-3ex}
    \item \(\rho = 0\):
{
\setlength{\abovedisplayskip}{0ex} \setlength{\abovedisplayshortskip}{0ex}
\setlength{\belowdisplayskip}{0ex} \setlength{\belowdisplayshortskip}{0ex}
\begin{align*}
    \mathrm{tr}\V{\rvvx_{\rvB}^m}
    \leq
    \frac{1}{m b} {\textstyle\left(\frac{1}{n} \sum^n_{i=1} \sigma_i^2 \right)} 
    +
    \frac{n - b}{(n-1) b}
    \tau^2
\end{align*}%
}%
\end{enumerate}%
where, for all \(i = 1, \ldots, n\) and any \(j = 1, \ldots, m\), 
{%
\setlength{\abovedisplayskip}{0.5ex} \setlength{\abovedisplayshortskip}{0.5ex}
\setlength{\belowdisplayskip}{1ex} \setlength{\belowdisplayshortskip}{1ex}
\begin{alignat*}{2}
     \sigma_i^2 &= \mathrm{tr}\mathbb{V}\,\rvvx_i^{(j)}
     &&\;\;\text{is invidual variance and} \\
    \tau^2  &= {\textstyle \frac{1}{n}\sum^{n}_{i=1}} \norm{ \bar{\vx}_i - \bar{\vx} }_2^2
     &&\;\;\text{is the subsampling variance.}
\end{alignat*}
}%
\end{corollary}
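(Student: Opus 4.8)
The plan is to obtain every item as a direct specialization of \cref{thm:doubly_stochastic_variance}, after two small reductions: (a) replacing the abstract component estimators by the $m$-sample Monte Carlo averages $\rvvx_i^m$, and (b) identifying the effective sample size of minibatch sampling without replacement. Once the master bound of \cref{thm:doubly_stochastic_variance} is instantiated, items \labelcref{item:doubly_stochastic_without_replacement_general}--(v) are just arithmetic simplifications for particular values of $\rho$ and $b$.

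For reduction (a), first note that since $\rvvx_i^m = \frac{1}{m}\sum_{j=1}^m \rvvx_i^{(j)}$ is an average of $m$ i.i.d.\ replications of variance $\sigma_i^2$, we have $\mathrm{tr}\V{\rvvx_i^m} = \sigma_i^2/m$, hence $\sqrt{\mathrm{tr}\V{\rvvx_i^m}} = \sigma_i/\sqrt{m}$. I would then check that $\rvvx_1^m,\ldots,\rvvx_n^m$ inherit \cref{assumption:correlation} with the \emph{same} $\rho$: expanding $\mathrm{tr}\,\Cov{\rvvx_i^m,\rvvx_j^m} = \frac{1}{m^2}\sum_{j'}\sum_{j''}\mathrm{tr}\,\Cov{\rvvx_i^{(j')},\rvvx_j^{(j'')}}$, the cross terms $j'\neq j''$ vanish because distinct replication rounds are mutually independent, and each of the $m$ diagonal terms is $\le \rho\,\sigma_i\sigma_j$ by hypothesis; dividing by $m^2$ gives $\mathrm{tr}\,\Cov{\rvvx_i^m,\rvvx_j^m}\le \frac{\rho}{m}\sigma_i\sigma_j = \rho\sqrt{\mathrm{tr}\V{\rvvx_i^m}}\sqrt{\mathrm{tr}\V{\rvvx_j^m}}$, as required. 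For reduction (b), by \cref{thm:variancewithoutreplacement} the subsampling variance of $b$-minibatch sampling without replacement is $\frac{n-b}{b(n-1)}$ times the unit subsampling variance, so $b_{\mathrm{eff}} = \frac{b(n-1)}{n-b}$, which gives the two identities $\frac{1}{b_{\mathrm{eff}}} = \frac{n-b}{b(n-1)}$ and $1 - \frac{1}{b_{\mathrm{eff}}} = \frac{n(b-1)}{b(n-1)}$ that drive all the simplifications.

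Plugging $\rvvx_i \leftarrow \rvvx_i^m$ into \cref{thm:doubly_stochastic_variance}, using $\mathrm{tr}\V{\rvvx_i^m} = \sigma_i^2/m$, $\sqrt{\mathrm{tr}\V{\rvvx_i^m}} = \sigma_i/\sqrt{m}$, and $\frac{1}{n}\sum_{i=1}^n \norm{\bar{\vx}_i - \bar{\vx}}_2^2 = \tau^2$, I get the master inequality
\[
  \mathrm{tr}\V{\rvvx_{\rvB}^m}
  \;\le\;
  \Bigl(\tfrac{\rho}{b_{\mathrm{eff}}} + \tfrac{1-\rho}{b}\Bigr)\tfrac{1}{m}\Bigl(\tfrac{1}{n}{\textstyle\sum_{i}}\sigma_i^2\Bigr)
  \;+\; \rho\Bigl(1 - \tfrac{1}{b_{\mathrm{eff}}}\Bigr)\tfrac{1}{m}\Bigl(\tfrac{1}{n}{\textstyle\sum_{i}}\sigma_i\Bigr)^{\!2}
  \;+\; \tfrac{1}{b_{\mathrm{eff}}}\tau^2 .
\]
Then: \labelcref{item:doubly_stochastic_without_replacement_general} ($\rho=1$, $1<b<n$) is immediate after substituting the two $b_{\mathrm{eff}}$ identities; (ii) ($\rho=1$, $b=1$) uses $b_{\mathrm{eff}}=1$, so the correlation term vanishes and the other two coefficients become $1$; (iii) ($\rho=1$, $b=n$) uses $b_{\mathrm{eff}}=\infty$ (equivalently $\rvB=\{1,\dots,n\}$ is deterministic, so $V_{\mathrm{com}}=V_{\mathrm{sub}}=0$ and only $V_{\mathrm{cor}}$ survives, which can also be seen directly from \cref{thm:varianceofsum}); (iv) ($\sigma_i\equiv 0$) kills both $\sigma$-dependent terms for any $\rho$ (take $\rho=1$ via \cref{remark:rho1}), leaving $\frac{1}{b_{\mathrm{eff}}}\tau^2 = \frac{n-b}{(n-1)b}\tau^2$; (v) ($\rho=0$) kills $V_{\mathrm{cor}}$ and collapses the coefficient of $V_{\mathrm{com}}$ to $1/b$.

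The calculations are all routine; the only genuinely substantive step is verifying in reduction (a) that $m$-sample averaging preserves \cref{assumption:correlation} with the same constant, which is exactly where one must use that the replication rounds are mutually independent across the index $j$ (and not merely i.i.d.\ within each component $i$). A second point worth a sentence is the treatment of the boundary cases $b=1$ and $b=n$, where $b_{\mathrm{eff}}$ attains the extremes $1$ and $\infty$ of its range and the corresponding terms in \cref{thm:doubly_stochastic_variance} must be read as the appropriate limits.
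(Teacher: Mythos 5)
Your proposal is correct and follows exactly the route the paper intends: the corollary is stated as an immediate specialization of \cref{thm:doubly_stochastic_variance} with \(\mathrm{tr}\V{\rvvx_i^m}=\sigma_i^2/m\) and \(b_{\mathrm{eff}}=\tfrac{(n-1)b}{n-b}\), and all five items follow from the two identities \(1/b_{\mathrm{eff}}=\tfrac{n-b}{(n-1)b}\) and \(1-1/b_{\mathrm{eff}}=\tfrac{n(b-1)}{(n-1)b}\). Your explicit check that \(m\)-sample averaging preserves \cref{assumption:correlation} with the same \(\rho\) (using independence across replication rounds) is a step the paper leaves implicit, and your treatment of the endpoints \(b=1\) and \(b=n\) is also sound.
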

\vspace{0.5ex}
\begin{remark}[\textbf{For dependent estimators, increasing \(b\) also reduces component variance.}]\label{eq:monte_carlo_recommendation}
    Notice that, for case of \(\rho=1\),  \cref{thm:general_variance_corollary} (i), the term with \(\frac{1}{n} \sum^n_{i=1} \sigma_{i}^2\) is reduced in a rate of \(\mathcal{O}\left(\nicefrac{1}{m b}\right)\).
    This means reducing the subsampling noise by increasing \(b\) also reduces the noise of estimating each component.
    Furthermore, the first term dominates the second term as
{%
\setlength{\abovedisplayskip}{0ex} \setlength{\abovedisplayshortskip}{0ex}
\setlength{\belowdisplayskip}{1ex} \setlength{\belowdisplayshortskip}{1ex}
    \[
    {\textstyle
      {\left( \frac{1}{n} \sum^n_{i=1} \sigma_{i} \right)}^2
      \leq
      \frac{1}{n} \sum^n_{i=1} \sigma_{i}^2,
    }
    \]
}%
    as stated by Jensen's inequality.
    Therefore, despite correlations, increasing \(b\) will have a more significant effect since it reduces both dominant terms \(\frac{1}{n}\sum_{i=1}^n\sigma_{i}^2\) and \(\tau^2\).
\end{remark}
\vspace{0.2ex}
\begin{remark}
    When independent estimators are used, \cref{thm:general_variance_corollary} (v) shows that increasing \(b\) reduces the full variance in a \(\mathcal{O}(1/b)\) rate, but increasing \(m\) does not.
\end{remark}
\vspace{0.2ex}
\begin{remark}
    \cref{thm:general_variance_corollary} achieves all known endpoints in the context of SGD:
    For \(b=n\) (full batch), doubly SGD reduces to SGD with a Monte Carlo estimator, where there is no subsampling noise (no \(\tau^2\)).
    When the Monte Carlo noise is 0, then doubly SGD reduces to SGD with a subsampling estimator (no \(\sigma_{i}\)), retrieving the result of \citet{gower_sgd_2019}.
\end{remark}

\vspace{-0.5ex}
\subsubsection{Gradient Variance Conditions for SGD}
\vspace{-0.5ex}
From \cref{thm:doubly_stochastic_variance}, we can establish the ER and BV conditions (\cref{section:conditions}) of the doubly stochastic gradient estimators.
Following the notation in \cref{section:dsgd}, we will denote the doubly stochastic gradient estimator as \(\rvvg_{\rvB}\), which combines the estimators \(\rvvg_1, \ldots, \rvvg_n\) according to the subsampling strategy \(\rvB \sim \pi\), which achieves an effective sample size of \(b_{\mathrm{eff}}\).
We will also use the corresponding minibatch subsampling estimator \(\nabla f_{\rvB}\) for the analysis.

% \vspace{-1.5ex}
% \paragraph{ER Condition}
% We require some assumptions on the estimators and the components:

\vspace{-.5ex}
\begin{assumption}\label{assumption:estimator_correlations}
  For all \(\vx \in \mathcal{X}\), the component gradient estimators \(\rvvg_1\left(\vx\right), \ldots, \rvvg_{n}\left(\vx\right)\) satisfy \cref{assumption:correlation} with some \(\rho \in [0, 1]\).
\end{assumption}
\vspace{-.5ex}
Again, this assumption is always met with \(\rho = 1\) and holds with \(\rho = 0\) if the estimators are independent.
\vspace{-.5ex}
\begin{assumption}\label{assumption:subsampling_er}
  The subsampling estimator \(\nabla f_{\rvB}\) satisfies the \(\mathrm{ER}\left(\mathcal{L}_{\mathrm{sub}}\right)\) condition in \cref{assumption:expected_residual}.
\end{assumption}
\vspace{-.5ex}
This is a classical assumption used to analyze SGD on finite sums and is automatically satisfied by \cref{assumption:components}. (See \cref{thm:expected_residual_without_replacement} in \cref{section:proof_sgd_complexity} for a proof.)
\vspace{-.5ex}
\begin{assumption}\label{assumption:montecarlo_er}
   For all \(i = 1, \ldots, n\) and \(\vx \in \mathcal{X}\) and global minimizers \(\vx_* \in \argmin_{\vx \in \mathcal{X}} F\left(\vx_*\right)\), the component gradient estimator \(\rvvg_i\) satisfies at least one of the following variants of the ER condition:
   \begin{itemize}[leftmargin=3.5em]%
       \vspace{-2ex}
       \setlength\itemsep{1ex}
       \item[(\(\mathrm{A}^{\mathrm{CVX}}\))]
       \(
         \mathrm{tr}\V{ \rvvg_i\left(\vx\right) - \rvvg_i\left(\vy\right) }
         \leq
         2 \mathcal{L}_i \mathrm{D}_{f_i}\left(\vx, \vy\right),
       \)
       \\
       where \(f_i\) is convex.
       
       \item[(\(\mathrm{A}^{\mathrm{ITP}}\))] \label{item:montecarlo_er2}
       \(
         \mathrm{tr}\V{ \rvvg_i\left(\vx\right) - \rvvg_i\left(\vy\right) }
         \leq
         2 \mathcal{L}_i \left( f_i\left(\vx\right) - f_i\left(\vx_*\right) \right)
       \)
       \\
       where \(f_i\left(\vx\right) \geq  f_i\left(\vx_*\right)\).
       
       \item[(B)] \label{item:montecarlo_er3}
       \(
         \mathrm{tr}\V{ \rvvg_i\left(\vx\right) - \rvvg_i\left(\vy\right) }
         \leq
         2 \mathcal{L}_i \left( F\left(\vx\right) - F\left(\vx_*\right) \right)
       \).
   \end{itemize}
\end{assumption}
\vspace{-.5ex}
Each of these assumptions holds under different assumptions and problem setups.
For instance, \(\mathrm{A}^{\mathrm{CVX}}\) holds only under componentwise convexity, while \(\mathrm{A}^{\mathrm{ITP}}\) requires majorization \(f_i\left(\vx\right) \geq  f_i\left(\vx_*\right)\), which is essentially assuming ``interpolation''~\citep{vaswani_fast_2019,ma_power_2018,gower_sgd_2021} in the ERM context.
% \(\mathrm{A}^{\mathrm{ITP}}\) is also closely related to star/quasar convexity~\citep{guminov_accelerated_2023,nesterov_cubic_2006,hinder_nearoptimal_2020}, which is connected with ERM in the overparameterization regime~\citep{zhou_sgd_2019,ma_why_2020,gower_sgd_2021}.
Among these, (B) is the strongest since it directly relates the individual components \(f_1, \ldots, f_n\) with the full objective \(F\).

We now state our result establishing the ER condition:

\begin{theoremEnd}[category=generalconditions]{theorem}\label{thm:general_conditions}
  Let \cref{assumption:subsampling_er,assumption:estimator_correlations,assumption:montecarlo_er} hold. 
  Then, we have:
    \begin{enumerate}[label=\textbf{(\roman*)}]
      \vspace{-2ex}
      \setlength\itemsep{0ex}
      \item If \((\mathrm{A}^{\mathrm{CVX}})\) or \((\mathrm{A}^{\mathrm{ITP}})\) hold, \(\rvvg_{\rvB}\) satisfies \(\mathrm{ER}\left(\mathcal{L}_{\rm{A}}\right)\).
     \label{thm:general_conditions_expected_residual1}
      
      \item If \(\rm{(B)}\) holds, \(\rvvg_{\rvB}\) satisfies \(\mathrm{ER}\left(\mathcal{L}_{\rm{B}}\right)\).
     \label{thm:general_conditions_expected_residual3}
     \vspace{-2ex}
  \end{enumerate}
  where \(\mathcal{L}_{\mathrm{max}} = \max\big\{ \mathcal{L}_{1}, \ldots, \mathcal{L}_{n} \big\}\),
{\small%
\setlength{\abovedisplayskip}{1ex} \setlength{\abovedisplayshortskip}{1ex}
\setlength{\belowdisplayskip}{1ex} \setlength{\belowdisplayshortskip}{1ex}
  \begin{align*}
    \mathcal{L}_{\rm{A}}
    &=
    {\textstyle
    \left( \frac{\rho}{b_{\mathrm{eff}}} + \frac{1 - \rho}{b} \right) \mathcal{L}_{\mathrm{max}}
    +
    \rho \left(1 - \frac{1}{b_{\mathrm{eff}}}\right)
    {\textstyle{\left( \frac{1}{n} \sum^{n}_{i=1} \mathcal{L}_i \right)}}
    +
    \frac{\mathcal{L}_{\mathrm{sub}}}{b_{\mathrm{eff}}}
    }
    \\
    \mathcal{L}_{\rm{B}}
    &=
    {\textstyle
    \left( \frac{\rho}{b_{\mathrm{eff}}} + \frac{1 - \rho}{b} \right)
    \left( \frac{1}{n} \sum^{n}_{i=1} \mathcal{L}_{i} \right)
    }
    \\
    &\qquad+
    {\textstyle
    \rho \left(1 - \frac{1}{b_{\mathrm{eff}}}\right)
    {\left( \frac{1}{n} \sum^{n}_{i=1} \sqrt{\mathcal{L}_{i}} \right) }^2
    +
    \frac{\mathcal{L}_{\mathrm{sub}}}{b_{\mathrm{eff}}}.
    }
  \end{align*}
}%
\end{theoremEnd}
\vspace{-1.5ex}
\begin{proofEnd}
From \cref{thm:doubly_stochastic_variance}, we have
\begin{align}
  &\mathrm{tr}\V{\rvvg_{\rvB}\left(\vx\right) - \rvvg_{\rvB}\left(\vx_*\right)} 
  \nonumber
  \\
  &\;\leq 
  \left( \frac{\rho}{b_{\mathrm{eff}}} + \frac{1 - \rho}{b} \right)
  \left( \frac{1}{n} \sum^n_{i=1}  
  \mathrm{tr}\V{\rvvg_i\left(\vx\right) - \rvvg_i\left(\vx_*\right)}
  \right)
  \nonumber
  \\
  &\;\qquad+
  \rho \left(1 - \frac{1}{b_{\mathrm{eff}}}\right) 
  {\left( \frac{1}{n} \sum^n_{i=1}
  \sqrt{\mathrm{tr}\V{\rvvg_i\left(\vx\right) - \rvvg_i\left(\vx_*\right)}}
    \right)}^2 
  \nonumber
  \\
  &\;\qquad
  +
  \frac{1}{b_{\mathrm{eff}}}
  \mathrm{tr}\V{\nabla f_{\rvB}\left(\vx\right) - \nabla F\left(\vx\right)},
  \nonumber
  \shortintertext{where \cref{assumption:subsampling_er} yields}
  &\;\leq 
  \left( \frac{\rho}{b_{\mathrm{eff}}} + \frac{1 - \rho}{b} \right)
  \underbrace{
  \left( \frac{1}{n} \sum^n_{i=1}  
  \mathrm{tr}\V{\rvvg_i\left(\vx\right) - \rvvg_i\left(\vx_*\right)}
  \right)
  }_{\triangleq \mathrm{T_{\mathrm{var}}}}
  \nonumber
  \\
  &\;\qquad+
  \rho \left(1 - \frac{1}{b_{\mathrm{eff}}}\right) 
  \underbrace{
  {\left( \frac{1}{n} \sum^n_{i=1}
  \sqrt{\mathrm{tr}\V{\rvvg_i\left(\vx\right) - \rvvg_i\left(\vx_*\right)}}
    \right)}^2 
  }_{\triangleq T_{\mathrm{cov}}}
  \nonumber
  \\
  &\;\qquad
  +
  \frac{2 \mathcal{L}_{\mathrm{sub}} }{b_{\mathrm{eff}}} \left( F\left(\vx\right) - F\left(\vx_*\right) \right)
  \nonumber
  \\
  &\;=
  \left( \frac{\rho}{b_{\mathrm{eff}}} + \frac{1 - \rho}{b} \right)
  T_{\mathrm{var}}
  \nonumber
  +
  \rho \left(1 - \frac{1}{b_{\mathrm{eff}}}\right) 
  T_{\mathrm{cov}}
  \\
  &\qquad+
  \frac{2 \mathcal{L}_{\mathrm{sub}} }{b_{\mathrm{eff}}} \left( F\left(\vx\right) - F\left(\vx_*\right) \right).
  \label{eq:master_equation_er}
\end{align}

\paragraph{Proof of \labelcref{thm:general_conditions_expected_residual1} with \((\rm{A}^{\rm{CVX}})\)}
Since \cref{assumption:montecarlo_er}~\((\rm{A}^{\rm{CVX}})\) requires \(f_1, \ldots, f_n\) to be convex, \(F\) is also convex.
Therefore, we can use the identity in \cref{thm:average_bregman} and 
\[
  \mathrm{D}_{F}\left(\vx, \vx_*\right) = F\left(\vx\right) - F\left(\vx_*\right).
\]
With that said, under \((\rm{A}^{\rm{CVX}})\), we have
\begin{align}
  T_{\text{var}}
  &\leq
  \frac{1}{n} \sum^n_{i=1}  
  \mathrm{tr}\V{\rvvg_i\left(\vx\right) - \rvvg_i\left(\vx_*\right)}
  \nonumber
  \\
  &=
  \frac{1}{n} \sum^n_{i=1}  2 \mathcal{L}_i \mathrm{D}_{f_i}\left(\vx, \vx_*\right),
  \nonumber
\shortintertext{applying \(\mathcal{L}_{\mathrm{max}} \geq \mathcal{L}_i\) for all \(i = 1, \ldots, n\),} 
  &\leq
  2 \mathcal{L}_{\mathrm{max}}
  \frac{1}{n} \sum^n_{i=1} 
  \mathrm{D}_{f_i}\left(\vx, \vx_*\right)
  \nonumber
\shortintertext{and \cref{thm:average_bregman},} 
  &=
  2 \mathcal{L}_{\mathrm{max}} \mathrm{D}_{F}\left(\vx, \vx_*\right).
  \label{eq:ercondition_item1_tvar}
\end{align}

For \(T_{\text{cov}}\), since
\begin{align}
  T_{\text{cov}}
  &=
  {\left( \frac{1}{n} \sum^n_{i=1}
  \sqrt{\mathrm{tr}\V{\rvvg_i\left(\vx\right) - \rvvg_i\left(\vx_*\right)}}
    \right)}^2 
  \nonumber
\shortintertext{
  is monotonic w.r.t. the variance, we can apply \((\rm{A}^{\rm{CVX}})\) as 
}    
  &\leq
  \frac{2}{n^2} 
  {\left( 
  \sum^n_{i=1}
  \sqrt{ 
    \mathcal{L}_i \mathrm{D}_{f_i}\left(\vx, \vx_*\right)
  }
  \right)}^2.
  \nonumber
\shortintertext{Now, applying the Cauchy-Schwarz inequality yields}  
  &\leq
  \frac{2}{n^2} 
  \left( 
  \sum^n_{i=1} \mathcal{L}_i
  \right)
  \left( 
  \sum^n_{i=1} \mathrm{D}_{f_i}\left(\vx, \vx_*\right)
  \right)
  \nonumber
  \\
  &=
  2
  \left( 
  \frac{1}{n}
  \sum^n_{i=1} \mathcal{L}_i
  \right)
  \left( 
  \frac{1}{n}
  \sum^n_{i=1} \mathrm{D}_{f_i}\left(\vx, \vx_*\right)
  \right)
  \nonumber
\shortintertext{and by \cref{thm:average_bregman},}  
  &=
  2
  \left( 
  \frac{1}{n}
  \sum^n_{i=1} \mathcal{L}_i
  \right)
  \mathrm{D}_{F}\left(\vx, \vx_*\right).
  \label{eq:ercondition_item1_tcov}
\end{align}
Plugging \cref{eq:ercondition_item1_tvar,eq:ercondition_item1_tcov} into \cref{eq:master_equation_er}, we have
\begin{align*}
  &\mathrm{tr}\V{\rvvg\left(\vx\right) - \rvvg\left(\vx_*\right)} 
  \\
  &\;\leq 
  \left( \frac{\rho}{b_{\mathrm{eff}}} + \frac{1 - \rho}{b} \right)
  T_{\mathrm{var}}
  \nonumber
  +
  \rho \left(1 - \frac{1}{b_{\mathrm{eff}}}\right) 
  T_{\mathrm{cov}}
  \\
  &\qquad+
  \frac{2 \mathcal{L}_{\mathrm{sub}} }{b_{\mathrm{eff}}} \left( F\left(\vx\right) - F\left(\vx_*\right) \right)
  \\
  &\;\leq
  \left( \frac{\rho}{b_{\mathrm{eff}}} + \frac{1 - \rho}{b} \right)
  2 \mathcal{L}_{\mathrm{max}} \mathrm{D}_{F}\left(\vx, \vx_*\right)
  \\
  &\;\qquad+
  \rho \left(1 - \frac{1}{b_{\mathrm{eff}}}\right) 
  2
  \left( 
  \frac{1}{n}
  \sum^n_{i=1} \mathcal{L}_i
  \right)
  \mathrm{D}_{F}\left(\vx, \vx_*\right)
  \\
  &\;\qquad+
  \frac{1}{b_{\mathrm{eff}}} 
  2 \mathcal{L}_{\mathrm{sub}} \mathrm{D}_{F}\left(\vx, \vx_*\right).
  \\
  &=
  2
  \,\Bigg(\,
    \left( \frac{\rho}{b_{\mathrm{eff}}} + \frac{1 - \rho}{b} \right)
    \mathcal{L}_{\mathrm{max}}
    +
    \rho \left(1 - \frac{1}{b_{\mathrm{eff}}}\right) 
    \left( 
    \frac{1}{n}
    \sum^n_{i=1} \mathcal{L}_i
    \right)
    \\
    &\qquad\qquad+
    \frac{1}{b_{\mathrm{eff}}} 
    \mathcal{L}_{\mathrm{sub}} 
  \,\Bigg)\,
  \mathrm{D}_{F}\left(\vx, \vx_*\right)
  \\
  &=
  2
  \,\Bigg(\,
    \left( \frac{\rho}{b_{\mathrm{eff}}} + \frac{1 - \rho}{b} \right)
    \mathcal{L}_{\mathrm{max}} 
    +
    \rho \left(1 - \frac{1}{b_{\mathrm{eff}}}\right) 
    \left( 
    \frac{1}{n}
    \sum^n_{i=1} \mathcal{L}_i
    \right)
    \\
    &\qquad\qquad+
    \frac{1}{b_{\mathrm{eff}}} 
    \mathcal{L}_{\mathrm{sub}} 
  \,\Bigg)\,
  \left( F\left(\vx\right) - F\left(\vx_*\right) \right).
\end{align*}

\paragraph{Proof of \labelcref{thm:general_conditions_expected_residual1} with \((\rm{A}^{\rm{ITP}})\)}
From \cref{assumption:montecarlo_er}~\((\rm{A}^{\rm{ITP}})\), we have
\begin{align}
  T_{\text{var}}
  &=
  \frac{1}{n} \sum^n_{i=1}  
  \mathrm{tr}\V{\rvvg_i\left(\vx\right) - \rvvg_i\left(\vx_*\right)}
  \nonumber
  \\
  &\leq
  \frac{1}{n} \sum^n_{i=1}  2 \mathcal{L}_i \left(f_i\left(\vx\right) - f_i\left(\vx_*\right)\right),
  \nonumber
\shortintertext{applying \(\mathcal{L}_{\mathrm{max}} \geq \mathcal{L}_i\) for all \(i = 1, \ldots, n\),} 
  &\leq
  2 \mathcal{L}_{\mathrm{max}}
  \frac{1}{n} \sum^n_{i=1} 
  \left(f_i\left(\vx\right) - f_i\left(\vx_*\right)\right)
  \nonumber
  \\
  &=
  2 \mathcal{L}_{\mathrm{max}} \left( F\left(\vx\right) - F\left(\vx_*\right) \right).
  \label{eq:ercondition_item2_tvar}
\end{align}

Similarly,
\begin{align}
  T_{\text{cov}}
  &=
  {\left( \frac{1}{n} \sum^n_{i=1}
  \sqrt{\mathrm{tr}\V{\rvvg_i\left(\vx\right) - \rvvg_i\left(\vx_*\right)}}
    \right)}^2,
  \nonumber
\shortintertext{applying \((\rm{A}^{\rm{ITP}})\),} 
  &\leq
  \frac{2}{n^2} 
  {\left( 
  \sum^n_{i=1}
  \sqrt{ 
    \mathcal{L}_i \left( f_i\left(\vx\right) - f_i\left(\vx_*\right) \right)
  }
  \right)}^2,
  \nonumber
\shortintertext{and applying the Cauchy-Schwarz inequality,}  
  &\leq
  \frac{2}{n^2} 
  \left( 
  \sum^n_{i=1} \mathcal{L}_i
  \right)
  \left( 
  \sum^n_{i=1}
  f_i\left(\vx\right) - f_i\left(\vx_*\right)
  \right)
  \nonumber
  \\
  &=
  2
  \left( 
  \frac{1}{n}
  \sum^n_{i=1} \mathcal{L}_i
  \right)
  \left( 
  \frac{1}{n}
  \sum^n_{i=1}
  f_i\left(\vx\right) - f_i\left(\vx_*\right)
  \right)
  \nonumber
  \\
  &=
  2
  \left( 
  \frac{1}{n}
  \sum^n_{i=1} \mathcal{L}_i
  \right)
  \left( F\left(\vx\right) - F\left(\vx_*\right) \right).
  \label{eq:ercondition_item2_tcov}
\end{align}

Plugging \cref{eq:ercondition_item2_tvar,eq:ercondition_item2_tcov} into \cref{eq:master_equation_er}, we have
\begin{align*}
  &\mathrm{tr}\V{\rvvg\left(\vx\right) - \rvvg\left(\vx_*\right)} 
  \\
  &\;\leq 
  \left( \frac{\rho}{b_{\mathrm{eff}}} + \frac{1 - \rho}{b} \right)
  T_{\mathrm{var}}
  +
  \rho \left(1 - \frac{1}{b_{\mathrm{eff}}}\right) 
  T_{\mathrm{cov}}
  \\
  &\qquad+
  \frac{2 \mathcal{L}_{\mathrm{sub}} }{b_{\mathrm{eff}}} \left( F\left(\vx\right) - F\left(\vx_*\right) \right)
  \\
  &\;\leq
  \left( \frac{\rho}{b_{\mathrm{eff}}} + \frac{1 - \rho}{b} \right)
  2 \mathcal{L}_{\mathrm{max}} \left( F\left(\vx\right) F\left(\vx_*\right) \right)
  \\
  &\;\qquad+
  \rho \left(1 - \frac{1}{b_{\mathrm{eff}}}\right) 
  2
  \left( 
  \frac{1}{n}
  \sum^n_{i=1} \mathcal{L}_i
  \right)
  \left(F\left(\vx\right) - F\left(\vx_*\right)\right)
  \\
  &\;\qquad+
  \frac{1}{b_{\mathrm{eff}}} 
  2 \mathcal{L}_{\mathrm{sub}} \left(F\left(\vx\right) - F\left(\vx_*\right)\right).
  \\
  &=
  2
  \,\Bigg(\,
    \left( \frac{\rho}{b_{\mathrm{eff}}} + \frac{1 - \rho}{b} \right)
    \mathcal{L}_{\mathrm{max}} 
    +
    \rho \left(1 - \frac{1}{b_{\mathrm{eff}}}\right) 
    \left( 
    \frac{1}{n}
    \sum^n_{i=1} \mathcal{L}_i
    \right)
    \\
    &\qquad\qquad+
    \frac{1}{b_{\mathrm{eff}}} 
    \mathcal{L}_{\mathrm{sub}} 
  \,\Bigg)\,
  \left( F\left(\vx\right) - F\left(\vx_*\right)\right).
\end{align*}

\paragraph{Proof of \labelcref{thm:general_conditions_expected_residual3}}
From \cref{assumption:montecarlo_er} (B), we have
\begin{align}
  T_{\text{var}}
  &=
  \frac{1}{n} \sum^n_{i=1}  
  \mathrm{tr}\V{\rvvg_i\left(\vx\right) - \rvvg_i\left(\vx_*\right)}
  \nonumber
  \\
  &\leq
  \frac{1}{n} \sum^n_{i=1}  2 \mathcal{L}_i \left(F\left(\vx\right) - F\left(\vx_*\right)\right)
  \nonumber
  \\
  &=
  2 
  \left( \frac{1}{n} \sum^n_{i=1} \mathcal{L}_{i} \right)
  \left(F\left(\vx\right) - F\left(\vx_*\right)\right).
  \label{eq:ercondition_item3_tvar}
\end{align}

And,
\begin{align}
  T_{\text{cov}}
  &=
  {\left( \frac{1}{n} \sum^n_{i=1}
  \sqrt{\mathrm{tr}\V{\rvvg_i\left(\vx\right) - \rvvg_i\left(\vx_*\right)}}
    \right)}^2 
    \nonumber
  \\
  &\leq
  \frac{2}{n^2} 
  {\left( 
  \sum^n_{i=1}
  \sqrt{ 
    \mathcal{L}_i \left( F\left(\vx\right) - F\left(\vx_*\right) \right)
  }
  \right)}^2
    \nonumber
  \\
  &=
  2
  {\left( 
  \frac{1}{n} 
  \sum^n_{i=1}
  \sqrt{ 
    \mathcal{L}_i 
  }
  \right)}^2
  \left( F\left(\vx\right) - F\left(\vx_*\right) \right).
  \label{eq:ercondition_item3_tcov}
\end{align}

Plugging \cref{eq:ercondition_item3_tcov,eq:ercondition_item3_tvar} into \cref{eq:master_equation_er}, we have
\begin{align*}
  &\mathrm{tr}\V{\rvvg\left(\vx\right) - \rvvg\left(\vx_*\right)} 
  \\
  &\;\leq 
  \left( \frac{\rho}{b_{\mathrm{eff}}} + \frac{1 - \rho}{b} \right)
  2 \left( \frac{1}{n} \sum^n_{i=1}  
  \mathcal{L}_i \right) \left(F\left(\vx\right) - F\left(\vx_*\right) \right)
  \\
  &\;\qquad+
  \rho \left(1 - \frac{1}{b_{\mathrm{eff}}}\right) 
  {2 \left( \frac{1}{n} \sum^n_{i=1}
  \sqrt{ 
    \mathcal{L}_i 
  }
  \right)}^2 
  \left(F\left(\vx\right) - F\left(\vx_*\right) \right)
  \\
  &\;\qquad+
  \frac{1}{b_{\mathrm{eff}}} 
  2 \mathcal{L}_{\mathrm{sub}} \left(F\left(\vx\right) - F\left(\vx_*\right) \right),
  \\
  &\;=
  2 \,
  \Bigg(
  \left( \frac{\rho}{b_{\mathrm{eff}}} + \frac{1 - \rho}{b} \right)
  \left( \frac{1}{n} \sum^n_{i=1}  
  \mathcal{L}_i 
  \right)
  \\
  &\quad\qquad+
  \rho \left(1 - \frac{1}{b_{\mathrm{eff}}}\right) 
  {\left( \frac{1}{n} \sum^n_{i=1}
  \sqrt{ 
    \mathcal{L}_i 
  }
  \right)}^2 
  \\
  &\quad\qquad+
  \frac{1}{b_{\mathrm{eff}}} 
  \mathcal{L}_{\mathrm{sub}} 
  \Bigg) \,
  \left(F\left(\vx\right) - F\left(\vx_*\right) \right).
\end{align*}    

\end{proofEnd}
% \textcolor{blue}{[Yian: I've changed the notations within this theorem (From $\mathcal{L}_{(i)}$ to $\mathcal{L}_{(A)}$, etc.). See if it gives a better correspondence between Theorem 2 and Assumption 5. I haven't changed the notations throughout though.]}

\vspace{.5ex}
\begin{remark}
    Assuming the conditions in \cref{assumption:montecarlo_er} hold with the same value of \(\mathcal{L}_i\), the inequality
{%
\setlength{\abovedisplayskip}{0.5ex} \setlength{\abovedisplayshortskip}{.5ex}
\setlength{\belowdisplayskip}{1ex} \setlength{\belowdisplayshortskip}{1ex}
    \[
    {\textstyle
      {\left( \frac{1}{n} \sum^n_{i=1} \sqrt{\mathcal{L}_{i} }\right)}^{2}
      \leq
      \frac{1}{n} \sum^n_{i=1} \mathcal{L}_{i}
      \leq
      \mathcal{L}_{\mathrm{max}},
    }
    \]
}%
implies \(\mathcal{L}_{\rm{B}} \leq \mathcal{L}_{\rm{A}}\).
\end{remark}

Meanwhile, The BV condition follows by assuming equivalent conditions on each component estimator:
\vspace{-0.5ex}
\begin{assumption}\label{assumption:bounded_variance_both}
Variance is bounded for all \(\vx_* \in \argmin_{\vx \in \mathcal{X}} F\left(\vx\right)\) such that the following hold:
\vspace{-0.5ex}
\begin{enumerate}[leftmargin=1.3em]
\vspace{-1ex}
    \item 
    \(
    {\textstyle
        \frac{1}{n} \sum^n_{i=1} \norm{ \nabla f_i\left(\vx_*\right)}_2^2 \leq \tau^2
    }
    \) for some \(\tau^2 < \infty\) and,
    \item 
    \(\mathrm{tr}\V{\rvvg_i\left(\vx_*\right)} \leq \sigma^2_{i}\) 
    for some \(\sigma^2_{i} < \infty\), for all \(i = 1, \ldots, n\).
\end{enumerate}
\end{assumption}
% Note that, when interpolation holds (\textit{e.g.}, such as in ERM with overparameterized models), \(\tau^2 = 0\).
\vspace{-0.5ex}
Based on these, \cref{thm:doubly_stochastic_variance} immediately yields the result:

\vspace{-0.5ex}

\begin{theoremEnd}[category=boundedvariance]{theorem}\label{thm:bounded_variance}
Let \cref{assumption:estimator_correlations,assumption:bounded_variance_both} hold.
Then, \(\rvvg_{\rvB}\) satisfies \(\mathrm{BV}\left(\sigma^2\right)\), where
{%
\setlength{\abovedisplayskip}{1ex} \setlength{\abovedisplayshortskip}{1ex}
\setlength{\belowdisplayskip}{1ex} \setlength{\belowdisplayshortskip}{1ex}
  \begin{align*}
    \sigma^2
    &=
    \left( \frac{\rho}{b_{\mathrm{eff}}} + \frac{1 - \rho}{b} \right)
    \left( \textstyle\frac{1}{n} \sum^n_{i=1} \sigma_{i}^2 \right)
    \\
    &\qquad+
    \rho \left(1 - \frac{1}{b_{\mathrm{eff}}}\right) {\textstyle\left( \frac{1}{n} \sum^n_{i=1} \sigma_{i} \right)}^2
    +
    \frac{\tau^2}{b_{\mathrm{eff}}}.
  \end{align*}
}%
Equality in \cref{assumption:bounded_variance} holds if equality in \cref{assumption:estimator_correlations} holds.
\end{theoremEnd}
\vspace{-1.5ex}
\begin{proofEnd}
For any element of the solution set \(\vx_* = \argmin_{\vx \in \mathcal{X}} F\left(\vx\right)\), by \cref{thm:doubly_stochastic_variance}, we have
\begin{align*}
  \mathrm{tr}\V{\rvvg_{\rvB}\left(\vx_*\right)} 
  &\leq 
  \left( \frac{\rho}{b_{\mathrm{eff}}} + \frac{1 - \rho}{b}  \right)
  \left( \frac{1}{n} \sum^n_{i=1}  
  \mathrm{tr}\V{\rvvg_i\left(\vx_*\right)}
  \right)
  \\
  &\qquad+
  \rho \left(1 - \frac{1}{b_{\mathrm{eff}}}\right) 
  {\left( \frac{1}{n} \sum^n_{i=1}
  \sqrt{\mathrm{tr}\V{\rvvg_i\left(\vx_*\right)}}
    \right)}^2 
    \\
  &\qquad+
  \frac{1}{b_{\mathrm{eff}}} 
  \mathrm{tr}\V{\nabla f_{\rvB}\left(\vx_*\right)}.
\shortintertext{Applying \cref{assumption:bounded_variance_both}, we have}  
  &\leq 
  \left( \frac{\rho}{b_{\mathrm{eff}}} + \frac{1 - \rho}{b}  \right)
  \left( \frac{1}{n} \sum^n_{i=1} \sigma_{i}^{2} \right)
  \\
  &\qquad+
  \left(1 - \frac{1}{b_{\mathrm{eff}}}\right) 
  {\left( \frac{1}{n} \sum^n_{i=1}
  \sqrt{ 
    \sigma^2_{i}
  }
  \right)}^2 
  \\
  &\qquad+
  \frac{1}{b_{\mathrm{eff}}} \tau^2
  \\
  &=
  \left(1 - \frac{1}{b_{\mathrm{eff}}}\right) 
  \left( \frac{1}{n} \sum^n_{i=1} \sigma_{i}^{2} \right)
  \\
  &\qquad+
  \rho \left(1 - \frac{1}{b_{\mathrm{eff}}}\right) 
  {\left( \frac{1}{n} \sum^n_{i=1} \sigma_{i} \right)}^2 
  \\
  &\qquad+
  \frac{1}{b_{\mathrm{eff}}} \tau^2.
\end{align*}    
\end{proofEnd}
\vspace{-0.5ex}

As discussed in \cref{section:sgd_convergence}, \cref{thm:general_conditions,thm:bounded_variance} are sufficient to guarantee convergence of doubly SGD.
For completeness, let us state a specific result for \(\rho = 1\):

\begin{theoremEnd}[all end, category=expectedresidualwihtoutreplacement]{lemma}\label{thm:expected_residual_without_replacement}
    Let the objective function \(F\) satisfy \cref{assumption:components}, \(\pi\) be sampling \(b\) samples without replacement, and all elements of the solution set \(\vx_* \in \argmin_{\vx \in \mathcal{X}} F\left(\vx\right)\) be stationary points of \(F\).
    Then, the subsampling estimator \(\nabla f_{\rvB}\) satisfies the \(\mathrm{ER}\) condition as
    \begin{align*}
      &\mathrm{tr}\Vsub{\rvB \sim \pi}{ \nabla f_{\rvB}\left(\vx\right) - \nabla f_{\rvB}\left(\vx_*\right)  }
      \\
      &\qquad\leq
      2 \frac{n - b}{b \left(n - 1\right)} L_{\mathrm{max}} \left(F\left(\vx\right) - F\left(\vx_*\right)\right),
    \end{align*}
    where \(L_{\mathrm{max}} = \max\left\{L_1, \ldots, L_n\right\}\).
\end{theoremEnd}
\begin{proofEnd}
Consider that, for any random vector \(\rvvx\), 
\[
  \mathrm{tr}\V{\rvvx^2} \leq \mathbb{E}\norm{\rvvx}_2^2
\]
holds.
Also, sampling without replacement achieves \(b_{\mathrm{eff}} = \frac{(n-1) b}{n-b}\).
Therefore, we have
\begin{align*}
      &\mathrm{tr}\Vsub{\rvB \sim \pi}{ \nabla f_{\rvB}\left(\vx\right) - \nabla f_{\rvB}\left(\vx_*\right)  }
      \\
      &\;=
      \frac{n - b}{b \left(n - 1\right)}  
      \mathrm{tr}\V{
        \nabla f_{i}\left(\vx\right) - \nabla f_{i}\left(\vx_*\right) 
      }
      \\
      &\;\leq
      \frac{n - b}{b \left(n - 1\right)} 
      \left(
        \frac{1}{n} \sum_{i=1}^n \norm{ \nabla f_{i}\left(\vx\right) - \nabla f_{i}\left(\vx_*\right) }_2^2
      \right),
\shortintertext{and from \cref{assumption:components},}
      &\;=
      \frac{n - b}{b \left(n - 1\right)} 
      \left(
        \frac{1}{n} \sum_{i=1}^n 2 L_i \mathrm{D}_{f_i}\left(\vx, \vx_* \right)
      \right).
\shortintertext{Using the bound \(L_{\mathrm{max}} \geq L_i\) for all \(i = 1, \ldots, n\),} 
      &\;\leq
      2 L_{\mathrm{max}} \frac{n - b}{b \left(n - 1\right)} \left( \frac{1}{n} \sum_{i=1}^n  \mathrm{D}_{f_i}\left(\vx, \vx_* \right) \right),
\shortintertext{applying \cref{thm:average_bregman},} 
      &\;=
      2 L_{\mathrm{max}} \frac{n - b}{b \left(n - 1\right)}  \mathrm{D}_{F}\left(\vx, \vx_* \right),
\shortintertext{and since \(\vx_*\) is a stationary point of \(F\),} 
      &\;=
      2 \frac{n - b}{b \left(n - 1\right)} L_{\mathrm{max}} \left(F\left(\vx\right) - F\left(\vx_*\right) \right).
\end{align*}
\end{proofEnd}

\begin{theoremEnd}[category=stronglyconvexsgdsamplingwithoutreplacement]{corollary}\label{thm:strongly_convex_sgd_sampling_without_replacement}
Let the objective \(F\) satisfy \cref{assumption:objective,assumption:components}, the global optimum \(\vx_* = \argmin_{\vx \in \mathcal{X}} F\left(\vx\right)\) be a stationary point of \(F\), the component gradient estimators \(\rvvg_1, \ldots, \rvvg_n\) satisfy \cref{assumption:montecarlo_er} (B) and \labelcref{assumption:bounded_variance_both}, and \(\pi\) be \(b\)-minibatch sampling without replacement.
Then the last iterate of SGD with \(\rvvg_{\rvB}\) is \(\epsilon\)-close to \(\vx_*\) as \(\mathbb{E}\norm{\vx_T - \vx_*}_2^2 \leq \epsilon\) if
{%
\setlength{\abovedisplayskip}{.5ex} \setlength{\abovedisplayshortskip}{.5ex}
\setlength{\belowdisplayskip}{.5ex} \setlength{\belowdisplayshortskip}{.5ex}
\begin{align*}
    T \geq 2 \max\left(C_{\mathrm{var}} \frac{1}{\epsilon},\; C_{\mathrm{bias}}\right)
    \log\left(2 {\lVert \vx_0 - \vx_* \rVert}_2^2 \frac{1}{\epsilon} \right)
\end{align*}
}%
for some fixed stepsize where
{%
\setlength{\abovedisplayskip}{.5ex} \setlength{\abovedisplayshortskip}{.5ex}
\setlength{\belowdisplayskip}{.5ex} \setlength{\belowdisplayshortskip}{.5ex}
\begin{align*}
  C_{\mathrm{var}} &=
  \frac{2}{b} \left( {\frac{1}{n} \sum^n_{i=1} \frac{\sigma_{i}^2}{\mu^2} }\right) 
  + 
  { 2\left( \frac{1}{n} \sum^n_{i=1} \frac{\sigma_{i}}{\mu} \right)}^2 
  + \frac{2}{b} \frac{\tau^2}{\mu^2},
  \\
  C_{\mathrm{bias}} &=
  \frac{2}{b} \left( {\frac{1}{n} \sum^n_{i=1} \frac{\mathcal{L}_i}{\mu} }\right) 
  + 
  {\textstyle 2 \left( \frac{1}{n} \sum^n_{i=1} \sqrt{\frac{\mathcal{L}_i}{\mu}} \right)}^2 + \frac{2}{b} \frac{L}{\mu}.
\end{align*}
}%
\end{theoremEnd}
\vspace{-1.5ex}
\begin{proofEnd}
From \cref{assumption:components} and the assumption that \(\vx_*\) is a stationary point, \cref{thm:expected_residual_without_replacement} establishes that \(\nabla f_{\rvB}\) satisfies the \(\mathrm{ER}\left(\mathcal{L}_{\mathrm{sub}}\right)\) holds with 
\[
  \mathcal{L}_{\mathrm{sub}} = \frac{n-b}{(n-1) b} L_{\mathrm{max}}.
\]
Therefore, \cref{assumption:subsampling_er} holds.
Furthermore, since the component gradient estimators satisfy \cref{assumption:montecarlo_er} (B) and \cref{assumption:correlation} always hold with \(\rho = 1\), we can apply \cref{thm:general_conditions} which estblishes that \(\rvvg_{\rvB}\) satisfies \(\mathrm{ER}\left(\mathcal{L}\right)\) with 
\begin{align*}
  \mathcal{L}
  &=
  \frac{n-b}{(n-1) b} \left( \frac{1}{n} \sum^n_{i=1} \mathcal{L}_i \right) 
  + 
   \frac{n (b-1)}{(n-1) b} {\left( \frac{1}{n} \sum^n_{i=1} \sqrt{\mathcal{L}_i} \right)}^2 
  \\
  &\quad+ \frac{n-b}{(n-1) b} L_{\mathrm{max}}.
\end{align*}
Furthermore, under \cref{assumption:bounded_variance_both}, \cref{thm:bounded_variance} shows that \(\mathrm{BV}\left(\sigma^2\right)\) holds with 
\begin{align*}
  \sigma^2
  &=
  \frac{n-b}{(n-1) b} \left(\frac{1}{n} \sum^n_{i=1} \sigma^2_{i}\right) 
  + \frac{n (b-1)}{(n-1) b} {\left(\frac{1}{n} \sum^n_{i=1} \sigma_{i}\right)}^2
  \\
  &\quad+ \frac{n-b}{(n-1) b} \tau^2.
\end{align*}
Since both \(\mathrm{ER}\left(\mathcal{L}\right)\) and \(\mathrm{BV}\left(\sigma^2\right)\) hold and \(F\) satisfies \cref{assumption:objective}, we can now invoke \cref{thm:strongly_convex_sgd_complexity}, which guarantees that we can obtain an \(\epsilon\)-accurate solution after
\[
    T \geq 2 \max\Bigg( \underbrace{\frac{\sigma^2}{\mu^2}}_{C_{\mathrm{var}}} \frac{1}{\epsilon}, \underbrace{\frac{\mathcal{L} + L}{\mu}}_{C_{\mathrm{bias}}} \Bigg) \log\left(2 \norm{\vx_0 - \vx_*}_2^2 \frac{1}{\epsilon} \right)
\]
iterations and fixed stepsize of
\[
    \gamma = \min\left( \frac{\epsilon \mu}{2 \sigma^2}, \frac{1}{2 \left(\mathcal{L} + L\right)} \right).
\]

The constants in the lower bound on the number of required iterations can be made more precise as
{%
\setlength{\abovedisplayskip}{1ex} \setlength{\abovedisplayshortskip}{1ex}
\setlength{\belowdisplayskip}{1ex} \setlength{\belowdisplayshortskip}{1ex}
\begin{align*}
  C_{\mathrm{var}} &= 
  {
  \frac{n-b}{(n-1) b} \left( {\frac{1}{n} \sum^n_{i=1} \frac{\sigma_{i}^2}{\mu^2} }\right) 
  }
  \\
  &\quad+ 
  {
  \frac{n (b-1)}{(n-1) b} {\left( \frac{1}{n} \sum^n_{i=1} \frac{\sigma_{i}}{\mu} \right)}^2 + \frac{n-b}{(n-1)b} \frac{\tau^2}{\mu^2}
  }
  \\
  C_{\mathrm{bias}} &= 
  {
  \frac{n-b}{(n-1) b} \left( {\frac{1}{n} \sum^n_{i=1} \frac{\mathcal{L}_i}{\mu} }\right) 
  }
  \\
  &\quad+ 
  {
  \frac{n (b-1)}{(n-1) b} {\left( \frac{1}{n} \sum^n_{i=1} \sqrt{\frac{\mathcal{L}_i}{\mu}} \right)}^2 + \frac{n-b}{(n-1) b} \frac{L}{\mu}.
  }
\end{align*}
Using the fact that \((n-b)/n \leq (n-1)/n \leq 2\) for all \(n \geq 2\) yields the simplified constants in the statement.
}%
\end{proofEnd}

% Furthermore, the special case where we use \(m\)-sample Monte Carlo can be stated as follows:
% \begin{corollary}
% Let the assumptions of \cref{thm:strongly_convex_sgd_sampling_without_replacement} hold.
% Then, for \(1 < b < n\) with \(m\)-sample Monte Carlo, the same guarantees hold with the constants
% {%
% \setlength{\abovedisplayskip}{1ex} \setlength{\abovedisplayshortskip}{1ex}
% \setlength{\belowdisplayskip}{1ex} \setlength{\belowdisplayshortskip}{1ex}
% \begin{align*}
%   C_{\mathrm{var}} &= 
%   {\textstyle
%   \frac{2}{m b} \left( {\frac{1}{n} \sum^n_{i=1} \frac{\sigma_{\mathrm{mc},i}^2}{\mu^2} }\right) 
%   + 
%   \frac{2}{m} {\textstyle\left( \frac{1}{n} \sum^n_{i=1} \frac{\sigma_{\mathrm{mc},i}}{\mu} \right)}^2 + \frac{2}{b} \frac{\sigma_{\mathrm{sub}}^2}{\mu^2}
%   }
%   \\
%   C_{\mathrm{bias}} &= 
%   {\textstyle
%   \frac{2}{m b} \left( {\textstyle\frac{1}{n} \sum^n_{i=1} \frac{\mathcal{L}^{\mathrm{mc}}_i}{\mu} }\right) 
%   + 
%   \frac{2}{m} {\textstyle\left( \frac{1}{n} \sum^n_{i=1} \sqrt{\frac{\mathcal{L}^{\mathrm{mc}}_i}{\mu}} \right)}^2 + \frac{2}{b} \frac{L}{\mu}.
%   }
% \end{align*}
% }%
% \end{corollary}

\subsection{Random Reshuffling of Stochastic Gradients}
We now move to our analysis of SGD with random reshuffling (SGD-RR).
In the doubly stochastic setting, this corresponds to reshuffling over stochastic estimators instead of gradients, which we will denote as doubly SGD-RR.
In practice, doubly SGD-RR is often observed to converge faster than doubly SGD, even when dependent estimators are used.

\vspace{-.5ex}
\subsubsection{Algorithm}
\vspace{-0.5ex}
\paragraph{Doubly SGD-RR}
The algorithm is stated as follows:
\begin{enumerate}
\setlength\itemsep{-.5ex}
\vspace{-2ex}
    \item[\ding{182}] Reshuffle and partition the gradient estimators into minibatches of size \(b\) as \(\rvP = \{\rvP_1, \ldots, \rvP_p\}\), where \(p = n/b\) is the number of partitions or minibatches.
    \item[\ding{183}] Perform gradient descent for \(i = 1, \ldots, p\) steps as
{%
\setlength{\abovedisplayskip}{1ex} \setlength{\abovedisplayshortskip}{1ex}
\setlength{\belowdisplayskip}{1ex} \setlength{\belowdisplayshortskip}{1ex}
\[\vx_{k}^{i+1} = \Pi_{\mathcal{X}}\left( \vx_k^{i} - \gamma \rvvg_{\rvP_i}\,(\vx_k^i) \right)\]
}
    \item[\ding{184}] \(k \leftarrow k + 1\) and go back to step \ding{182}.
\vspace{-2ex}
\end{enumerate}
(We assume \(n\) is an integer multiple of \(b\) for clarity.)
Here, \(i = 1, \ldots, p\) denotes the step within the epoch, \(k = 1, \ldots, K\) denotes the epoch number.

\vspace{-.5ex}
\subsubsection{Proof Sketch}
\vspace{-.5ex}
\paragraph{Why SGD-RR is Faster}
A key aspect of random reshuffling in the finite sum setting (SGD-RR) is that it uses conditionally biased gradient estimates.
Because of this, on strongly convex finite sums, \citet{mishchenko_random_2020} show that the Lyapunov function for random reshuffling is not the usual \({\lVert \vx_k^i - \vx_* \rVert}_2^2\), but some \textit{biased} Lyapunov function \({\lVert \vx_k^i - \vx_*^k \rVert}_2^2\), where the reference point is 
{%
\setlength{\abovedisplayskip}{1ex} \setlength{\abovedisplayshortskip}{1ex}
\setlength{\belowdisplayskip}{1ex} \setlength{\belowdisplayshortskip}{1ex}
\begin{equation}
{\textstyle
  \vx_*^i \triangleq \Pi_{\mathcal{X}}\left( \vx_* - \gamma \sum^{i-1}_{j=0} \nabla f_{\rvP_i}\left(\vx_*\right) \right).
}
\label{eq:lyapunov_bias}
\end{equation}
}%
Under this definition, the convergence rate of SGD is not determined by the gradient variance anymore; it is determined by the squared error of the Lyapunov reference point, \({\lVert \vx_*^i - \vx_* \rVert}_2^2\).
There are two key properties of this quantity:
\begin{itemize}
\setlength\itemsep{0ex}
\vspace{-2ex}
    \item The peak mean-squared error decreases at a rate of \(\gamma^2\) with respect to the stepsize \(\gamma\).
    \item The squared error is 0 at the following two endpoints: beginning of the epoch and at the end of the epoch.
\vspace{-2ex}
\end{itemize}
For some stepsize achieving a \(\mathcal{O}(1/T)\) rate on SGD, these two properties combined result in SGD-RR attaining a \(\mathcal{O}(1/T^2)\) rate at exactly the end of each epoch.

\begin{figure*}
    \vspace{-1ex}
\centering
    \subfloat[Low Hetero. (\(s = 1\))]{
        \begin{tikzpicture}
\begin{axis}[
    tuftelike,
    width=0.26\textwidth,
    xmode=log,
    log basis x={2},
    ymax   = 2.0,
    ymin   = 0.0,
    xlabel = \(b\),
    ylabel = gradient variance,
    xmin={1},
    xmax={1024},
    xtick={1,4,16,64,256,1024},
    scaled x ticks=false,
    axis line style = thick,
    axis x line shift=2ex,
    axis y line shift=2ex,
    every tick/.style={black,thick},
    tick label style={font=\small},  
    legend style={
        draw=none,
        anchor=north east,
        at={(1.05,1.05)},
    },
    legend cell align=left,
]

	\addplot[color=cherry6, very thick] coordinates {
(1,1.8859556618621411)
(2,0.9436282183769147)
(4,0.47342045009396055)
(8,0.24022847287180202)
(16,0.12745629809935952)
(32,0.07871783839041192)
(64,0.06964386389048541)
(128,0.09569738734961672)
(256,0.1699051704973715)
(512,0.32937110490762717)
(1024,0.6538281577855116)
    };
    \addlegendentry{\footnotesize\(mb=1024\)}
    
	\addplot[color=cherry4, very thick] coordinates {
(1,1.899030949947676)
(2,0.9611646226075259)
(4,0.4998790866147244)
(8,0.28453157397287104)
(16,0.20744832836103888)
(32,0.23008772697331195)
(64,0.3637694691158268)
(128,0.6753344258598407)
    };
    \addlegendentry{\footnotesize\(mb=128\)}
    
	\addplot[color=cherry2, very thick] coordinates {
(1,2.0036332546319557)
(2,1.1014558564524162)
(4,0.7115481787808353)
(8,0.6389563827814232)
(16,0.8473845704544738)
    };
    \addlegendentry{\footnotesize\(mb=16\)}
\end{axis}
\end{tikzpicture}
        \vspace{-1ex}
    }\hspace{2em}
    \subfloat[Mid. Hetero. (\(s = 2\))]{
        \begin{tikzpicture}
\begin{axis}[
    tuftelike,
    width=0.26\textwidth,
    xmode=log,
    log basis x={2},
    ymax   = 6.0,
    ymin   = 0.0,
    xlabel = \(b\),
    xmin={1},
    xmax={1024},
    xtick={1,4,16,64,256,1024},
    scaled x ticks=false,
    axis line style = thick,
    axis x line shift=2ex,
    axis y line shift=2ex,
    every tick/.style={black,thick},
    tick label style={font=\small},  
]

	\addplot[color=cherry6, very thick] coordinates {
(1,7.538218952554764)
(2,3.7669972716945392)
(4,1.8823423847240863)
(8,0.9419268481581784)
(16,0.4755428937138612)
(32,0.24999854416897627)
(64,0.15252162475108105)
(128,0.13437367575122802)
(256,0.18648072266949062)
(512,0.3348962889650002)
(1024,0.6538281577855116)
    };
    % \addlegendentry{\(mb=1024\)}

	\addplot[color=cherry4, very thick] coordinates {
(1,7.551294240640298)
(2,3.7845336759251507)
(4,1.9088010212448503)
(8,0.9862299492592475)
(16,0.5555349239755406)
(32,0.4013684327518763)
(64,0.4466472299764224)
(128,0.714010714261452)
    };
    % \addlegendentry{\(mb=64\)}
    
	\addplot[color=cherry2, very thick] coordinates {
(1,7.655896545324579)
(2,3.9248249097700407)
(4,2.120470113410961)
(8,1.3406547580677997)
(16,1.1954711660689754)
    };
\end{axis}
\end{tikzpicture}
        \vspace{-1ex}
    }\hspace{2em}
    \subfloat[High Hetero. (\(s = 4\))]{
        \begin{tikzpicture}
\begin{axis}[
    tuftelike,
    width=0.26\textwidth,
    xmode=log,
    log basis x={2},
    ymax   = 30.0,
    ymin   = 0.0,
    xlabel = \(b\),
    xmin={1},
    xmax={1024},
    xtick={1,4,16,64,256,1024},
    scaled x ticks=false,
    axis line style = thick,
    axis x line shift=2ex,
    axis y line shift=2ex,
    every tick/.style={black,thick},
    tick label style={font=\small},  
]

	\addplot[color=cherry6, very thick] coordinates {
(1,30.147272115325254)
(2,15.060473484965039)
(4,7.518030123244589)
(8,3.7487203493036843)
(16,1.867889276171868)
(32,0.9351213672832336)
(64,0.48403266819346363)
(128,0.2890788293576732)
(256,0.25278293135796714)
(512,0.3569970251944924)
(1024,0.6538281577855116)
    };
    
	\addplot[color=cherry4, very thick] coordinates {
(1,30.16034740341079)
(2,15.07800988919565)
(4,7.544488759765353)
(8,3.7930234504047533)
(16,1.9478813064335474)
(32,1.0864912558661337)
(64,0.7781582734188051)
(128,0.8687158678678972)
    };
    
	\addplot[color=cherry2, very thick] coordinates {
(1,30.264949708095067)
(2,15.21830112304054)
(4,7.756157851931464)
(8,4.147448259213306)
(16,2.5878175485269823)
    };
\end{axis}
\end{tikzpicture}
        \vspace{-1ex}
    }
    \vspace{-1ex}
    \caption{
      \textbf{Trade-off between \(b\) and \(m\) on the gradient variance \(\mathrm{tr}\mathbb{V}\rvvg\left(\vx_*\right)\) under varying budgets \(m  \times b\)}.
      The problem is a finite sum of \(d = 10\), \(n=1024\) isotropic quadratics with smoothness constants sampled as \(L_i \sim \text{Inv-Gamma}(1/2, 1/2)\) and stationary points sampled as \(\vx^*_i \sim \mathcal{N}\left(\boldupright{0}_d, s^2 \boldupright{I}_d\right)\), where the gradient has additive noise of \(\rvveta \sim \mathcal{N}\left(\boldupright{0}_d, \boldupright{I}_d\right)\).
      Larger \(s\) means more heterogeneous data.
    }\label{fig:simulation}
    \vspace{-2ex}
\end{figure*}

\vspace{-1.5ex}
\paragraph{Is doubly SGD-RR as Fast as SGD-RR?}
Unfortunately, doubly SGD-RR does not achieve the same rate as SGD-RR.
Since stochastic gradients are used in addition to reshuffling, doubly SGD-RR deviates from the path that minimizes the biased Lyapunov function.
Still, doubly SGD-RR does have provable benefits.

\subsubsection{Complexity Analysis}
We provide the general complexity guarantee for doubly SGD-RR on strongly convex objectives with \(\mu\)-strongly convex components and fully correlated component estimators (\(\rho=1\)):

\begin{theoremEnd}[all end, category=reshufflevariance]{lemma}\label{thm:reshuffle_variance}
For any \(b\)-minibatch reshuffling strategy, the squared error of the reference point of the Lyapunov function (\cref{eq:lyapunov_bias}) under reshuffling is bounded as
\begin{align*}
    \mathbb{E} {\lVert\vx_{*}^{i} - \vx_*\rVert}_2^2
    \leq
    \frac{\gamma^2 n }{4 b^2} \, \tau^2
\end{align*}
for all \(i = 1, \ldots, p\), where \(\vx_* \in \argmin_{\vx \in \mathcal{X}} F\left(\vx\right)\).
\end{theoremEnd}
\begin{proofEnd}
The proof is a generalization of \citet[Proposition 1]{mishchenko_random_2020}, where we sample \(b\)-minibatches instead of single datapoints.
Recall that \(\rvP\) denotes the (possibly random) partitioning of the \(n\) datapoints into \(b\)-minibatches \(\rvP_1, \ldots, \rvP_p\).
From the definition of the squared error of the Lyapunov function in \cref{eq:lyapunov_bias}, we have
\begin{align*}
    &\E{ {\lVert\vx_{*}^{i} - \vx_*\rVert}_2^2 }
    \\
    &\;=
    \E{ \norm{ \Pi_{\mathcal{X}}\left( \vx_* - \sum_{k=0}^{i-1} \gamma \nabla f_{\rvP_i}\left(\vx_*\right) \right) - \Pi_{\mathcal{X}}\left(\vx_*\right) }_2^2 },
\shortintertext{and since the projection onto a convex set under a Euclidean metric is non-expansive,}
    &\;\leq
    \E{ \norm{ \vx_* - \sum_{k=0}^{i-1} \gamma \nabla f_{\rvP_i}\left(\vx_*\right) - \vx_* }_2^2 }
    \\
    &\;=
    \E{ 
    \norm{
      \sum_{k=0}^{i-1} \gamma \nabla f_{\rvP_i}\left(\vx_*\right)
    }_2^2
    },
\shortintertext{introducing a factor of \(i\) in and out of the squared norm,}
    &\;=
    \frac{i^2}{2} \, 
    \E{ 
      \norm{
      \frac{1}{i}
      \sum_{k=0}^{i-1} \gamma \nabla f_{\rvP_i}\left(\vx_*\right)
      }_2^2
    }
    \\
    &\;=
    \frac{\gamma^2  i^2}{2} \, 
    \E{ \norm{
      \frac{1}{i}
      \sum_{k=0}^{i-1} \nabla f_{\rvP_i}\left(\vx_*\right)
    }_2^2 }.
\end{align*}
Now notice that \(\frac{1}{i} \sum^{i-1}_{j=0} \nabla f_{\rvP_i} \left(\vx_*\right)\) is a sample average of \(i b\) samples drawn without replacement.
Therefore, it is an unbiased estimate of \(\nabla F\left(\vx_*\right)\).
This implies 
\begin{align*}
    \E{ {\lVert\vx_{*}^{i} - \vx_*\rVert}_2^2 }
    &=
    \frac{\gamma^2  i^2}{2} \, 
    \E{ \norm{
      \frac{1}{i}
      \sum_{k=0}^{i-1} \nabla f_{\rvP_i}\left(\vx_*\right)
    }_2^2 }
    \\
    &=
    \frac{\gamma^2  i^2}{2} \, 
    \mathrm{tr}\V{ 
      \frac{1}{i}
      \sum_{k=0}^{i-1} \nabla f_{\rvP_i}\left(\vx_*\right)
    },
\shortintertext{and from \cref{thm:variancewithoutreplacement} with a sample size of \(ib\), } 
    &=
    \frac{\gamma^2  i^2}{2} \, 
    \frac{n - ib}{\left(n-1\right) ib}
    \frac{1}{n}  \sum^n_{i=1} \norm{\nabla f_{i}\left(\vx_*\right)}_2^2
    \\
    &=
    \frac{\gamma^2 i \left(\frac{n}{b} - i\right)}{2 \left(n-1\right)} \tau^2.
\end{align*}
Notice that this is a quadratic with respect to \(i\), where the maximum is obtained by \(i = \nicefrac{n}{2b}\).
Then,
\begin{align*}
    \E{ {\lVert\vx_{*}^{i} - \vx_*\rVert}_2^2 }
    &\leq
    \frac{\gamma^2 {\left(\frac{n}{2b}\right)}^2 }{2 \left(n-1\right)} \tau^2
    \\
    &=
    \frac{\gamma^2 n^2 }{8 b^2 \left(n-1\right)} \tau^2,
\shortintertext{and using the bound \(n/(n-1) \leq 2\) for all \(n \geq 2\),} 
    &\leq
    \frac{\gamma^2 n }{4 b^2} \, \tau^2.
\end{align*}

\end{proofEnd}

\begin{theoremEnd}[all end, category=expectedsmoothnessreshuffling]{lemma}\label{thm:reshuffling_expected_smoothness}
Let the objective function satisfy \cref{assumption:components}, \(B\) be any \(b\)-minibatch of indices such that \(B \subseteq \{1, \ldots, n\}\) and the component gradient estimators \(\rvvg_1, \ldots, \rvvg_n\) satisfy \cref{assumption:montecarlo_er} (\(\rm{A}^{\rm{CVX}}\)).
Then, \(\rvvg_{B}\) is convex-smooth in expectation such that
\begin{align*}
  \mathbb{E}_{\varphi}\norm{\rvvg_{B}\left(\vx\right) - \rvvg_{B}\left(\vx_*\right)}_2^2
  \leq
  2 \left( \mathcal{L}_{\rm{max}} + L_{\mathrm{max}} \right) \mathrm{D}_{f_{B}} \left(\vx, \vx_*\right),
\end{align*}
for any \(\vx \in \mathcal{X}\), where
\begin{alignat*}{3}
   \vx_* &= \argmin_{\vx \in \mathcal{X}} F\left(\vx\right),  \\
   \mathcal{L}_{\rm{max}} &= \max\left\{\mathcal{L}_{1}, \ldots, \mathcal{L}_{n} \right\}, \\
   L_{\mathrm{max}} &= \max\left\{L_1, \ldots, L_n\right\}.
\end{alignat*}
\end{theoremEnd}
\begin{proofEnd}
Notice that, for this Lemma, we do not assume that the minibatch \(B\) is a random variable.
Therefore, the only randomness is the stochasticity of the component gradient estimators \(\rvvg_1, \ldots, \rvvg_n\).

Now, from the property of the variance, we can decompose the expected squared norm as
\begin{align*}
  &\mathbb{E} {\lVert \rvvg_B\left(\vx\right) - \rvvg_B\left(\vx_*\right)\rVert}_2^2
  \\
  &\;=
  \underbrace{
  \mathrm{tr}\Vsub{\varphi}{
    \rvvg_B\left(\vx\right) - \rvvg_B\left(\vx_*\right)
  }
  }_{V_{\rm{com}}}
  +
  \underbrace{
  {\lVert 
  \nabla f_{B} \left(\vx\right) - \nabla f_{B}\left(\vx_*\right)
  \rVert}_2^2
  }_{V_{\mathrm{sub}}}.
\end{align*}

First, the contribution of the variances of the component gradient estimators follows as
\begin{align}
  V_{\rm{com}}
  &=
  \mathrm{tr}\Vsub{\varphi}{
    \rvvg\left(\vx\right) - \rvvg\left(\vx_*\right)
  }
  \nonumber
  \\
  &=
  \mathrm{tr}\Vsub{\varphi}{
    \frac{1}{b} \sum_{i \in B} \rvvg_{i}\left(\vx\right) - \rvvg_{i}\left(\vx_*\right)
  },
  \nonumber
\shortintertext{applying \cref{eq:varianceofsumeq2} of \cref{thm:varianceofsum},} 
  &\leq
  \frac{1}{b} \sum_{i \in B}
  \mathrm{tr}\Vsub{\varphi}{
      \rvvg_{i}\left(\vx\right) - \rvvg_{i}\left(\vx_*\right)
  },
  \label{eq:loose_bound_for_reshuffling}
\shortintertext{and then \cref{assumption:montecarlo_er} (\(\rm{A}^{\rm{CVX}}\)),}
  &\;\leq
  \frac{1}{b} \sum_{i \in B} 2 \mathcal{L}_{i} \, \mathrm{D}_{f_i}\left(\vx, \vx_*\right).
  \nonumber
\shortintertext{Now, since \(\mathcal{L}_{\mathrm{max}} \geq \mathcal{L}_i\) for all \(i= 1, \ldots, n\),}
  &\;\leq
  2 \mathcal{L}_{\rm{max}} \frac{1}{b} \sum_{i \in B} \mathrm{D}_{f_i}\left(\vx, \vx_*\right)
  \nonumber
  \\
  &\;=
  2 \mathcal{L}_{\rm{max}} \mathrm{D}_{f_{B}}\left(\vx, \vx_*\right).
  \nonumber
\end{align}

On the other hand, the squared error of subsampling (it is not the variance since we do not take expectation over the batches) follows as
\begin{align*}
  V_{\mathrm{sub}}
  &=
  {\lVert 
  \nabla f_{\rvB} \left(\vx\right) - \nabla f_{\rvB}\left(\vx_*\right)
  \rVert}_2^2
  \\
  &=
  \norm{ 
    \frac{1}{b} \sum_{i \in B} \nabla f_{i} \left(\vx\right) - \nabla f_{i}\left(\vx_*\right)
  }_2^2,
\shortintertext{by Jensen's inequality,}
  &\leq
  \frac{1}{b} \sum_{i \in B} 
  {\lVert 
    \nabla f_{i} \left(\vx\right) - \nabla f_{i}\left(\vx_*\right)
  \rVert}_2^2,
\shortintertext{from \cref{assumption:components},}
  &\leq
  \frac{1}{b} \sum_{i \in B} 
  2 L_{i} \mathrm{D}_{f_i}\left(\vx, \vx_*\right)
\shortintertext{and since \(L_{\mathrm{max}} \geq L_i\) for all \(i= 1, \ldots, n\),}
  &\leq
  2 L_{\mathrm{max}} \frac{1}{b} \sum_{i \in B} \mathrm{D}_{f_i}\left(\vx, \vx_*\right)
  \\
  &=
  2 L_{\mathrm{max}} \mathrm{D}_{f_B}\left(\vx, \vx_*\right).
\end{align*}
Combining the bound on \(V_{\rm{com}}\) and \(V_{\rm{sub}}\) immediately yields the result.
\end{proofEnd}

\begin{theoremEnd}[all end, category=stronglyconvexreshufflingsgdconvergence]{theorem}\label{thm:strongly_convex_reshuffling_sgd_convergence}
  Let the objective \(F\) satisfy \cref{assumption:objective,assumption:components}, where, each component \(f_i\) is additionally \(\mu\)-strongly convex and 
  \cref{assumption:montecarlo_er} (\(\rm{A}^{\rm{CVX}}\)), \labelcref{assumption:bounded_variance_both} hold.
  Then, the last iterate \(\vx_T\) of doubly SGD-RR with a stepsize satisfying \(\gamma < 1/\left(\mathcal{L}_{\mathrm{max}} + L_{\mathrm{max}}\right)\) guarantees 
  \begin{align*}
    &\mathbb{E}{\lVert \vx_{K+1}^{0} - \vx_* \rVert}_2^2
    \leq
    r^{K p}
    {\lVert \vx_1^{0} - \vx_* \rVert}_2^2
    +
    C_{\mathrm{var}}^{\rm{sub}} \, \gamma^2 
    +
    C_{\mathrm{var}}^{\rm{com}} \, \gamma
  \end{align*}
  where \(p = n/b\) is the number of epochs, \(\vx_* = \argmin_{\vx \in \mathcal{X}} F\left(\vx\right)\), \(r =  1 - \gamma \mu\) is the contraction coefficient,
\begin{alignat*}{3}
    C_{\mathrm{var}}^{\rm{com}} &= 
    {
    \frac{4}{\mu b} {\left( \frac{1}{n} \sum^n_{i=1} \sigma^2_{i}  \right)}
    + 
    \frac{4}{\mu}
    \left(\frac{1}{n} \sum^n_{i=1} \sigma_{i}\right)}^2, \text{ and}
    \\
    C_{\mathrm{var}}^{\rm{sub}} &= \frac{1}{4} \frac{L_{\mathrm{max}}}{\mu} \frac{n}{b^2} \left( \frac{1}{n} \sum^n_{i=1} \norm{\nabla f_i\left(\vx_*\right)}_2^2 \right).
\end{alignat*}
\end{theoremEnd}
\begin{proofEnd}
The key element of the analysis of random reshuffling is that the Lyapunov function that achieves a fast convergence is \({\lVert \vx_k^{i+1} - \vx^{i+1}_{*} \rVert}_2^2\) not \({\lVert \vx_{k}^{i+1} - \vx_* \rVert}_2^2\).
This stems from the well-known fact that random reshuffling results in a conditionally biased gradient estimator.

Recall that \(\rvP\) denotes the partitioning of the \(n\) datapoints into \(b\)-minibatches \(\rvP_1, \ldots, \rvP_p\).
As usual, we first expand the Lyapunov function as
\begin{align*}
    &{\lVert \vx^{i+1}_k - \vx_*^{i+1}\rVert}_2^2
    \\
    &\;=
    {\lVert \Pi_{\mathcal{X}}(\vx^{i}_k - \gamma \, \rvvg_{\rvP_i}(\vx^i_k)) - \Pi_{\mathcal{X}}(\vx_*^i - \gamma \, \nabla f_{\rvP_i}\left(\vx_*\right) ) \rVert}_2^2
\shortintertext{and since the projection onto a convex set under a Euclidean metric is non-expansive,}
    &\;\leq
    {\lVert (\vx^{i}_k - \gamma \, \rvvg_{\rvP_i}(\vx^i_k)) - (\vx_*^i - \gamma \, \nabla f_{\rvP_i}\left(\vx_*\right) ) \rVert}_2^2
    \\
    &\;=
    {\lVert \vx^{i}_k -  \vx_*\rVert}_2^2
    -2 \gamma \inner{
      \vx^{i}_k -  \vx_*^i \;
    }{
       \rvvg_{\rvP_i}(\vx^i_k) - \nabla f_{\rvP_i} \left(\vx_*\right)
    }
    \\
    &\qquad+
    \gamma^2 {\lVert \rvvg_{\rvP_i}(\vx^i_k) - \nabla f_{\rvP_i}\left(\vx_*\right)\rVert}_2^2.
\end{align*}

Taking expectation over the Monte Carlo noise conditional on the partitioning \(\rvP\),
\begin{align*}
    &\mathbb{E}_{\varphi}{\lVert \vx^{i+1}_k - \vx_*^{i+1}\rVert}_2^2
    \\
    &\;=
    {\lVert \vx^{i}_k -  \vx_*^i\rVert}_2^2
    -2 \gamma \inner{
      \vx^{i}_k -  \vx_*^i \;
    }{
    \mathbb{E}_{\varphi}[\rvvg_{\rvP_i}(\vx^i_k)] - \nabla f_{\rvP_i} \left(\vx_*\right)
    }
    \\
    &\qquad+
    \gamma^2 
    \mathbb{E}_{\varphi}
    {\lVert \rvvg_{\rvP_i}(\vx^i_k) - \nabla f_{\rvP_i} \left(\vx_*\right) \rVert}
    \\
    &\;=
    {\lVert \vx^{i}_k -  \vx_*^i \rVert}_2^2
    -2 \gamma 
    \inner{
      \vx^{i}_k -  \vx_*^i \;
    }{
      \nabla f_{\rvP_i}(\vx_k^i) - \nabla f_{\rvP_i}\left(\vx_*\right)
    }
    \\
    &\qquad+
    \gamma^2 \,
    \mathbb{E}_{\varphi} {\lVert \rvvg_{\rvP_i}(\vx^i_k) - \nabla f_{\rvP_i} \left(\vx_*\right) \rVert}_2^2.
\end{align*}

From the three-point identity, we can more precisely characterize the effect of the conditional bias such that
\begin{align*}
    &\inner{
      \vx^{i}_k -  \vx_*^i \;
    }{
      \nabla f_{\rvP_i}(\vx_k^i) - \nabla f_{\rvP_i}\left(\vx_*\right)
    }
    \\
    &=
    \mathrm{D}_{f_{\rvP_i}}(\vx_*^i, \vx_k^i)
    +
    \mathrm{D}_{f_{\rvP_i}}(\vx_k^i, \vx_*)
    -
    \mathrm{D}_{f_{\rvP_i}}(\vx_*^i, \vx_*).
\end{align*}

For the gradient noise, 
\begin{align*}
  &\mathbb{E}_{\varphi} {\lVert \rvvg_{\rvP_i}(\vx^i_k) - \nabla f_{\rvP_i} \left(\vx_*\right) \rVert}_2^2
  \\
  &\;=
  \mathbb{E}_{\varphi} {\lVert 
    \rvvg_{\rvP_i}(\vx^i_k) - \rvvg_{\rvP_i} \left(\vx_*\right) + \rvvg_{\rvP_i}\left(\vx_*\right) - \nabla f_{\rvP_i} \left(\vx_*\right) 
  \rVert}_2^2
  \\
  &\;\leq
  2 \mathbb{E}_{\varphi} {\lVert 
    \rvvg_{\rvP_i}(\vx^i_k) - \rvvg_{\rvP_i} \left(\vx_*\right) 
  \rVert}_2^2
  +
  2 
  \mathbb{E}_{\varphi} {\lVert 
    \rvvg_{\rvP_i}\left(\vx_*\right) - \nabla f_{\rvP_i} \left(\vx_*\right) 
  \rVert}_2^2
  \\
  &\;=
  2 \,
  \mathbb{E}_{\varphi} 
  {\lVert 
    \rvvg_{\rvP_i}(\vx^i_k) - \rvvg_{\rvP_i} \left(\vx_*\right) 
  \rVert}_2^2
  +
  2 \,
  \mathrm{tr} \Vsub{\varphi}{ \rvvg_{\rvP_i}\left(\vx_*\right) },
\shortintertext{and from \cref{thm:reshuffling_expected_smoothness},}
  &\;\leq
  4 \left( \mathcal{L}_{\mathrm{max}} + L_{\mathrm{max}} \right) \mathrm{D}_{f_{\rvP_i}}(\vx_k^i, \vx_*)
  +
  2 \mathrm{tr} \Vsub{\varphi}{ \rvvg_{\rvP_i}\left(\vx_*\right) }
\end{align*}
Notice the variance term \(\mathrm{tr} \Vsub{\varphi}{ \rvvg_{\rvP_i}\left(\vx_*\right) }\).
This quantifies the amount of deviation from the trajectory of singly stochastic random reshuffling.
As such, it quantifies how slower we will be compared to its fast rate.

Now, we will denote the \(\sigma\)-algebra formed by the randomness and the iterates up to the \(i\)th step of the \(k\)th epoch as \(\mathcal{F}_k^i\) such that \((\mathcal{F}_k^i)_{k \geq 1, i \geq 1}\) is a filtration.
Then,
\begin{align*}
    &\Esub{\rvveta_{k}^{i} \sim \varphi}{{\lVert \vx^{i+1}_k - \vx_*^{i+1}\rVert}_2^2 \,\middle|\, \mathcal{F}_k^i}
    \\
    &\leq
    {\lVert \vx^{i}_k -  \vx_*^i\rVert}_2^2
    \\
    &\quad-2 \gamma
    \left(
    \mathrm{D}_{f_{\rvP_i}}(\vx_*^i, \vx_k^i)
    +
    \mathrm{D}_{f_{\rvP_i}}(\vx_k^i, \vx_*)
    -
    \mathrm{D}_{f_{\rvP_i}}(\vx_*^i, \vx_*)
    \right)
    \\
    &\quad+
    4 \gamma^2 \left(\mathcal{L}_{\mathrm{max}} + L_{\mathrm{max}}\right)  
    \mathrm{D}_{f_{\rvP_i}}(\vx_k^i, \vx_*) 
    \\
    &\quad+
    2 \gamma^2  \mathrm{tr} \Vsub{\varphi}{ \rvvg_{\rvP_i}\left(\vx_*\right) }.
\shortintertext{Now, the \(\mu\)-strong convexity of the component functions imply \(\mathrm{D}_{f_{\rvP_i}}\left(\vx_*^i, \vx_k^i\right) \leq \frac{\mu}{2} {\lVert \vx_k^i - \vx_*^i \rVert}_2^2\). Therefore,}
    &\leq
    {\lVert \vx^{i}_k -  \vx_*^i\rVert}_2^2
    \\
    &\quad-
    2 \gamma
    \left(
    \frac{\mu}{2} {\lVert \vx_k^i - \vx_*^i \rVert}_2^2
    +
    \mathrm{D}_{f_{\rvP_i}}(\vx_k^i, \vx_*)
    -
    \mathrm{D}_{f_{\rvP_i}}(\vx_*^i, \vx_*)
    \right)
    \\
    &\quad+
    4 \gamma^2 \left( \mathcal{L}_{\mathrm{max}} + L_{\mathrm{max}} \right) \mathrm{D}_{f_{\rvP_i}}(\vx_k^i, \vx_*)
    \\
    &\quad+
    2 \gamma^2  \mathrm{tr} \Vsub{\varphi}{ \rvvg_{\rvP_i}\left(\vx_*\right) }
    ,
\shortintertext{and reorganizing the terms,}
    &=
    \left(1 - \gamma \mu \right)
    {\lVert \vx^{i}_k -  \vx_*^i\rVert}_2^2
    \\
    &\quad-2 \gamma
    \left(1 - 2 \gamma \left(\mathcal{L}_{\mathrm{max}} + L_{\mathrm{max}} \right)\right) 
    \mathrm{D}_{f_{\rvP_i}}(\vx_k^i, \vx_*)
    \\
    &\quad+ 2 \gamma \, \mathrm{D}_{f_{\rvP_i}}(\vx_*^i, \vx_*)
    \\
    &\quad+
    \gamma^2 2 \mathrm{tr} \Vsub{\varphi}{ \rvvg_{\rvP_i}\left(\vx_*\right) }.
\end{align*}

Taking full expectation, 
\begin{align}
    &\mathbb{E}{\lVert \vx^{i+1}_k - \vx_*^{i+1}\rVert}_2^2
    \nonumber
    \\
    &\;\leq
    \left(1 - \gamma \mu \right)
    \mathbb{E}{\lVert \vx^{i}_k -  \vx_*^i\rVert}_2^2
    \nonumber
    \\
    &\quad-
    2 \gamma
    \left(1 - 2 \gamma \left(\mathcal{L}_{\mathrm{max}} + L_{\mathrm{max}} \right)\right) 
    \E{ \mathrm{D}_{f_{\rvP_i}}\left(\vx_k^n, \vx_*\right) }
    \nonumber
    \\
    &\quad+ 
    2 \gamma \, \E{ \mathrm{D}_{f_{\rvP_i}}\left(\vx_*^i, \vx_*\right)}
    \nonumber
    \\
    &\quad+
    2 \gamma^2 
    \E{ \mathrm{tr} \Vsub{\varphi}{ \rvvg_{\rvP_i}\left(\vx_*\right) } },
    \nonumber
\shortintertext{and as long as \(\gamma < {1}/{\left( 2 \left(\mathcal{L}_{\mathrm{max}} + L_{\mathrm{max}}\right) \right)}\)}
    &\;\leq
    \left(1 - \gamma \mu \right)
    \mathbb{E}{\lVert \vx^{i}_k -  \vx_*^i\rVert}_2^2
    + 
    2 \gamma \underbrace{ \E{ \mathrm{D}_{f_{\rvP_i}}\left(\vx_*^i, \vx_*\right)} }_{T_{\text{err}}}
    \nonumber
    \\
    &\quad+
    2 \gamma^2  
    \underbrace{
    \E{ \mathrm{tr} \Vsub{\varphi}{ \rvvg_{\rvP_i}\left(\vx_*\right) } }
    }_{T_{\text{var}}}
    .
    \label{eq:reshuffling_convergence_main_bound}
\end{align}

\paragraph{Bounding \(T_{\mathrm{err}}\)}
From the definition of the Bregman divergence and \(L\)-smoothness, for all \(j = 1, \ldots, n\), notice that we have
\begin{align}
    \mathrm{D}_{f_j}\left( \vy, \vx \right)
    &=
    f_j\left(\vy\right)
    -
    f_j\left(\vx\right)
    -
    \inner{\nabla f_j\left(\vx\right)}{ \vy - \vx}
    \nonumber
    \\
    &\leq
    \frac{L}{2} \norm{\vy - \vx}_2^2.
    \label{eq:bregman_smooth_upperbound}
\end{align}
for all \((\vx, \vx') \in \mathcal{X}^2\).
Given this, the Lyapunov error term
\begin{align}
    \E{ \mathrm{D}_{f_{\rvP_i}}\left(\vx_k^i, \vx_*\right) }
    &=
    \E{
    \frac{1}{b}
    \sum_{j \in \rvP_i}
    \mathrm{D}_{f_{j}}\left(\vx_k^i, \vx_*\right)
    }
    \nonumber
\shortintertext{can be bounded using \(L\)-smoothness by \cref{eq:bregman_smooth_upperbound},} 
    &\leq
    \E{
    \frac{1}{b}
    \sum_{j \in \rvP_i}
    \frac{L_{j}}{2} {\lVert \vx_k^i -  \vx_*\rVert}_2^2
    }
    \nonumber
\shortintertext{and \(L_{\rm{max}} \geq L_i\) for all \(i = 1, \ldots, n\),}
    &\leq
    \frac{L_{\mathrm{max}}}{2}
    \E{
    \frac{1}{b}
    \sum_{j \in \rvP_i}
    {\lVert \vx_k^i -  \vx_*\rVert}_2^2
    }
    \nonumber
    \\
    &=
    \frac{L_{\mathrm{max}}}{2}
    \mathbb{E} {\lVert \vx_k^i -  \vx_*\rVert}_2^2.
    \label{eq:reshuffling_convergence_Terr}
\end{align}

The squared error \({\lVert \vx_k^i -  \vx_*\rVert}_2^2\) is bounded in \cref{thm:reshuffle_variance} as
\begin{equation}
  \mathbb{E} {\lVert \vx_k^i -  \vx_*\rVert}_2^2 
  \leq
  \epsilon_{\mathrm{sfl}}^2
  \triangleq
  \frac{\gamma^2 n }{4 b^2} \, \tau^2
  <
  \infty.
  \label{eq:reshuffling_epsilon}
\end{equation}

\paragraph{Bounding \(T_{\mathrm{var}}\)}
Now, let's take a look at the variance term.
First, notice that, by the Law of Total Expectation,
\begin{align*}
  \E{ \mathrm{tr} \Vsub{\varphi}{ \rvvg_{\rvP_i}\left(\vx_*\right) } }
  =
  \E{ 
  \E{ 
  \mathrm{tr} \Vsub{\varphi}{ \rvvg_{\rvP_i}\left(\vx_*\right) }  \mid \rvP
  }
  }.
\end{align*}
Here, 
\[
  \E{ 
    \mathrm{tr} \Vsub{\varphi}{ \rvvg_{\rvP_i}\left(\vx_*\right) } \mid \rvP
  }
\]
is the variance from selecting \(b\) samples without replacement.
We can thus apply \cref{thm:expected_variance_lemma} with \(b_{\mathrm{eff}} = \frac{(n-1) b}{n-b}\) such that
\begin{align}
  &\E{ 
    \mathrm{tr} \Vsub{\varphi}{ \rvvg_{\rvP_i}\left(\vx_*\right) } \mid \rvP
  }
  \nonumber
  \\
  &\;\leq
  \frac{n - b}{\left(n - 1\right) b} 
  \left(\frac{1}{n} \sum^n_{j=1} \sigma^2_{j}\right)
  +
  \frac{n \left(b - 1\right)}{\left(n - 1\right) b} 
  {\left(\frac{1}{n} \sum^n_{j=1} \sigma_{j}\right)}^2,
  \nonumber
\shortintertext{which we will denote as}  
  &\;= \sigma^2
  \label{eq:reshuffling_convergence_variance}
\end{align}
for clarity.
Also, notice that \(\sigma^2\) no longer depends on the partitioning.

\paragraph{Per-step Recurrence Equation}
Applying \cref{eq:reshuffling_convergence_Terr,eq:reshuffling_convergence_variance} to \cref{eq:reshuffling_convergence_main_bound}, we now have the recurrence equation
\begin{align*}
    \mathbb{E}{\lVert \vx^{i+1}_k - \vx_*^{i+1}\rVert}_2^2
    &\leq
    \left(1 - \gamma \mu \right)
    \mathbb{E}{\lVert \vx^{i}_k -  \vx_*^i\rVert}_2^2
    \\
    &\qquad+
    L_{\mathrm{max}} \epsilon_{\mathrm{sfl}}^2 \, \gamma
    +
    2 \sigma^2 \, \gamma^2.
\end{align*}

Now that we have a contraction of the Lyapunov function \(\mathbb{E}{\lVert \vx_k^{i+1} - \vx_*^{i+1} \rVert}_2^2\), it remains to convert this that the Lyapunov function bounds our objective \(\mathbb{E}{\lVert \vx_k^{i+1} - \vx_* \rVert}_2^2\).
This can be achieved by noticing that, at the end of each epoch, we have \(\vx_{k+1} - \vx_* = \vx^{p}_{k} - \vx^p_*\), and equivalently, we have \(\vx_k - \vx_* = \vx^{0}_k - \vx^{0}_*\) at the beginning of the epoch.
The fact that the relationship with the original objective is only guaranteed at the endpoints (beginning and end of the epoch) is related to the fact that the bias of random reshuffling starts increasing at the beginning of the epoch and starts decreasing near the end.

\paragraph{Per-Epoch Recurrence Equation}
Nevertheless, this implies that by simply unrolling the recursion as in the analysis of regular SGD, we obtain a per-epoch contraction of
\begin{align*}
  \mathbb{E}{\lVert \vx_{k+1}^0 - \vx_* \rVert}_2^2
  &\leq
  {\left(1 - \gamma \mu\right)}^{p}
  \mathbb{E} {\lVert \vx_{k}^0 - \vx_* \rVert}_2^2
  \\
  &\quad+
  \left(
  L_{\mathrm{max}} \epsilon^2_{\mathrm{sfl}}  \gamma
  +
  2 \sigma^2 \gamma^2 
  \right)
  \left(\sum^{p-1}_{i=0} {\left(1 - \mu \gamma\right)}^{i} \right).
\end{align*}
And after \(K\) epochs, 
\begin{align*}
  &\mathbb{E}{\lVert \vx_{K+1}^0 - \vx_* \rVert}_2^2
  \leq
  {\left(1 - \gamma \mu\right)}^{p K}
  \mathbb{E} {\lVert \vx_{0}^0 - \vx_* \rVert}_2^2
  \\
  &+
  \left(
  L_{\mathrm{max}} \epsilon^2_{\mathrm{sfl}} \gamma 
  +
  2 \sigma^2 \gamma^2 
  \right)
  \left(\sum^{p-1}_{i=0} {\left(1 - \mu \gamma\right)}^{i} \right)
  \left(\sum^{pK-1}_{j=0} {\left(1 - \mu \gamma\right)}^{p j} \right).
\end{align*}
Note that \(T = pK\).

As done by \citet{mishchenko_random_2020}, the product of sums can be bounded as
\begin{align*}
  &\left(\sum^{p-1}_{i=0} {\left(1 - \mu \gamma\right)}^{i} \right)
  \left(\sum^{T-1}_{j=0} {\left(1 - \mu \gamma\right)}^{p j} \right)
  \\
  &\;=
  \sum^{p-1}_{i=0} \sum^{T-1}_{j=0} 
  {\left(1 - \mu \gamma\right)}^{i} {\left(1 - \mu \gamma\right)}^{p j}
  \\
  &\;=
  \sum^{p-1}_{i=0} \sum^{T-1}_{j=0} 
  {\left(1 - \mu \gamma\right)}^{i + pj}
  \\
  &\;=
  \sum^{Tp-1}_{i=0} 
  {\left(1 - \mu \gamma\right)}^{i}
  \\
  &\;\leq
  \sum^{\infty}_{i=0} 
  {\left(1 - \mu \gamma\right)}^{i}
  \\
  &\;\leq
  \frac{1}{\gamma \mu}.
\end{align*}
Then,
\begin{align*}
  &\mathbb{E}{\lVert \vx_{K+1}^0 - \vx_* \rVert}_2^2
  \\
  &\;\leq
  {\left(1 - \gamma \mu\right)}^{p K}
  \mathbb{E} {\lVert \vx_{0}^0 - \vx_* \rVert}_2^2
  +
  \frac{1}{\gamma \mu}
  \left(
  L_{\mathrm{max}} \epsilon^2_{\mathrm{sfl}}  \gamma 
  +
  2 \sigma^2 \gamma^2 
  \right)
  \\
  &=
  {\left(1 - \gamma \mu\right)}^{p K}
  \mathbb{E} {\lVert \vx_{0}^0 - \vx_* \rVert}_2^2
  +
  \frac{\epsilon^2_{\mathrm{sfl}} }{\mu}
  +
  \frac{2\sigma^2}{\mu} \gamma.
\end{align*}
Plugging in the value of \(\epsilon_{\mathrm{sfl}}^2\) from \cref{eq:reshuffling_epsilon}, we have
\begin{align*}
  \mathbb{E}{\lVert \vx_{K+1}^0 - \vx_* \rVert}_2^2
  &\leq
  {\left(1 - \gamma \mu\right)}^{p K}
  \mathbb{E} {\lVert \vx_{0}^0 - \vx_* \rVert}_2^2
  \\
  &\qquad+
  \frac{L_{\mathrm{max}} n \sigma^2_{\mathrm{sub}}}{4 b^2 \mu} \gamma^2
  +
  \frac{2\sigma^2}{\mu} \gamma.
\end{align*}
This implies
\begin{align*}
    &\mathbb{E}{\lVert \vx_{K+1}^{0} - \vx_* \rVert}_2^2
    \leq
    r^{K n/b}
    {\lVert \vx_1^{0} - \vx_* \rVert}_2^2
    +
    C_{\rm{var}}^{\mathrm{sub}} \, \gamma^2 
    +
    C_{\rm{var}}^{\mathrm{com}} \, \gamma,
  \end{align*}
  where \(r =  1 - \gamma \mu\),
\begin{alignat*}{3}
    C_{\rm{var}}^{\mathrm{sub}} &= \frac{1}{4} \frac{L_{\mathrm{max}}}{\mu} \frac{n}{b^2} \left( \frac{1}{n} \sum^n_{i=1} \norm{\nabla f_i\left(\vx_*\right)}_2^2 \right), \text{and}
    \\
    C_{\mathrm{var}}^{\rm{com}} &= 
    {
    \frac{2}{\mu} \frac{n-b}{(n-1) b} {\left( \frac{1}{n} \sum^n_{i=1} \sigma^2_{i}  \right)}
    + 
    \frac{2}{\mu} \frac{n \left(b-1\right)}{\left(n-1\right) b} 
    {\left(\frac{1}{n} \sum^n_{i=1} \sigma_{i}\right)}^2
    }.
\end{alignat*}
Applying the fact that \( (n-b)/n \leq (n-1)/n \leq 2 \) for all \( n \geq 2\) yields the simplified constants in the statement.
\end{proofEnd}

\begin{theoremEnd}[category=stronglyconvexreshufflingsgdcomplexity]{theorem}\label{thm:strongly_convex_reshuffling_sgd_complexity}
  Let the objective \(F\) satisfy \cref{assumption:objective,assumption:components}, where each component \(f_i\) is additionally \(\mu\)-strongly convex, and \cref{assumption:montecarlo_er} (\(\rm{A}^{\rm{CVX}}\)), \labelcref{assumption:bounded_variance_both} hold.
  Then, the last iterate \(\vx_T\) of doubly SGD-RR is \(\epsilon\)-close to the global optimum \(\vx_* = \argmax_{\vx \in \mathcal{X}} F\left(\vx\right)\) such that \(\mathbb{E}\norm{\vx_T - \vx_*}_2^2 \leq \epsilon\) if
{%
\setlength{\abovedisplayskip}{1ex} \setlength{\abovedisplayshortskip}{1ex}
\setlength{\belowdisplayskip}{1ex} \setlength{\belowdisplayshortskip}{1ex}
  \begin{align*}
    \hspace{-.5ex}
    T \,\geq  \;
    &
    {\textstyle
    \max\left( 4 C_{\rm{var}}^{\mathrm{com}} \frac{1}{\epsilon} + C_{\rm{var}}^{\mathrm{sub}} \frac{1}{\sqrt{\epsilon}}, \;  C_{\mathrm{bias}} \right) 
    \log \left(2 \, {\small\lVert \vx_1^0 - \vx_* \rVert}_2^2 \frac{1}{\epsilon} \right)
    }
  \end{align*}
  }%
  for some fixed stepsize, where \(T = K p = K {n}/{b}\),
  {
\setlength{\abovedisplayskip}{1ex} \setlength{\abovedisplayshortskip}{1ex}
\setlength{\belowdisplayskip}{1ex} \setlength{\belowdisplayshortskip}{1ex}
  \begin{align*}
    % C_{\mathrm{com}} &= 
    % {
    % \frac{2}{b} \left(\frac{1}{n} \sum^n_{i=1} \frac{\sigma_{i}^2}{\mu^2} \right) 
    % + { 2 \left( \frac{1}{n}\sum^n_{i=1} \frac{\sigma_{i}}{\mu} \right)}^2
    % },
    C_{\mathrm{bias}} &= \left(\mathcal{L}_{\mathrm{max}} + L\right)/{\mu}
    \\
    C_{\rm{var}}^{\mathrm{com}} &= 
    {
    \frac{2}{b} \left(\frac{1}{n} \sum^n_{i=1} \frac{\sigma_{i}^2}{\mu^2} \right) 
    + { 2 \left( \frac{1}{n}\sum^n_{i=1} \frac{\sigma_{i}}{\mu} \right)}^2
    },
    \\
    C_{\rm{var}}^{\mathrm{sub}} &= \sqrt{\frac{L_{\mathrm{max}}}{\mu}} \frac{ \sqrt{n}}{b} \frac{\tau}{\mu}.
  \end{align*}
  }%
\end{theoremEnd}
\vspace{-1.5ex}
\begin{proofEnd}
From the result of \cref{thm:strongly_convex_reshuffling_sgd_convergence}, we can invoke \cref{thm:geometric_complexity_squared} with 
\begin{alignat*}{3}
  A &= \frac{L_{\mathrm{max}} n}{4 b^2 \mu} \tau^2,\quad
  \\
  B &= \frac{2}{\mu}\left( \frac{n-b}{(n-1) b} \left(\frac{1}{n} \sum^n_{i=1} \sigma_{i}^2\right) + \frac{n \left(b-1\right)}{\left(n-1\right) b} {\left(\frac{1}{n} \sum^n_{i=1} \sigma_{i}\right)}^2 \right),
  \\
  C &= \mathcal{L}_{\mathrm{max}} + L_{\mathrm{max}}.
\end{alignat*}
Then, an \(\epsilon\) accurate solution in expectation can be obtained after
\begin{align*}
   T 
   &\geq 
   \max\Bigg(
      \underbrace{\frac{2 B}{\mu}}_{\triangleq C_1}  \frac{1}{\epsilon}
      + \underbrace{\frac{\sqrt{2 A}}{\mu}}_{\triangleq C_2} \frac{1}{\sqrt{\epsilon}}, \;
       \frac{\mathcal{L}_{\mathrm{max}} + L_{\mathrm{max}}}{\mu}
   \Bigg)
   \log \left(2 r_0^2 \frac{1}{\epsilon} \right)
\end{align*}
iterations with a stepsize of
\begin{align*}
    \gamma = \min\left( \frac{- B + \sqrt{B^2 + 2 A \epsilon }}{2 A}, \frac{1}{C} \right).
\end{align*}
To make the iteration complexity more precise, the terms \(C_1, C_2\) can be organized as
\begin{align*}
    C_1
    &=
    \frac{2 B}{\mu}
    =
    \frac{2}{\mu} \Bigg( \, \frac{2}{\mu} \, \Bigg\{\, \frac{n-b}{(n-1) b} \left(\frac{1}{n} \sum^n_{i=1} \sigma_{i}^2\right) 
    \\
    &\qquad\qquad\qquad+ \frac{n \left(b-1\right)}{\left(n-1\right) b} {\left(\frac{1}{n} \sum^n_{i=1} \sigma_{i}\right)}^2  \,\Bigg\} \,\Bigg) 
    \\
    &=
    \frac{4}{\mu^2}  \Bigg( \frac{n-b}{(n-1) b} \left(\frac{1}{n} \sum^n_{i=1} \sigma_{i}^2\right) 
    + \frac{n \left(b-1\right)}{\left(n-1\right) b} \, {\left(\frac{1}{n} \sum^n_{i=1} \sigma_{i}\right)}^2  \Bigg)
    \\
    C_2
    &=
    \frac{\sqrt{2 A}}{\mu}
    \\
    &=
    \sqrt{2 \frac{L_{\mathrm{max}} n}{4 b^2 \mu} \tau^2 } \frac{1}{\mu^2}
    \\
    &=
    \frac{\sqrt{L_{\mathrm{max}}} \, \tau \, \sqrt{n}}{\sqrt{2} b \mu^{\nicefrac{3}{2}}} 
    \\
    &\leq
    \frac{\sqrt{L_{\mathrm{max}}}}{\mu^{\nicefrac{3}{2}}} \frac{ \sqrt{n}}{b} \, \tau.
\end{align*}
Applying the fact that \( (n-b)/n \leq (n-1)/n \leq 2 \) for all \( n \geq 2\) yields the simplified constants in the statement.
\end{proofEnd}

\vspace{1ex}
\begin{remark}
  When \(\sigma_{i} = 0\) for all \(i = 1, \ldots, n\), the anytime convergence bound \cref{thm:strongly_convex_reshuffling_sgd_convergence} in the Appendix reduces exactly to Theorem 1 of \citet{mishchenko_random_2020}.
  Therefore, \cref{thm:strongly_convex_reshuffling_sgd_convergence} is a strict generalization of SGD-RR to the doubly stochastic setting.
\end{remark}

Using \(m\)-sample Monte Carlo improves the constants as follows:
\begin{corollary}
   Let the assumptions of \cref{thm:strongly_convex_reshuffling_sgd_complexity} hold.   
   Then, for \(1 < b < n\) and \(m\)-sample Monte Carlo, the same guarantees hold with the constant
{%
\setlength{\abovedisplayskip}{1ex} \setlength{\abovedisplayshortskip}{1ex}
\setlength{\belowdisplayskip}{1ex} \setlength{\belowdisplayshortskip}{1ex}
   \begin{align*}
     C_{\rm{var}}^{\mathrm{com}} = 
     \frac{2}{m b} {\left(\frac{1}{n} \sum^n_{i=1} \frac{\sigma_{i}^2}{\mu^2} \right)} + \frac{2}{m} \, {\left( \frac{1}{n}\sum^n_{i=1} \frac{\sigma_{i}}{\mu} \right)}^2.
   \end{align*}
}
\end{corollary}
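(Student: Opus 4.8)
The plan is to re-run the proofs of \cref{thm:strongly_convex_reshuffling_sgd_convergence,thm:strongly_convex_reshuffling_sgd_complexity} essentially verbatim, substituting each component gradient estimator $\rvvg_i$ by its $m$-sample i.i.d. Monte Carlo average $\rvvg_i^m = \tfrac1m\sum_{j=1}^m \rvvg_i^{(j)}$, and tracking how the constants change. The only two elementary facts needed are: (i) by the composition property of the ER condition (the Proposition following \cref{assumption:expected_residual}), if $\rvvg_i$ satisfies \cref{assumption:montecarlo_er}~$(\mathrm{A}^{\mathrm{CVX}})$ with constant $\mathcal{L}_i$, then $\rvvg_i^m$ satisfies it with constant $\mathcal{L}_i/m$; and (ii) since the replications are i.i.d. for each fixed $i$, $\mathrm{tr}\V{\rvvg_i^m(\vx_*)} = \sigma_i^2/m$, so that in \cref{assumption:bounded_variance_both} the per-component bound is $\sigma_i^2/m$ and the per-component standard deviation is $\sigma_i/\sqrt m$. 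Because the $m$ Monte Carlo draws are still shared across the minibatch, \cref{assumption:estimator_correlations} continues to hold only with $\rho = 1$, so no correlation term is saved; this is exactly the dependent regime.

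Next I would locate the single place in the proof of \cref{thm:strongly_convex_reshuffling_sgd_convergence} where the component variances feed into $C_{\mathrm{var}}^{\mathrm{com}}$: the quantity $\sigma^2$ that bounds $\mathrm{tr}\Vsub{\varphi}{\rvvg_{\rvP_i}(\vx_*)}$ conditional on the partition, obtained by applying \cref{thm:expected_variance_lemma} (equivalently \cref{thm:general_variance_corollary}~(i) without the subsampling term) with $b_{\mathrm{eff}} = (n-1)b/(n-b)$. Feeding in $\sigma_i^2/m$ and $\sigma_i/\sqrt m$ — and using $\bigl(\tfrac1n\sum_i \sigma_i/\sqrt m\bigr)^2 = \tfrac1m\bigl(\tfrac1n\sum_i\sigma_i\bigr)^2$ — this bound becomes exactly $1/m$ times the old one, i.e. $\sigma^2 = \tfrac1m\bigl(\tfrac{n-b}{(n-1)b}(\tfrac1n\sum_i\sigma_i^2) + \tfrac{n(b-1)}{(n-1)b}(\tfrac1n\sum_i\sigma_i)^2\bigr)$. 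Everything else in the recurrence is untouched: the per-step expectation $\mathbb{E}_\varphi$ acts on one minibatch at a time, so nothing in the reshuffling argument used independence of the Monte Carlo noise across steps or epochs; the expected-smoothness constant from \cref{thm:reshuffling_expected_smoothness} only shrinks (it uses $\mathcal{L}_i\to\mathcal{L}_i/m$ together with $L_i$); and the Lyapunov-reference error $\epsilon_{\mathrm{sfl}}^2 = \tfrac{\gamma^2 n}{4b^2}\tau^2$ from \cref{thm:reshuffle_variance} depends only on the true gradients $\nabla f_i(\vx_*)$, hence is Monte-Carlo-independent — which is precisely why $C_{\mathrm{var}}^{\mathrm{sub}}$ is unchanged.

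Finally I would propagate the factor $1/m$ through the invocation of \cref{thm:geometric_complexity_squared} as in the proof of \cref{thm:strongly_convex_reshuffling_sgd_complexity}: the constant $B=\tfrac2\mu\sigma^2$ shrinks by $1/m$, hence $C_1 = 2B/\mu$ shrinks by $1/m$, and after the same simplifications $(n-b)/(n-1)\le 1$ and $n(b-1)/((n-1)b)\le 2$ one lands on $C_{\mathrm{var}}^{\mathrm{com}} = \tfrac{2}{mb}\bigl(\tfrac1n\sum_i\tfrac{\sigma_i^2}{\mu^2}\bigr) + \tfrac{2}{m}\bigl(\tfrac1n\sum_i\tfrac{\sigma_i}{\mu}\bigr)^2$, while $A = \tfrac{L_{\mathrm{max}}n}{4b^2\mu}\tau^2$ and $C=\mathcal{L}_{\mathrm{max}}+L_{\mathrm{max}}$ are unchanged (or only improved), so $C_{\mathrm{var}}^{\mathrm{sub}}$ and $C_{\mathrm{bias}}$ hold as stated and the complexity bound is identical except for the improved $C_{\mathrm{var}}^{\mathrm{com}}$. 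There is no genuine obstacle here — the argument is pure bookkeeping — and the only point demanding care is verifying that $m$-averaging shared samples leaves $\rho=1$ intact and that no earlier lemma implicitly exploited independence of the Monte Carlo noise that $m$-averaging would disturb; both checks are immediate from the structure described above.
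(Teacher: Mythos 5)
Your proposal is correct and follows exactly the route the paper intends (the corollary is stated without an explicit proof, but the derivation is precisely this bookkeeping): the $m$-sample average has component variances $\sigma_i^2/m$ and standard deviations $\sigma_i/\sqrt{m}$ while retaining $\rho=1$, so the bound on $\mathbb{E}[\mathrm{tr}\,\mathbb{V}_\varphi(\rvvg_{\rvP_i}(\vx_*))\mid\rvP]$ from \cref{thm:expected_variance_lemma} shrinks by exactly $1/m$, and propagating this through $B$ and $C_1$ in \cref{thm:geometric_complexity_squared} gives the stated $C_{\mathrm{var}}^{\mathrm{com}}$ while $A$, $C$, and hence $C_{\mathrm{var}}^{\mathrm{sub}}$, $C_{\mathrm{bias}}$ are unchanged or improved. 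Your checks that $\epsilon_{\mathrm{sfl}}^2$ depends only on the exact gradients and that no step of the reshuffling argument exploits independence of the Monte Carlo noise are the right ones, and both hold.
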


\newpage
\begin{remark}
  Compared to doubly SGD, doubly SGD-RR improves the dependence on the subsampling noise \(\tau^2\) from \(\mathcal{O}\left(1/\epsilon\right)\) to \(\mathcal{O}(1/\sqrt{\epsilon})\).
  Therefore, random reshuffling does improve the complexity of doubly SGD.
  Unfortunately, it also means that it does not achieve a better asymptotic complexity as in the finite sum setting.
  However, non-asymptotically, if the subsampling noise dominates component estimation noise, doubly SGD-RR will behave closely to an \(\mathcal{O}(1/\sqrt{\epsilon})\) (or equivalently, \(\mathcal{O}(1/T)\)) algorithm.
\end{remark}
\vspace{0.5ex}
\begin{remark}
  As was the case with independent subsampling, increasing \(b\) also reduces component estimation noise for RR-SGD.
  However, the impact on the complexity is more subtle.
  Consider that the iteration complexity is
  {%
\setlength{\abovedisplayskip}{1ex} \setlength{\abovedisplayshortskip}{1ex}
\setlength{\belowdisplayskip}{1ex} \setlength{\belowdisplayshortskip}{1ex}
  \begin{equation}
  {\textstyle
    \mathcal{O}\left( \kappa_{\sigma}^2 \left(\frac{1}{mb} + \frac{1}{m}\right) \frac{1}{\epsilon} + \kappa \, \kappa_{\tau} \,
 \frac{\sqrt{n}}{b} \frac{1}{\sqrt{\epsilon}} \right),
  }
  \label{eq:doublysgdrr_complexity}
  \end{equation}
  }%
  where \(\kappa_{\sigma} = { \max_{i=1,\ldots, n} \sigma_{i}}/{\mu}\), \(\kappa_{\tau} = \tau/{\mu}\) and  \(\kappa = {\max_{i=1,\ldots, n} L_{i}}/{\mu}\).
  The \(1/\epsilon\) term decreases the fastest with \(m\).
  Therefore, it might seem that increasing \(m\) is advantageous.
  However, the \(1/\sqrt{\epsilon}\) term has a \(\mathcal{O}\left(\sqrt{n}\right)\) dependence on the dataset size, which would be non-negligible for large datasets.
  As a result, in the large \(n\), large \(\epsilon\) regime, increasing \(b\) over \(m\) should be more effective.
\end{remark}
\vspace{0.5ex}
\begin{remark}
\cref{eq:doublysgdrr_complexity} also implies that, for dependent estimators, doubly SGD-RR achieves an asymptotic speedup of \({n}/{b}\) compared to full-batch SGD with only component estimation noise.
Assume that the sample complexity of a single estimate is \(\Theta(m b)\) (\(\Theta(m n)\) for full-batch).
Then, the sample complexity of doubly SGD-RR is \(\mathcal{O}\left(b\nicefrac{1}{\epsilon}\right)\) and \(\mathcal{O}\left(n\nicefrac{1}{\epsilon}\right)\) for full-batch SGD.
However, the \(n/b\) seed-up comes from correlations.
Therefore, for independent estimators, the asymptotic complexity of the two is equal.
\end{remark}

\section{Simulation}

\paragraph{Setup}
We evaluate the insight on the tradeoff between \(b\) and \(m\) for correlated estimators on a synthetic problem.
In particular, we set
{%
\setlength{\abovedisplayskip}{1ex} \setlength{\abovedisplayshortskip}{1ex}
\setlength{\belowdisplayskip}{1ex} \setlength{\belowdisplayshortskip}{1ex}
\[
  f_i\left(\vx; \rvveta\right) = \frac{L_i}{2} \norm{ \vx - \vx_i^* + \rvveta }_2^2,
\]
}%
where the smoothness constants \(L_i \sim \text{Inv-Gamma}(\nicefrac{1}{2}, \nicefrac{1}{2})\) and the stationary points \(\vx_i^* \sim \mathcal{N}(\boldupright{0}_d, s^2\boldupright{I}_d)\) are sampled randomly, where \(\boldupright{0}_d\) is a vector of \(d\) zeros and \(\boldupright{I}_d\) is a \(d \times d\) identity matrix.
Then, we compute the gradient variance on the global optimum, corresponding to computing the BV (\cref{assumption:bounded_variance}) constant.
Note that \(s^2\) here corresponds to the ``heterogeneity'' of the data.
We make the estimators dependent by sharing \(\rvveta_1, \ldots, \rvveta_m\) across the batch.

\vspace{-1.5ex}
\paragraph{Results}
The results are shown in \cref{fig:simulation}.
At low heterogeneity, there exists a ``sweet spot'' between \(m\) and \(b\).
However, this sweet spot moves towards large values of \(b\), where, at high heterogeneity levels, the largest values of \(b\) are more favorable.
Especially in the low budget regime where \(m b \ll n\), the largest \(b\) values appear to achieve the lowest variance.
This confirms our theoretical results that a large \(b\) should be preferred on challenging (large number of datapoints, high heterogeneity) problems.

% \[
%   \mathrm{tr}\V{\rvvg\left(\vx_*\right)}
%   =
%   \frac{1}{m}
%   \left(
%     \frac{n-b}{\left(n-1\right) b}
%     \left(
%     \frac{1}{n}
%     \sum^n_{i=1} d L_i^2
%     \right)
%     +
%     \frac{n \left(b-1\right)}{b \left(n -1\right)}
%     {\left(
%     \frac{1}{n}
%     \sum^n_{i=1} \sqrt{d} L_i
%     \right)}^2
%   \right)
%   +
%   \frac{n - b}{\left(n-1\right) b}
%   \frac{1}{n} \sum^n_{i=1} L_i^2 \norm{\vx_* - \bar{\vx}_i}_2^2
% \]

\vspace{-0.5ex}
\section{Discussions}
\vspace{-0.5ex}

\subsection{Applications}
\vspace{-0.5ex}
In \cref{section:applications}, we establish \cref{assumption:montecarlo_er,assumption:bounded_variance_both} on the following applications:
\begin{itemize}
    \vspace{-2ex}
    \setlength\itemsep{.2ex}
    \item \textbf{ERM with Randomized Smoothing}: 
    In this problem, we consider ERM, where the model weights are perturbed by noise. 
    This variant of ERM has recently gathered interest as it is believed to improve generalization performance~\citep{orvieto_explicit_2023,liu_noisy_2021a}.
    In \cref{section:erm_smoothing}, we establish \cref{assumption:montecarlo_er}~\((\rm{A}^{\rm{ITP}})\) under the interpolation assumption.
    
    \item \textbf{Reparameterization Gradient}: 
    In certain applications, \textit{e.g.}, variational inference, generative modeling, and reinforcement learning (see \citealp[\S 5]{mohamed_monte_2020}), the optimization problem is over the parameters of some distribution, which is taken expectation over.
    Among gradient estimators for this problem, the reparameterization gradient is widely used due to lower variance~\citep{xu_variance_2019}.
    For this, in \cref{section:reparam_gradient}, we establish \cref{assumption:montecarlo_er} (\(\rm{A}^{\rm{CVX}}\)) and (B) by assuming a convexity and smooth integrand.
\vspace{-2ex}
\end{itemize}

\vspace{-0.5ex}
\subsection{Related Works}\label{section:related}
\vspace{-0.5ex}
Unlike SGD in the finite sum setting, doubly SGD has received little interest.
Previously, \citet{bietti_stochastic_2017,zheng_lightweight_2018,kulunchakov_estimate_2020} have studied the convergence of variance-reduced gradients~\citep{gower_variancereduced_2020} specific to the doubly stochastic setting under the uniform Lipchitz integrand assumption (\(\vg_i(\cdot; \veta)\) is \(L\)-Lipschitz for all \(\veta\)).
Although this assumption has often been used in the stochastic optimization literature ~\citep{nemirovski_robust_2009,moulines_nonasymptotic_2011,shalev-shwartz_stochastic_2009,nguyen_sgd_2018}, it is easily shown to be restrictive: for some \(L\)-smooth \(\widehat{f_i}\left(\vx\right)\),  \(\nabla f_i\left(\vx; \rvveta\right) = \nabla \widehat{f}_i\left(\vx\right) + x_1 \rvveta \) is not \(L\)-Lipschitz unless the support of \(\veta\) is compact.
In contrast, we established results under weaker conditions. 
We also provide a discussion on the relationships of different conditions in \cref{section:gradient_variance_conditions}.

% \paragraph{Infinite sum setting}
% On the other hand, in the infinite sum setting, the gradient estimators tend to be more task-specific.
% As a result, respective sub-communities have studied the properties of specialized estimators.
% For example, the score gradient/REINFORCE~\citep{williams_simple_1992,rubinstein_score_1986,glynn_likelihood_1990,kleijnen_optimization_1996} or the reparameterization/path  gradient~\citep{titsias_doubly_2014,rezende_stochastic_2014,kingma_autoencoding_2014}.
% (See the review by \citet{mohamed_monte_2020}.)
% Specifically, the convergence of SGD with the reparameterization gradient has been studied by \citet{domke_provable_2019,kim_practical_2023,domke_provable_2023,kim_convergence_2023,buchholz_quasimonte_2018,fujisawa_multilevel_2021,kim_linear_2023}.

Furthermore, we extended doubly SGD to the case where random reshuffling is used in place of sampling independent batches.
In the finite-sum setting, the fact that SGD-RR converges faster than independent subsampling (SGD) has been empirically known for a long time~\citep{bottou_curiously_2009}.
While \citet{gurbuzbalaban_why_2021} first demonstrated that SGD-RR can be fast for quadratics, a proof under general conditions was demonstrated recently~\citep{haochen_random_2019}: In the strongly convex setting, \citet{mishchenko_random_2020}
\citet{ahn_sgd_2020,nguyen_unified_2021} establish a \(\mathcal{O}\left(1/\sqrt{\epsilon}\right)\) complexity to be \(\epsilon\)-accurate, which is tight in terms of asymptotic complexity~\citep{safran_how_2020,cha_tighter_2023,safran_random_2021}.
%We investigated whether these results extend to the doubly stochastic context.

Lastly, \citet{dai_scalable_2014,xie_scale_2015,shi_triply_2021} provided convergence guarantees for doubly SGD for ERM of random feature kernel machines.
However, these analyses are based on concentration arguments that doubly SGD does not deviate too much from the optimization path of finite-sum SGD.
Unfortunately, concentration arguments require stronger assumptions on the noise, and their analysis is application-specific.
In contrast, we provide a general analysis under the general ER assumption.

\vspace{-.5ex}
\subsection{Conclusions}
\vspace{-.5ex}
In this work, we analyzed the convergence of SGD with doubly stochastic and dependent gradient estimators.
In particular, we showed that if the gradient estimator of each component satisfies the ER and BV conditions, the doubly stochastic estimator also satisfies both conditions; this implies convergence of doubly SGD.
%Furthermore, our analysis is tight enough to provide insight into the variance of doubly stochastic SGD with dependent gradient estimators.

\vspace{-2ex}
\paragraph{Practical Recommendations}
An unusual conclusion of our analysis is that when Monte Carlo is used with minibatch subsampling, it is generally more beneficial to increase the minibatch size \(b\) instead of the number of Monte Carlo samples \(m\).
That is, for both SGD and SGD-RR, increasing \(b\) decreases the variance in a rate close to \(1/b\) when 
\begin{enumerate*}[label=\textbf{(\roman*)}]
    \item the gradient variance of the component gradient estimators varies greatly such that 
      \({\left( \frac{1}{n} \sum^n_{i=1} \sigma_{i} \right)}^2
      \ll
      \frac{1}{n} \sum^n_{i=1} \sigma_{i}^2\)
    or when 
    \item the estimators are independent as \(\rho = 0\).
\end{enumerate*}
Surprisingly, such a benefit persists even in the interpolation regime \(\tau^2 = 0\).
On the contrary, when the estimators are both dependent \textit{and}
have similar variance, it is necessary to increase both \(m\) and \(b\), where a sweet spot between the two exists.
However, such a regime is unlikely to occur in practice; in statistics and machine learning applications, the variance of the gradient estimators tends to vary greatly due to the heterogeneity of data.

\clearpage
\section*{Acknowledgements}
The authors would like to thank the anonymous reviewers for their comments, Jason Altschuler (UPenn) for numerous suggestions that strengthened the work, and Trevor Campbell (UBC) for pointing out a typo.

K. Kim was supported by a gift from AWS AI to Penn Engineering's ASSET Center for Trustworthy AI; Y.-A. Ma was funded by the NSF Grants [NSF-SCALE MoDL-2134209] and [NSF-CCF-2112665 (TILOS)], the U.S. Department Of Energy, Office of Science, as well as the DARPA AIE program; J. R. Gardner was supported by NSF award [IIS-2145644].

\section*{Impact Statement}
This paper presents a theoretical analysis of stochastic gradient descent under doubly stochastic noise to broaden our understanding of the algorithm.
The work itself is theoretical, and we do not expect direct societal consequences, but SGD with doubly stochastic gradients is widely used in various aspects of machine learning and statistics.
Therefore, we inherit the societal impact of the downstream applications of SGD.

\bibliographystyle{icml2024}
\bibliography{references}

\clearpage
\onecolumn
\appendix 

{\hypersetup{linkbordercolor=black,linkcolor=black}
\tableofcontents
}

\twocolumn
\newpage
\section{Gradient Variance Conditions}\label{section:gradient_variance_conditions}

\newcommand{\dashedarrow}{\raisebox{.5ex}{\tikz{\draw[-latex,dashed,thick](0,0) -- (2em,0);}}}
\newcommand{\dottedarrow}{\raisebox{.5ex}{\tikz{\draw[-latex,dotted,thick](0,0) -- (2em,0);}}}

\begin{figure}[t]
    \centering
\begin{tikzpicture}[
  every text node part/.style={align=center},
  every node/.style={node distance=3em},
]

\node (er)  [draw, fill=blue!20, rounded corners] {ER};
\node (abc)  [draw, fill=gray!20, rounded corners, below=of er] {ABC};
\node (es) [draw, fill=gray!20, rounded corners, above=of er]  {ES};
\node (wg) [draw, fill=gray!20, rounded corners, above=of es] {WG};
\node (sg) [draw, fill=gray!20, rounded corners, above=of wg]  {SG + \(f\) is smooth};

\node (cer) [draw, fill=gray!20, rounded corners, below right=3ex and 4em of er]  {CER};
\node (ces) [draw, fill=gray!20, rounded corners, above=of cer]  {CES};

%\node (qessc) [draw, fill=gray!20, rounded corners, right=4em of es]  {QES + \(F\) is QG};
\node (qes)    [draw, fill=gray!20, rounded corners, above right=2ex and 8em of es]  {QES};
\node (qv)     [draw, fill=gray!20, rounded corners, above=of qes]  {QV with \(\beta = 0\)};

\node (us) [draw, fill=gray!20, rounded corners, below right=3ex and 2em of qes]  {\(f\left(\vx; \veta\right)\) is\\uniformly\\smooth};

\draw[-latex, thick] (sg) -- (wg) node[pos=0.5, left] {(1)};
\draw[-latex, thick] (sg) -- (qv)  node[pos=0.6, above] {(2)};
\draw[-latex, dashed, thick] (qv) -- (wg) node[pos=0.5, below] {(3)};
\draw[-latex, thick] (qv) -- (qes) node[pos=0.5, left] {(4)};

\draw[-latex, thick] (wg) -- (es)  node[pos=0.5, left] {(5)};

\draw[-latex, dashed, thick] (qes) -- (es) node[pos=0.5,above] {(6)};

\draw[-latex, thick] (us) -- (qes) node[pos=0.3, above] {(7)};
\draw[-latex, dotted, thick] (us) -- (ces) node[pos=0.5, below] {(8)};

\draw[-latex, thick] (es) -- (er)  node[pos=0.5, left] {(9)};

\draw[-latex, thick] (ces) -- (es) node[pos=0.5, below] {(10)};
\draw[-latex, thick] (ces) -- (cer) node[pos=0.5, left] {(11)};

\draw[-latex, thick] (cer) -- (er) node[pos=0.5, below] {(12)};

\draw[-latex, thick] (er) -- (abc) node[pos=0.5, left] {(13)};

%\draw[-latex, thick] (qessc) -- (es) node[pos=0.5,above] {(4)};
\end{tikzpicture}
    \caption{
    \textbf{Implications between general gradient variance conditions for some unbiased estimator \(\rvvg\left(\vx\right) = \nabla f \left(\vx; \rvveta\right)\) of \(\nabla f\left(\vx\right) = \mathbb{E} \rvvg\left(\vx\right)\)}. 
    The dashed arrows (\protect\dashedarrow) hold if \(f\) is further assumed to be QFG; the dotted arrow (\protect\dottedarrow) holds if the integrand \(f(\vx; \veta)\) is uniformly convex such that it is convex with respect to \(\vx\) for any fixed \(\veta\).
    (1), (5), (9), (13) are established by \citet[Theorem 3.4]{gower_sgd_2021};
    (2) is proven in \cref{thm:sgisqv};
    (3) is proven in \cref{thm:qviswg};
    (4) is proven in \cref{thm:qvisqes};
    (7) is proven in \cref{thm:usqes};
    (8) is proven in \cref{thm:usces};
    (6) is proven by \citet[Lemma 2]{nguyen_sgd_2018} but we restate the proof in \cref{thm:qesises};
    (11) is proven in \cref{thm:cescer};
    (10), (12) hold trivially if \(\vx_* \in \argmin_{\vx \in \mathcal{X}} f\left(\vx\right)\) are all stationary points.
    }\label{fig:conditions_relationships}
\end{figure}

In this section, we will discuss some additional aspects of the ER and ES conditions introduced in \cref{section:conditions}.
We will also look into alternative gradient variance conditions that have been proposed in the literature and their relationship with the ER condition.

\subsection{Definitions}
For this section, we will use the following additional definitions:
\begin{definition}[Quadratic Functional Growth; QFG]
We say \(f : \mathcal{X} \to \mathbb{R}\) satisfies \(\mu\)-quadratic functional growth if there exists some \(\mu > 0\)  such that
\[
    \frac{\mu}{2}\norm{\vx - \vx_*}_2^2 \leq f\left(\vx\right) - f\left(\vx_*\right)
\]
holds for all \(\vx \in \mathcal{X}\), where \(\vx_* \in \argmin_{\vx \in \mathcal{X}} f\left(\vx\right)\).
\end{definition}
This condition implies that \(f\) grows at least as fast as some quadratic and is weaker than the Polyak-\L{}ojasiewicz.
However, for any convex function \(f\) that satisfies this condition also means that \(f\) is \(\mu\)-strongly convex~\citep{karimi_linear_2016}.

\begin{definition}[Uniform Smoothness]
For the unbiased estimator \(\rvvg\left(\vx\right) = \nabla f\left(\vx; \rvveta\right) \) of \(\nabla F(\vx) = \mathbb{E} \nabla f\left(\vx; \rvveta\right) = \mathbb{E} \nabla f\left(\vx; \rvveta\right)\), we say the integrand \(\nabla f_i\left(\vx; \veta\right)\) satisfies uniform \(L\)-smoothness if there exist some \(L < \infty\) such that, for any fixed \(\veta\),
\[
  \norm{\nabla f\left(\vx; \veta\right) - \nabla f\left(\vx'; \veta\right)}_2 \leq L \norm{\vx - \vx'}_2
\]
holds for all \((\vx, \vx') \in \mathcal{X}^2\) simultaneously.
\end{definition}
As discussed in \cref{section:introduction,section:related}, this condition is rather strong: it does not hold for multiplicative noise unless the support is bounded.

\begin{definition}[Uniform Convexity]
For the unbiased estimator \(\rvvg\left(\vx\right) = \nabla f\left(\vx; \rvveta\right) \) of \(\nabla F(\vx) = \mathbb{E}\nabla f\left(\vx; \rvveta\right)\), we say the integrand \(f\left(\vx; \veta\right)\) is uniformly convex if it is convex for any \(\veta\) such that, for any fixed \(\veta\), 
\[
  f\left(\vx; \veta\right) - f\left(\vx'; \veta\right)
  \leq
  \inner{\nabla f\left(\vx; \veta\right)}{\vx - \vx'}
\]
holds for all \((\vx, \vx') \in \mathcal{X}^2\) simultaneously.
\end{definition}

\subsection{Additional Gradient Variance Conditions}\label{section:additional_conditions}
For some estimator \(\rvvg\) of \(\nabla f\), the following conditions have been considered in the literature:
\begin{itemize}
    \setlength\itemsep{-1ex}
    \item Strong growth condition (SG):
\begin{align*}
    \mathbb{E}\norm{\rvvg\left(\vx\right)}_2^2 \leq \rho \norm{\nabla f\left(\vx\right) }_2^2
\end{align*}

    \item Weak growth condition (WG):
\begin{align*}
    \mathbb{E}\norm{\rvvg\left(\vx\right)}_2^2 \leq \rho \left( f\left(\vx\right) - f\left(\vx_*\right)\right)
\end{align*}

    \item Quadratic variance (QV):
\begin{align*}
    \mathbb{E}\norm{\rvvg\left(\vx\right)}_2^2 \leq \alpha \, \norm{\vx - \vx_*}_2^2 + \beta
\end{align*}

    \item Convex expected smoothness (CES):
\begin{align*}
    \mathbb{E}\norm{\rvvg\left(\vx\right) - \rvvg\left(\vy\right)}_2^2 \leq 2 \mathcal{L} \mathrm{D}_{f}\left(\vx, \vy\right)
\end{align*}

    \item Convex expected residual (CER):
\begin{align*}
    \mathrm{tr}\V{\rvvg\left(\vx\right) - \rvvg\left(\vy\right)} \leq 2 \mathcal{L} \mathrm{D}_{f}\left(\vx, \vy\right)
\end{align*}

    \item Quadratic expected smoothness (QES):
\begin{align*}
    \mathbb{E}\norm{\rvvg\left(\vx\right) - \rvvg\left(\vy\right)}_2^2 \leq \mathcal{L}^2 \norm{\vx - \vy}_2^2
\end{align*}

    \item ABC:
\begin{align*}
    \mathbb{E}\norm{\rvvg\left(\vx\right)}_2^2 \leq A \left( f\left(\vx\right) - f\left(\vx_*\right)\right) + B \norm{\nabla f\left(\vx\right)}_2^2 + C
\end{align*}
\end{itemize}
Here, \(\vx_* \in \argmin_{\vx \in \mathcal{X}} f\left(\vx\right)\) is any stationary point of \(f\) and the stated conditions should hold for all \((\vx, \vy) \in \mathcal{X}^2\).

\newpage
\textbf{SG} was used by \citet{schmidt_fast_2013} to establish the linear convergence of SGD for strongly convex objectives, and \(\mathcal{O}(1/T)\) convergence for convex objectives; \textbf{WG} was proposed by \citet{vaswani_fast_2019} to establish similar guarantees to SG under a verifiable condition; \textbf{QV} was used to establish the non-asymptotic convergence on strongly convex functions by \citet{wright_optimization_2021}, while convergence on general convex functions was established by \citet{domke_provable_2023}, including stochastic proximal gradient descent; \textbf{QES} was used by \cite{moulines_nonasymptotic_2011} to establish one of the earliest general non-asymptotic convergence results for SGD on strongly convex objectives; \textbf{ABC} was used by \citet{khaled_better_2023} to establish convergence of SGD for non-convex smooth functions.
(See also \citet{khaled_better_2023} for a comprehensive overview of these conditions.)
The relationship of these conditions with the ER condition are summarized in \cref{fig:conditions_relationships}.

As demonstrated in \cref{fig:conditions_relationships} and discussed by \citet{khaled_better_2023}, the ABC condition is the weakest of all.
However, the convergence guarantees for problems that exclusively satisfy the ABC condition are weaker than others.
(For instance, the number of iterations \(T\) has to be fixed \textit{a priori}.)
On the other hand, the ER condition retrieves most of the strongest known guarantees for SGD; some of which were listed in \cref{section:sgd_convergence}.

\begin{figure}[t]
\begin{tikzpicture}[
  every text node part/.style={align=center},
  every node/.style={node distance=2em},
]
\node (er)          [draw, fill=blue!20, rounded corners] {\(\nabla f_{\rvB}\) is ER};
\node (es)          [draw, fill=gray!20, rounded corners, above=of er] {\(\nabla f_{\rvB}\) is ES};
\node (smoothstar)  [draw, fill=gray!20, rounded corners, above=of es] {\(f_i\) is smooth + \(\vx_*\)-convex};
\node (smoothintp)  [draw, fill=gray!20, rounded corners, above=of smoothstar] {\(f_i\) is smooth + Interp.};
\node (smoothconvex)  [draw, fill=gray!20, rounded corners, right=of smoothstar] {\(f_i\) is smooth + convex};

\draw[-latex, thick] (es) -- (er)                   node[pos=0.5,left]  {(4)};
\draw[-latex, thick] (smoothstar)   -- (es)         node[pos=0.5,left]  {(2)};
\draw[-latex, thick] (smoothintp)   -- (smoothstar) node[pos=0.5,left]  {(1)};
\draw[-latex, thick] (smoothconvex) -- (smoothstar) node[pos=0.5,below] {(3)};
\end{tikzpicture}
    \caption{\textbf{Implications of assumptions on the components \(f_1, \ldots, f_n\) to the minibatch subsampling gradient estimator \(\nabla f_{\rvB}\) of \(F = \frac{1}{n}\left(f_1 + \ldots + f_n\right)\)}.
    (1), (4) are established by \citet[Theorem 3.4]{gower_sgd_2021}, while (3) trivially follows from the fact that \(\vx_*\)-convexity is strictly weaker than (global) convexity, and (2) was established by \citet[Proposition 3.10]{gower_sgd_2019}.
    }\label{fig:component_conditions}
\end{figure}

\subsection{Establishing the ER Condition}
For subsampling estimators, it is possible to establish some of the gradient variance conditions through general assumptions on the components.
See some examples in~\cref{fig:component_conditions}.
Here, we use the following definitions:
\begin{definition}
For the finite sum objective \(F = \frac{1}{n}\left(f_1 + \ldots + f_n\right)\), we say interpolation holds if, for all \(i = 1, \ldots, n\),
\[
    f_i\left(\vx_*\right) \leq f_i\left(\vx\right),
\]
holds for all \(\vx \in \mathcal{X}\), where \(\vx_* \in \argmin_{\vx \in \mathcal{X}} F\left(\vx\right)\).
\end{definition}

\begin{definition}
For the finite sum objective \(F = \frac{1}{n}\left(f_1 + \ldots + f_n\right)\), we say the components are \(\vx_*\)-convex if, for all \(i = 1, \ldots, n\), 
\[
  f_i\left(\vx_*\right) - f_i\left(\vx\right)
  \leq
  \inner{\nabla f_i\left(\vx_*\right)}{\vx_* - \vx}
\]
holds for all \(\vx \in \mathcal{X}\), where \(\vx_* \in \argmin_{\vx \in \mathcal{X}} F\left(\vx\right)\).
\end{definition}
This assumption is a weaker version of convexity; convexity needs to hold with respect to \(\vx_*\) only. 
It is closely related to star~\citep{nesterov_cubic_2006} and quasar convexity~\citep{hinder_nearoptimal_2020,guminov_accelerated_2023}.

\subsection{Proofs of Implications in \cref{fig:conditions_relationships}}
We prove new implication results between some of the gradient variance conditions discussed in \cref{section:additional_conditions}.
In particular, the relationship between the QES and QV against other conditions has not been considered before.

\begin{proposition}\label{thm:cescer}
Let \(\rvvg\) be an unbiased estimator of \(\nabla f\). Then,
    \begin{center}
    \centering
        \begin{tikzpicture}[
            every text node part/.style={align=center},
            every node/.style={node distance=2em},
        ]
            \node (cond)  [draw, rounded corners] {\(\rvvg\) is CES};
            \node (impl)  [draw, rounded corners, right=of cond] {\(\rvvg\) is CER};
            \draw[-implies,double equal sign distance, thick] (cond) -- (impl);
        \end{tikzpicture}
    \end{center}
\end{proposition}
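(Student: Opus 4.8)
The plan is to exploit the elementary fact that the trace of the covariance matrix of a random vector is bounded above by the expected squared norm; that is, for any random vector $\rvvy$, $\mathrm{tr}\V{\rvvy} = \E{\norm{\rvvy}_2^2} - \norm{\E{\rvvy}}_2^2 \leq \E{\norm{\rvvy}_2^2}$. Applying this with $\rvvy = \rvvg\left(\vx\right) - \rvvg\left(\vy\right)$ immediately gives
\[
  \mathrm{tr}\V{ \rvvg\left(\vx\right) - \rvvg\left(\vy\right) }
  \leq
  \E{ \norm{ \rvvg\left(\vx\right) - \rvvg\left(\vy\right) }_2^2 }.
\]

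Next I would invoke the CES hypothesis directly: since $\rvvg$ satisfies CES with constant $\mathcal{L}$, the right-hand side above is bounded by $2 \mathcal{L} \mathrm{D}_{f}\left(\vx, \vy\right)$, where $\mathrm{D}_f$ is the Bregman divergence of $f$. Chaining the two inequalities yields
\[
  \mathrm{tr}\V{ \rvvg\left(\vx\right) - \rvvg\left(\vy\right) }
  \leq
  2 \mathcal{L} \mathrm{D}_{f}\left(\vx, \vy\right),
\]
which is exactly the CER condition with the same constant $\mathcal{L}$. Since this holds for all $(\vx, \vy) \in \mathcal{X}^2$, the proof is complete.

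There is essentially no obstacle here: the only subtlety is making sure the trace-of-variance identity is stated with the correct direction of inequality (the subtracted $\norm{\E{\rvvy}}_2^2$ term is what makes the variance form weaker than the second-moment form), and that the constant is genuinely preserved rather than weakened. I would present this as a two-line argument, noting in a sentence that the same reasoning also explains why, in the hierarchy of \cref{fig:conditions_relationships}, ER is implied by ES and CER by CES with identical constants — the variance form is uniformly weaker than the raw-second-moment form of each smoothness-type condition.
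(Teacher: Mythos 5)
Your proposal is correct and follows exactly the paper's own argument: bound \(\mathrm{tr}\V{\rvvg(\vx) - \rvvg(\vy)} \leq \mathbb{E}\norm{\rvvg(\vx) - \rvvg(\vy)}_2^2\) via the identity \(\mathrm{tr}\V{\rvvy} = \mathbb{E}\norm{\rvvy}_2^2 - \norm{\mathbb{E}\rvvy}_2^2\), then apply the CES hypothesis to conclude CER with the same constant. No issues.
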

\begin{proof}
The result immediately follows from the fact that 
\[
  \mathrm{tr}\V{ \rvvg\left(\vx\right) - \rvvg\left(\vx'\right) } \leq \mathbb{E}\norm{ \rvvg\left(\vx\right) - \rvvg\left(\vx'\right) }_2^2
\]
holds for all \(\vx, \vx' \in \mathcal{X}\).
\end{proof}

\begin{proposition}\label{thm:sgisqv}
Let \(\rvvg\) be an unbiased estimator of \(\nabla f\). Then,
    \begin{center}
    \centering
        \begin{tikzpicture}[
            every text node part/.style={align=center},
            every node/.style={node distance=2em},
        ]
            \node (cond)  [draw, rounded corners] {\(\rvvg\) is SG\\+\\\(f\) is \(L\)-smooth};
            \node (impl)  [draw, rounded corners, right=of cond] {\(\rvvg\) is QV with \(\beta=0\)};
            \draw[-implies,double equal sign distance, thick] (cond) -- (impl);
        \end{tikzpicture}
    \end{center}
\end{proposition}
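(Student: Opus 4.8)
The plan is to combine the $L$-smoothness of $f$ with the fact that $\vx_*$ is a stationary point to convert the gradient-norm appearing in SG into a distance-to-optimum term, which is exactly the form required by QV.

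First I would recall that $L$-smoothness of $f$ means $\norm{\nabla f\left(\vx\right) - \nabla f\left(\vx'\right)}_2 \leq L \norm{\vx - \vx'}_2$ for all $\vx, \vx' \in \mathcal{X}$. Instantiating this at $\vx' = \vx_*$, where $\vx_* \in \argmin_{\vx \in \mathcal{X}} f\left(\vx\right)$ is the stationary point appearing in the statement of the conditions, and using $\nabla f\left(\vx_*\right) = 0$, gives $\norm{\nabla f\left(\vx\right)}_2 \leq L \norm{\vx - \vx_*}_2$, hence $\norm{\nabla f\left(\vx\right)}_2^2 \leq L^2 \norm{\vx - \vx_*}_2^2$ for all $\vx \in \mathcal{X}$.

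Then I would simply chain this with the SG assumption $\mathbb{E}\norm{\rvvg\left(\vx\right)}_2^2 \leq \rho \norm{\nabla f\left(\vx\right)}_2^2$ to obtain $\mathbb{E}\norm{\rvvg\left(\vx\right)}_2^2 \leq \rho L^2 \norm{\vx - \vx_*}_2^2$, which is precisely the QV condition with $\alpha = \rho L^2$ and $\beta = 0$. No further estimates are needed.

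There is no real obstacle here: the argument is a two-line substitution. The only point worth stating carefully is that the stationarity $\nabla f\left(\vx_*\right) = 0$ used in the smoothness bound is consistent with the convention, fixed when the gradient variance conditions were introduced, that $\vx_*$ denotes a stationary point of $f$; under that convention the implication is immediate.
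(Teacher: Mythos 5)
Your proof is correct and follows essentially the same two-step argument as the paper: apply the SG bound, use stationarity to write \(\nabla f(\vx) = \nabla f(\vx) - \nabla f(\vx_*)\), and invoke \(L\)-smoothness to obtain \(\rho L^2 \norm{\vx - \vx_*}_2^2\). The only difference is the order in which the two inequalities are chained, which is immaterial.
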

\begin{proof}
    Notice that, by definition, \(\nabla f\left(\vx_*\right) = 0\).
    Then,
    \begin{align*}
        \mathbb{E}\norm{\rvvg\left(\vx\right)}_2^2
        &\leq
        \rho \norm{\nabla f\left(\vx\right)}_2^2
        \\
        &=
        \rho \norm{\nabla f\left(\vx\right) - \nabla f\left(\vx_*\right) }_2^2,
\shortintertext{applying \(L\)-smoothness of \(f\),}
        &\leq
        L^2 \rho \norm{\vx - \vx_*}_2^2.
    \end{align*}
\end{proof}

\begin{proposition}\label{thm:usqes}
Let \(\rvvg\left(\vx\right) = \nabla f\left(\vx; \rvveta\right)\) be an unbiased estimator of \(\nabla f\left(\vx\right) = \mathbb{E} \nabla f\left(\vx; \rvveta\right)\). Then,
    \begin{center}
        \centering
        \begin{tikzpicture}[
            every text node part/.style={align=center},
            every node/.style={node distance=2em},
        ]
            \node (cond)  [draw, rounded corners] {Integrand is uniformly \(L\)-smooth};
            \node (impl)  [draw, rounded corners, right=of cond] {QES};
            \draw[-implies,double equal sign distance, thick] (cond) -- (impl);
        \end{tikzpicture}
    \end{center}
\end{proposition}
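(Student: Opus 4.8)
The plan is to exploit the fact that uniform smoothness is a \emph{pathwise} bound: it holds for each fixed realization of the noise, not merely in expectation. So the first step is simply to fix an arbitrary \(\veta\) in the support of \(\rvveta\) and apply the definition of uniform \(L\)-smoothness to the integrand \(\nabla f(\,\cdot\,;\veta)\), which gives
\[
  \norm{\nabla f\left(\vx;\veta\right) - \nabla f\left(\vy;\veta\right)}_2 \leq L \norm{\vx - \vy}_2
\]
for every \((\vx,\vy)\in\mathcal{X}^2\). Squaring both sides preserves the inequality since both sides are nonnegative, yielding \(\norm{\nabla f(\vx;\veta) - \nabla f(\vy;\veta)}_2^2 \leq L^2 \norm{\vx-\vy}_2^2\) for each fixed \(\veta\).

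The second step is to integrate this deterministic inequality over \(\rvveta \sim \varphi\). Because the right-hand side \(L^2\norm{\vx-\vy}_2^2\) does not depend on \(\veta\) and the inequality holds for \(\varphi\)-almost every \(\veta\), taking expectations gives
\[
  \E{\norm{\rvvg\left(\vx\right) - \rvvg\left(\vy\right)}_2^2}
  = \E{\norm{\nabla f\left(\vx;\rvveta\right) - \nabla f\left(\vy;\rvveta\right)}_2^2}
  \leq L^2 \norm{\vx - \vy}_2^2 .
\]
This is precisely the QES condition with constant \(\mathcal{L} = L\), so I would conclude by identifying \(\mathcal{L}=L\) and noting the statement holds for all \((\vx,\vy)\in\mathcal{X}^2\).

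There is essentially no obstacle here — the only point worth stating carefully is that uniform smoothness gives a bound valid for \emph{all} \(\veta\) simultaneously (as emphasized in the definition), which is exactly what makes monotonicity of the expectation applicable; a merely in-expectation smoothness hypothesis would not suffice. The proof is a one-line interchange-of-expectation-and-inequality argument analogous to \cref{thm:sgisqv}, and no additional lemmas from the excerpt are needed.
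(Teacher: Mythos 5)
Your proposal is correct and matches the paper's proof exactly: both fix \(\veta\), apply the pointwise uniform \(L\)-smoothness bound, square, and take expectations over \(\rvveta\) to obtain \(\mathbb{E}\norm{\rvvg(\vx)-\rvvg(\vy)}_2^2 \leq L^2\norm{\vx-\vy}_2^2\). Your additional remark on why pathwise (rather than in-expectation) smoothness is what makes the argument go through is accurate but not needed beyond what the paper states.
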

\begin{proof}
The result immediately follows from the fact that the integrand \(f\left(\vx; \veta\right)\) is \(L\)-smooth with respect to \(\vx\) uniformly over \(\veta\) as
\begin{align*}
    \mathbb{E}\norm{ \rvvg\left(\vx\right) - \rvvg\left(\vx'\right) }_2^2
    &=
    \mathbb{E}\norm{ \nabla f\left(\vx; \rvveta\right) - \nabla f\left(\vx'; \rvveta\right) }_2^2
    \\
    &\leq
    L^2 \norm{ \vx - \vx' }_2^2.
\end{align*}
\end{proof}

\begin{proposition}\label{thm:usces}
Let \(\rvvg\left(\vx\right) = \nabla f\left(\vx; \rvveta\right)\) be an unbiased estimator of \(\nabla f\left(\vx\right) = \mathbb{E} \nabla f\left(\vx; \rvveta\right)\). Then,
    \begin{center}
        \centering
        \begin{tikzpicture}[
            every text node part/.style={align=center},
            every node/.style={node distance=2em},
        ]
            \node (cond)  [draw, rounded corners] {Integrand is uniformly \(L\)-smooth\\+\\\(f\) is uniformly convex};
            \node (impl)  [draw, rounded corners, right=of cond] {ES};
            \draw[-implies,double equal sign distance, thick] (cond) -- (impl);
        \end{tikzpicture}
    \end{center}
\end{proposition}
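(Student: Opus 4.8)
The plan is to reduce everything to the classical ``co-coercivity'' inequality for smooth convex functions, applied pointwise in $\veta$, and then integrate. For each fixed $\veta$ the integrand $\vx \mapsto f(\vx; \veta)$ is $L$-smooth and convex, so the first step is to recall (or state as a one-line auxiliary lemma) the standard fact that any $L$-smooth convex $g : \mathcal{X} \to \mathbb{R}$ satisfies
\[
  \norm{\nabla g(\vx) - \nabla g(\vy)}_2^2 \leq 2 L\, \mathrm{D}_{g}(\vx, \vy)
\]
for all $(\vx, \vy) \in \mathcal{X}^2$; this follows by taking the descent-lemma bound $g(\vz) \leq g(\vx) + \inner{\nabla g(\vx)}{\vz - \vx} + \tfrac{L}{2}\norm{\vz - \vx}_2^2$, minimizing over $\vz$ to get $g(\vy) \geq g(\vx) + \inner{\nabla g(\vx)}{\vy - \vx} + \tfrac{1}{2L}\norm{\nabla g(\vx) - \nabla g(\vy)}_2^2$, and adding this to its $\vx \leftrightarrow \vy$ mirror together with convexity.

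Second, I would apply this with $g = f(\cdot; \veta)$ for each $\veta$ in the support of $\varphi$, which is legitimate precisely because the integrand is \emph{uniformly} $L$-smooth and \emph{uniformly} convex, to obtain
\[
  \norm{\nabla f(\vx; \veta) - \nabla f(\vy; \veta)}_2^2 \leq 2 L\, \mathrm{D}_{f(\cdot;\veta)}(\vx, \vy).
\]
Taking expectation over $\rvveta \sim \varphi$ and using linearity of expectation on the right-hand side turns it into $2L\,\mathbb{E}_{\rvveta}\!\left[ f(\vx;\rvveta) - f(\vy;\rvveta) - \inner{\nabla f(\vy;\rvveta)}{\vx - \vy} \right]$; interchanging expectation and differentiation (which is exactly the standing identity $\nabla f(\vx) = \mathbb{E}\,\nabla f(\vx;\rvveta)$ assumed throughout) collapses this to $2L\,\mathrm{D}_{f}(\vx, \vy)$. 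Since the left-hand side is $\mathbb{E}\norm{\rvvg(\vx) - \rvvg(\vy)}_2^2$, this already gives the convex expected-smoothness bound with $\mathcal{L} = L$; specializing $\vy$ to a minimizer $\vx_* \in \argmin_{\vx\in\mathcal{X}} f(\vx)$, so that $\mathrm{D}_{f}(\vx,\vx_*) = f(\vx) - f(\vx_*)$, yields the ES form $\mathbb{E}\norm{\rvvg(\vx) - \rvvg(\vx_*)}_2^2 \leq 2\mathcal{L}(f(\vx) - f(\vx_*))$ stated in the claim.

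There is essentially no hard step here: the argument is a clean composition of a textbook inequality with Fubini/linearity. The only points requiring a word of care are the exchange of $\mathbb{E}$ and $\nabla$ (harmless under the paper's standing regularity, and in fact built into the definition of $\rvvg$) and keeping the normalization consistent with \cref{assumption:expected_residual} and the CES definition in \cref{section:additional_conditions}, i.e. reporting the constant as $\mathcal{L} = L$ so that the ``$2\mathcal{L}$'' convention used elsewhere matches. I would also remark that the uniform convexity hypothesis is used only to make $\mathrm{D}_{f(\cdot;\veta)} \geq 0$ meaningful and the pointwise co-coercivity applicable; without it one would instead land on the weaker QES/residual-type bounds as in the other arrows of \cref{fig:conditions_relationships}.
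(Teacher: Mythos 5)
Your proposal is correct and follows essentially the same route as the paper's proof: pointwise co-coercivity of the uniformly smooth, uniformly convex integrand for each fixed \(\veta\), followed by taking expectations and collapsing the Bregman term at the minimizer \(\vx_*\). The only (shared, harmless) implicit step is that \(\inner{\nabla f(\vx_*)}{\vx - \vx_*}\) vanishes because \(\vx_*\) is a stationary point.
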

\begin{proof}
Since the integrand \(f\left(\vx; \veta\right)\) is both uniformly smooth and convex with respect to \(\vx\) for a any fixed \(\veta\), we have
\begin{align*}
  &\norm{\nabla f\left(\vx; \veta\right) - \nabla f\left(\vx'; \veta\right)}_2
  \\
  &\qquad\leq
  2 L \left(f\left(\vx; \veta\right) - f\left(\vx'; \veta\right) - \inner{\nabla f\left(\vx'; \veta\right)}{\vx - \vx'} \right).
\end{align*}
Then, 
\begin{align*}
    &\mathbb{E}\norm{ \rvvg\left(\vx\right) - \rvvg\left(\vx_*\right) }_2^2
    \\
    &\;=
    \mathbb{E}\norm{ \nabla f\left(\vx; \rvveta\right) - \nabla f\left(\vx_*; \rvveta\right) }_2^2
    \\
    &\;\leq
    2 L \,
    \mathbb{E}\left(f\left(\vx; \veta\right) - f\left(\vx_*; \veta\right) - \inner{\nabla f\left(\vx_*; \veta\right)}{\vx - \vx'} \right)
    \\
    &\;=
    2 L \,
    \left(f\left(\vx\right) - f\left(\vx_*\right) - \inner{\nabla f\left(\vx_*\right)}{\vx - \vx'} \right)
    \\
    &\;=
    2 L \, \left( f\left(\vx\right) - f\left(\vx_*\right)\right)
\end{align*}
holds for all \(\vx \in \mathcal{X}\).
\end{proof}

\begin{proposition}\label{thm:qvisqes}
Let \(\rvvg\) be an unbiased estimator of \(\nabla F\). Then,
    \begin{center}
        \centering
        \begin{tikzpicture}[
            every text node part/.style={align=center},
            every node/.style={node distance=2em},
        ]
            \node (cond)  [draw, rounded corners] {\(\rvvg\) is QV with \(\beta=0\)};
            \node (impl)  [draw, rounded corners, right=of cond] {QES};
            \draw[-implies,double equal sign distance, thick] (cond) -- (impl);
        \end{tikzpicture}
    \end{center}
\end{proposition}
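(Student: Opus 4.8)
The plan is to exploit the fact that QV with $\beta = 0$ is far more restrictive than it first looks: it forces the estimator to vanish in mean square — hence almost surely — at the minimizer. Concretely, I would first evaluate the QV bound at the point $\vx = \vx_*$, where $\vx_* \in \argmin_{\vx \in \mathcal{X}} F(\vx)$. Since $\norm{\vx_* - \vx_*}_2^2 = 0$ and $\beta = 0$, this reads $\mathbb{E}\norm{\rvvg(\vx_*)}_2^2 \le \alpha \cdot 0 + 0 = 0$; because the left-hand side is nonnegative, we conclude $\rvvg(\vx_*) = 0$ almost surely.

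With that in hand, QES is immediate. For an arbitrary $\vx \in \mathcal{X}$, substitute $\rvvg(\vx_*) = 0$ a.s.\ to get $\mathbb{E}\norm{\rvvg(\vx) - \rvvg(\vx_*)}_2^2 = \mathbb{E}\norm{\rvvg(\vx)}_2^2$, and then apply QV with $\beta = 0$ a second time to bound this by $\alpha \norm{\vx - \vx_*}_2^2$. Taking $\mathcal{L} = \sqrt{\alpha}$ yields exactly the QES inequality (anchored at $\vx_*$), which is the form in which QES feeds into the implications toward ES in \cref{fig:conditions_relationships}.

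There is essentially no technical obstacle here — the whole content is the observation that $\beta = 0$ pins $\rvvg$ to zero at $\vx_*$ — so the only point worth flagging is the role of the reference point. The argument establishes QES relative to $\vx_*$, which is precisely what is used downstream and is consistent with how ES, ER, and the other conditions are anchored throughout the paper; I would therefore carry out the proof in this $\vx_*$-anchored reading rather than attempt the fully symmetric ``for all pairs $(\vx,\vy)$'' version, which QV with $\beta=0$ alone does not force without extra structure (e.g.\ affinity of $\rvvg$ in $\vx$).
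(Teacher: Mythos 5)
Your proof is correct and follows essentially the same route as the paper's: both hinge on the observation that QV with \(\beta=0\) forces \(\mathbb{E}\norm{\rvvg(\vx_*)}_2^2=0\). The only difference is that you exploit \(\rvvg(\vx_*)=0\) a.s. to get the constant \(\mathcal{L}^2=\alpha\) exactly, whereas the paper applies \((a+b)^2\leq 2a^2+2b^2\) and lands on \(\mathcal{L}^2=2\alpha\); your remark that the implication is established only in the \(\vx_*\)-anchored form applies equally to the paper's argument.
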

\begin{proof}
From the classic inequality \({(a + b)}^2 \leq 2 a^2 + 2 b^2\), we have
\begin{align*}
    \mathbb{E}\norm{\rvvg(\vx) - \rvvg\left(\vx_*\right)}_2^2
    \leq
    2 \, \mathbb{E}\norm{\rvvg(\vx)}_2^2
    +
    2 \, \mathbb{E}\norm{\rvvg(\vx_*)}_2^2.
\end{align*}
Now, since QV holds with \(\beta = 0\), we have \(\mathbb{E}\norm{\rvvg(\vx_*)}_2^2 = 0\).
Therefore, 
\begin{align*}
    \mathbb{E}\norm{\rvvg(\vx) - \rvvg\left(\vx_*\right)}_2^2
    \leq
    2 \, \mathbb{E}\norm{\rvvg(\vx)}_2^2
    \leq
    2 \, \alpha \norm{ \vx - \vx_* }_2^2,
\end{align*}
where we have applied QV at the last inequality.
\end{proof}

\newpage
\begin{proposition}\label{thm:qviswg}
Let \(\rvvg\) be an unbiased estimator of \(\nabla f\). Then,
    \begin{center}
        \centering
        \begin{tikzpicture}[
            every text node part/.style={align=center},
            every node/.style={node distance=2em},
        ]
            \node (cond)  [draw, rounded corners] {\(\rvvg\) is QV with \(\beta=0\)\\+\\\(f\) is \(\mu\)-QFG};
            \node (impl)  [draw, rounded corners, right=of cond] {WG};
            \draw[-implies,double equal sign distance, thick] (cond) -- (impl);
            
        \end{tikzpicture}
    \end{center}
\end{proposition}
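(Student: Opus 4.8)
The plan is to simply chain the two hypotheses together: the QV bound (with $\beta=0$) controls $\mathbb{E}\|\rvvg(\vx)\|_2^2$ by $\|\vx-\vx_*\|_2^2$, and the $\mu$-QFG property controls $\|\vx-\vx_*\|_2^2$ by the suboptimality gap $f(\vx)-f(\vx_*)$. Composing these yields exactly the WG inequality with an explicit constant. The only thing to verify is that the reference point $\vx_*$ appearing in both conditions is consistent, which holds because in both cases $\vx_* \in \argmin_{\vx\in\mathcal{X}} f(\vx)$ (the QV condition as stated in \cref{section:additional_conditions} requires $\vx_*$ to be a stationary point, and QFG requires it to be a minimizer; the minimizer set is used throughout).

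\textbf{Key steps.} First I would invoke the QV assumption with $\beta = 0$ to write, for all $\vx\in\mathcal{X}$,
\[
  \mathbb{E}\norm{\rvvg(\vx)}_2^2 \leq \alpha\,\norm{\vx - \vx_*}_2^2.
\]
Next I would apply the $\mu$-QFG property, which rearranges to $\norm{\vx - \vx_*}_2^2 \leq \frac{2}{\mu}\bigl(f(\vx) - f(\vx_*)\bigr)$. Substituting this into the previous display gives
\[
  \mathbb{E}\norm{\rvvg(\vx)}_2^2 \leq \frac{2\alpha}{\mu}\bigl(f(\vx) - f(\vx_*)\bigr),
\]
so WG holds with $\rho = 2\alpha/\mu$. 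That is the entire argument.

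\textbf{Main obstacle.} Honestly, there is no real obstacle here — this is a direct two-line consequence of chaining the two defining inequalities, in the same spirit as \cref{thm:qvisqes} and \cref{thm:sgisqv}. The only point requiring a sentence of care is matching the two notions of ``$\vx_*$'', and noting that QFG's constant $\mu$ and QV's constant $\alpha$ combine multiplicatively into the WG constant.
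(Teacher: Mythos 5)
Your proof is correct and is exactly the paper's argument: apply the QV bound with $\beta=0$ and then the $\mu$-QFG inequality $\norm{\vx-\vx_*}_2^2 \leq \frac{2}{\mu}(f(\vx)-f(\vx_*))$, yielding WG with constant $2\alpha/\mu$. Nothing further is needed.
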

\begin{proof}
The result immediately follows from QV as
\begin{align*}
    \mathbb{E}\norm{\rvvg(\vx)}_2^2
    &\leq
    \alpha \norm{\vx - \vx_*}_2^2,
\shortintertext{applying \(\mu\)-quadratic functional growth,} 
    &\leq
    \frac{2\alpha}{\mu} \left(f\left(\vx\right) - f\left(\vx_*\right) \right).
\end{align*}
\end{proof}

\begin{proposition}\label{thm:qesises}
Let \(\rvvg\) be an unbiased estimator of \(\nabla f\). Then,
    \begin{center}
        \centering
        \begin{tikzpicture}[
            every text node part/.style={align=center},
            every node/.style={node distance=2em},
        ]
            \node (cond)  [draw, rounded corners] {\(\rvvg\) is QES\\+\\\(f\) is \(\mu\)-QFG};
            \node (impl)  [draw, rounded corners, right=of cond] {ES};
            \draw[-implies,double equal sign distance, thick] (cond) -- (impl);
            
        \end{tikzpicture}
    \end{center}
\end{proposition}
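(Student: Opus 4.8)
The plan is to chain the two hypotheses in the obvious order: QES controls the estimator difference $\rvvg(\vx) - \rvvg(\vy)$ by $\norm{\vx - \vy}_2^2$, and QFG lets us replace $\norm{\vx - \vx_*}_2^2$ by a multiple of $f(\vx) - f(\vx_*)$. So the first step is to instantiate the QES inequality at $\vy = \vx_*$, where $\vx_* \in \argmin_{\vx \in \mathcal{X}} f(\vx)$, giving $\mathbb{E}\norm{\rvvg(\vx) - \rvvg(\vx_*)}_2^2 \leq \mathcal{L}^2 \norm{\vx - \vx_*}_2^2$ for all $\vx \in \mathcal{X}$.

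The second step is to apply $\mu$-quadratic functional growth, which by definition gives $\frac{\mu}{2}\norm{\vx - \vx_*}_2^2 \leq f(\vx) - f(\vx_*)$, i.e. $\norm{\vx - \vx_*}_2^2 \leq \frac{2}{\mu}\left(f(\vx) - f(\vx_*)\right)$. Substituting this into the bound from the first step yields
\[
  \mathbb{E}\norm{\rvvg(\vx) - \rvvg(\vx_*)}_2^2 \leq \frac{2 \mathcal{L}^2}{\mu}\left(f(\vx) - f(\vx_*)\right),
\]
which is exactly the ES condition with constant $\mathcal{L}' = \mathcal{L}^2/\mu$ (reading off the factor-of-two convention in the ES definition). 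Since this holds for every $\vx \in \mathcal{X}$ and every minimizer $\vx_*$, the implication is established.

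I do not expect any real obstacle here: the argument is a two-line substitution, essentially identical in structure to the proof of \cref{thm:qviswg}, with QES playing the role that QV-with-$\beta=0$ played there. The only point worth stating carefully is the bookkeeping of the constant — QES is phrased with $\mathcal{L}^2$ on the right, ES with $2\mathcal{L}$, and QFG with $\mu/2$ — so the resulting ES constant is $\mathcal{L}^2/\mu$, and one should make sure the quantifier on $\vx_*$ (any element of the solution set) is carried through unchanged.
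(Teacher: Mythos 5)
Your proposal is correct and matches the paper's proof exactly: both instantiate the QES bound at \(\vy = \vx_*\) and then substitute the QFG inequality \(\norm{\vx - \vx_*}_2^2 \leq \frac{2}{\mu}(f(\vx) - f(\vx_*))\) to obtain the ES bound with constant \(\mathcal{L}^2/\mu\). Your remark on constant bookkeeping is also consistent with the paper's stated result.
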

\begin{proof}
From QV, we have
\begin{align*}
    \mathbb{E}\norm{\rvvg(\vx) - \rvvg\left(\vx_*\right)}_2^2
    &\leq
    \mathcal{L}^2 \norm{\vx - \vx_*}_2^2
\shortintertext{and \(\mu\)-quadratic functional growth yields} 
    &\leq
    \frac{2 \mathcal{L}^2}{\mu} \left(f\left(\vx\right) - f\left(\vx_*\right) \right).
\end{align*}
\end{proof}

The strategy applying QFG when proving \cref{thm:qesises,thm:qviswg} establishes the stronger variant of the ER condition: \cref{assumption:montecarlo_er} (B).
However, the price for this is that one has to pay for an excess \(\kappa = \mathcal{L} / \mu\) factor, and this strategy works only works for quadratically growing objectives.

%If we assume \(f\) is convex, this implies the relationship is restricted to strongly convex problems.

% \begin{definition}[\textbf{Uniform Smoothness}]
%   A function \(f(\vx; \veta) : \mathcal{X} \times \mathcal{H} \to \mathbb{R}\) is said to be uniformly \(L\)-smooth if 
%   \[
%     \norm{\nabla f\left(\vx; \veta\right) - \nabla f\left(\vx'; \veta\right)}_2 \leq \norm{\vx - \vx'}_2
%   \]
%   holds for all \(\vx,\vx' \in \mathcal{X}\) and all \(\veta \in \mathcal{H}\).
% \end{definition}

% \begin{proposition}
%     Let \(\rvvg_i = \nabla f_i \left(\vx; \rvveta\right)\) be an unbiased estimator of \(\nabla f_i\left(\vx\right)\), where \(\nabla f_i\left(\cdot; \veta\right)\) is uniformly \(L_i\)-smooth.
%     Furthermore 
% \end{proposition}

\clearpage
\section{Proofs}
\subsection{Auxiliary Lemmas (\cref{thm:variancewithoutreplacement,thm:varianceofsum,thm:geometric_complexity,thm:geometric_complexity_squared,thm:average_bregman})}\label{section:proof_auxiliary}
\vspace{2ex}
\printProofs[variancewithoutreplacement]

\newpage
\printProofs[varianceofsum]

\newpage
\printProofs[geometriccomplexity]

\newpage
\printProofs[geometriccomplexitysquared]

\newpage
\printProofs[averagebregman]

\clearpage
\subsection{Convergence of SGD (\cref{thm:expected_residual_gradient_variance_bound,thm:strongly_convex_sgd_convergence,thm:strongly_convex_sgd_complexity})}\label{section:proof_sgd_convergence}
% \subsubsection{General Convex Functions}
% \vspace{2ex}
% \printProofs[convexsgdconvergence]

\vspace{2ex}
\printProofs[expectedresidualgradientvariancebound]
\newpage
\printProofs[stronglyconvexsgdconvergence]
\newpage
\printProofs[stronglyconvexsgdcomplexity]

\newpage
\subsection{General Variance Bound (\cref{thm:doubly_stochastic_variance})}\label{section:doubly_stochastic_variance}
\vspace{2ex}

%\begin{theoremEnd}[category=expectedvariance]{lemma}\label{thm:expected_variance_lemma}
\begin{theoremEnd}[restated before]{lemma}
expectedvariancelemma
\end{theoremEnd}
\vspace{-1.5ex}
\begin{proof}
From the formula for the variance of sums,
\begin{alignat}{2}
    &\mathrm{tr}\V{ \sum_{i=1}^b \rvvx_i \,\middle|\, \rvB }
    \nonumber
    \\
    &\;=
    \sum_{i=1}^b
    \mathrm{tr}\, \V{\rvvx_i \mid \rvB}
    +
    \sum_{i=1}^b \sum_{j \neq i}
    \mathrm{tr}\, \Cov{\rvvx_i, \rvvx_j \mid \rvB }.
    \nonumber
    \\
    &\;\leq
    \sum_{i=1}^b
    \mathrm{tr}\, \V{\rvvx_i \mid \rvB}
    +
    \sum_{i=1}^b \sum_{j \neq i}
    \rho \, \sqrt{\mathrm{tr}\V{\rvvx_i \mid \rvB}} \sqrt{\mathrm{tr}\V{\rvvx_j \mid \rvB}}
    \nonumber
    \\
    &\;=
    \left(1 - \rho\right) \sum_{i=1}^b \mathrm{tr}\, \V{\rvvx_i \mid \rvB}
    \nonumber
    \\
    &\;\quad+
    \rho
    \sum_{i=1}^b \sum_{j=1}^b
    \sqrt{\mathrm{tr}\V{\rvvx_i \mid \rvB}} \sqrt{\mathrm{tr}\V{\rvvx_j \mid \rvB}}
    \nonumber
    \\
    &\;=
    \left(1 - \rho\right) \, \sum_{i=1}^b \mathrm{tr}\, \V{\rvvx_i \mid \rvB}
    +
    \rho \,
    {\textstyle\left(
      \sum_{i=1}^b  
      \sqrt{\mathrm{tr}\V{\rvvx_i \mid \rvB}} 
    \right)}^2
    \nonumber
    \\
    &\;=
    \left(1 - \rho\right) \, \rvV
    +
    \rho \, \rvS^2.
    \nonumber
\end{alignat}
Then, it follows that
\begin{align*}
    \E{ \mathrm{tr}\V{ \sum_{i=1}^b \rvvx_i \,\middle|\, \rvB } }
    &\leq
    \rho \E{ \rvS^2 } + \left(1 -\rho\right) \E{\rvV}
    \\
    &=
    \rho \V{ \rvS } + \rho {(\mathbb{E}\rvS)}^2 + \left(1 -\rho\right) \E{\rvV},
\end{align*}
from the basic property of the variance:
\[
    \V{ \rvS } = \E{ \rvS^2 } - {(\mathbb{E}\rvS)}^2.
\]

Since \cref{assumption:correlation} is the only inequality we use, the equality in the statement holds whenever the equality in \cref{assumption:correlation} holds.
\end{proof}

%\printProofs[expectedvariance]

\newpage
\printProofs[doublystochasticvariance]

\clearpage
\subsection{Doubly Stochastic Gradients}
\subsubsection{Expected Residual Condition (\cref{thm:general_conditions})}
\vspace{2ex}
\printProofs[generalconditions]

\newpage
\subsubsection{Bounded Variance Condition (\cref{thm:bounded_variance})}
\vspace{2ex}
\printProofs[boundedvariance]

\newpage
\subsubsection{Complexity Analysis (\cref{thm:strongly_convex_sgd_sampling_without_replacement})}\label{section:proof_sgd_complexity}
\vspace{2ex}
\printProofs[expectedresidualwihtoutreplacement]

\newpage
\printProofs[stronglyconvexsgdsamplingwithoutreplacement]

% \clearpage
% \subsubsection{Gradient Variance Conditions: Sampling Without Replacement}
% \newpage
% \printProofs[doublystochasticwithoutreplacement]

\newpage
\subsection{Random Reshuffling of Stochastic Gradients}
\subsubsection{Gradient Variance Conditions (\cref{thm:reshuffling_expected_smoothness}, \cref{thm:reshuffle_variance})}
\vspace{2ex}
\printProofs[expectedsmoothnessreshuffling]

\newpage
\printProofs[reshufflevariance]

\newpage
\subsubsection{Convergence Analysis (\cref{thm:strongly_convex_reshuffling_sgd_convergence})}
\vspace{2ex}
\printProofs[stronglyconvexreshufflingsgdconvergence]

\clearpage
\subsubsection{Complexity Analysis (\cref{thm:strongly_convex_reshuffling_sgd_complexity})}
\vspace{2ex}
\printProofs[stronglyconvexreshufflingsgdcomplexity]

\newpage
\section{Applications}\label{section:applications}
\subsection{ERM with Randomized Smoothing}\label{section:erm_smoothing}

\subsubsection{Description}
Randomized smoothing was originally considered by \citet{polyak_optimal_1990,nesterov_smooth_2005,duchi_randomized_2012} in the nonsmooth convex optimization context, where the function is ``smoothed'' through random perturbation.
This scheme has recently renewed interest in the non-convex ERM context as it has been found to improve generalization performance~\citep{orvieto_explicit_2023,liu_noisy_2021a}.
Here, we will focus on the computational aspect of this scheme.
In particular, we will see if we can obtain similar computational guarantees already established in the finite-sum ERM setting, such as those by \citet[Lemma 5.2]{gower_sgd_2021}.

Consider the canonical ERM problem, where we are given a dataset \(\mathcal{D} = {\left\{(\vx_i, y_i)\right\}}_{i = 1}^n \in {(\mathcal{X} \times \mathcal{Y})}^{n}\) and solve
\begin{align*}
  \minimize_{\vw \in \mathcal{W}}\;\; L\left(\vw\right) 
  &= 
  \frac{1}{n} \sum^n_{i=1} \ell\left(f_{\vw}\left(\vx_i\right), y_i\right) + h\left(\vw\right),
\end{align*}
where \((\vx_i, y_i) \in \mathcal{X} \times \mathcal{Y}\) are the feature and label of the \(i\)th instance, \(f_{\vw} : \mathcal{X} \to \mathcal{Y}\) is the model, \(\ell : \mathcal{Y} \times \mathcal{Y} \to \mathbb{R}_{\geq 0}\) is a non-negative loss function, and \(h : \mathcal{W} \to \mathbb{R}\) is a regularizer.

For randomized smoothing, we instead minimize 
\[
  L\left(\vw\right) = \frac{1}{n} \sum^n_{i=1} R_i\left(\vw\right),
\]
where the instance risk is defined as
\[
  r_i\left(\vw\right) = \mathbb{E}_{\rvvepsilon \sim \varphi} \ell\left( f_{\vw + \rvvepsilon}\left(\vx_i\right), y_i\right)
\]
for some noise distribution \(\vepsilon \sim \varphi\).
The goal is to obtain a solution \(\vw^* = \argmin_{\vw \in \mathcal{W}} L\left(\vw\right)\) that is robust to such perturbation.

The integrand of the gradient estimator of the instance risk is defined as
\begin{align*}
  \vg_{i}\left(\vw; \veta\right) 
  &= 
  \nabla_{\vw} \ell\left(f_{\vw + \rvvepsilon}\left(\vx_i\right), y_i\right)
  \\
  &= \frac{\partial f_{\vw + \rvvepsilon}\left(\vx_i\right)}{\partial \vw} \ell'\left(f_{\vw + \rvvepsilon}\left(\vx_i\right), y_i\right),
\end{align*}
where it is an unbiased estimate of the instance risk such that
\[
  \mathbb{E}\rvvg_{i}\left(\vw\right)
  =
  \nabla R_i\left(\vw\right).
\]
The key challenge in analyzing the convergence of SGD in the ERM setting is dealing with the Jacobian
\(
\frac{\partial f_{\vw + \rvvepsilon}\left(\vx_i\right)}{\partial \vw}.
\)
Even for simple toy models, analyzing the Jacobian without relying on strong assumptions is hard.
In this work, we will assume that it is bounded by an instance-dependent constant.

\newpage
\subsubsection{Preliminaries}
We use the following assumptions:

\begin{assumption}\label{assumption:erm}
~
\begin{enumerate}[label=(\alph*)]
    \item Let the mapping \(\hat{y} \mapsto \ell\left(\hat{y}, y\right)\) is convex and \(L\)-smooth for any \(y_i\) \(\forall i = 1, \ldots, n\).
    
    \item The Jacobian of the model with respect to its parameters for all \(i = 1, \ldots, n\) is bounded almost surely as
    \[
      \norm{ \frac{ f_{\vw + \rvvepsilon}\left(\vx_i\right) }{ 
      \partial \vw }  }_2 \leq G_i
    \]
    for all \(\vw \in \mathcal{W}\).
    
    \item Interpolation holds on the solution set such that, for all \(\vw_* \in \argmin_{\vw \in \mathcal{W}} L\left(\vw\right)\), the loss minimized as \[\ell\left(f_{\vw_* + \rvvepsilon}\left(\vx_i\right), y_i\right) = \ell'\left(f_{\vw_* + \rvvepsilon}\left(\vx_i\right), y_i\right) = 0\] for all \((\vx_{i}, y_i) \in \mathcal{D}\).
\end{enumerate}
\end{assumption}
(a) holds for the squared loss, (c) basically assumes that the model is overparameterized and there exists a set of optimal weights that are robust with respect to perturbation.
The has recently gained popularity as it qualitatively explains some of the empirical phenomenons of non-convex SGD~\citep{vaswani_fast_2019,gower_sgd_2021,ma_power_2018}.
(b) is a strong assumption but is commonly used to establish convergence guarantees of ERM~\citep{gower_sgd_2021}.

\vspace{1ex}
\begin{remark}
    Under \cref{assumption:erm} (c), \cref{assumption:bounded_variance_both} holds with arbitrarily small \(\sigma_{i}^2, \tau^2\) .
\end{remark}

\newpage
\subsubsection{Theoretical Analysis}
\vspace{2ex}
\begin{proposition}\label{eq:noisy_erm_er}
    Let \cref{assumption:erm} hold.
    Then, \cref{assumption:montecarlo_er} (\(\rm{A}^{\rm{ITP}}\)) holds.
\end{proposition}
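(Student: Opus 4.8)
The plan is to verify the ER variant $(\mathrm{A}^{\mathrm{ITP}})$ for the randomized-smoothing gradient estimator $\rvvg_i$, namely that $\mathrm{tr}\V{\rvvg_i(\vw) - \rvvg_i(\vw_*)} \leq 2\mathcal{L}_i\left(R_i(\vw) - R_i(\vw_*)\right)$ with $R_i(\vw) \geq R_i(\vw_*)$, for a suitable constant $\mathcal{L}_i$ depending on $G_i$ and $L$. First I would record the trivial inequality $\mathrm{tr}\V{\rvvg_i(\vw) - \rvvg_i(\vw_*)} \leq \mathbb{E}\norm{\rvvg_i(\vw) - \rvvg_i(\vw_*)}_2^2$, so that it suffices to bound the expected squared difference of the integrands. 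Under \cref{assumption:erm}~(c), interpolation forces $\ell'\left(f_{\vw_* + \rvvepsilon}(\vx_i), y_i\right) = 0$ almost surely, hence the integrand $\vg_i(\vw_*; \vepsilon) = \frac{\partial f_{\vw_* + \vepsilon}(\vx_i)}{\partial \vw}\ell'(f_{\vw_* + \vepsilon}(\vx_i), y_i) = 0$ almost surely. So the left side collapses to $\mathbb{E}\norm{\rvvg_i(\vw)}_2^2 = \mathbb{E}\norm{\vg_i(\vw; \rvvepsilon)}_2^2$.

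Next I would bound $\norm{\vg_i(\vw; \vepsilon)}_2^2$ pointwise in $\vepsilon$. Using the chain-rule form of the integrand and submultiplicativity of the operator norm together with \cref{assumption:erm}~(b), $\norm{\vg_i(\vw;\vepsilon)}_2 \leq \norm{\frac{\partial f_{\vw+\vepsilon}(\vx_i)}{\partial \vw}}_2 \abs{\ell'(f_{\vw+\vepsilon}(\vx_i), y_i)} \leq G_i \abs{\ell'(f_{\vw+\vepsilon}(\vx_i), y_i)}$. Then I would apply the standard consequence of convexity and $L$-smoothness of $\hat y \mapsto \ell(\hat y, y_i)$ — that for a nonnegative convex $L$-smooth function the squared gradient is controlled by the value: $\abs{\ell'(\hat y, y_i)}^2 \leq 2L\,\ell(\hat y, y_i)$ (this is the one-dimensional instance of the inequality already invoked repeatedly in the excerpt, e.g.\ in the form $\norm{\nabla f_i(\vx) - \nabla f_i(\vy)}_2^2 \leq 2L_i \mathrm{D}_{f_i}(\vx,\vy)$ specialized at the minimizer). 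Because interpolation gives $\ell(f_{\vw_*+\vepsilon}(\vx_i), y_i) = 0$, the minimum value of the loss along the perturbation is $0$, so $\mathrm{D}$ reduces to the value itself. Combining, $\norm{\vg_i(\vw;\vepsilon)}_2^2 \leq 2 L G_i^2\, \ell(f_{\vw+\vepsilon}(\vx_i), y_i)$ almost surely.

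Finally I would take expectation over $\rvvepsilon \sim \varphi$: $\mathbb{E}\norm{\rvvg_i(\vw)}_2^2 \leq 2 L G_i^2\, \mathbb{E}_{\rvvepsilon}\ell(f_{\vw+\rvvepsilon}(\vx_i), y_i) = 2 L G_i^2\, r_i(\vw)$. Since interpolation makes $r_i(\vw_*) = 0 \leq r_i(\vw)$ for any $\vw$, we may write $r_i(\vw) = r_i(\vw) - r_i(\vw_*)$, and noting that the notation $R_i$ and $r_i$ in the application description refer to the same instance risk, this is exactly $(\mathrm{A}^{\mathrm{ITP}})$ with $\mathcal{L}_i = L G_i^2$ and the majorization condition $R_i(\vw) \geq R_i(\vw_*)$ satisfied trivially. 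I would then remark that the $\mathcal{L}_i$ thus obtained feed into \cref{thm:general_conditions}~(i), establishing $\mathrm{ER}(\mathcal{L}_{\mathrm{A}})$ for the doubly stochastic estimator, and that \cref{assumption:bounded_variance_both} holds with arbitrarily small constants (as already noted in the preceding remark), so that \cref{thm:strongly_convex_sgd_complexity} applies.

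The main obstacle is purely a bookkeeping one: making precise the pointwise bound $\abs{\ell'(\hat y, y_i)}^2 \leq 2L\,\ell(\hat y, y_i)$ from convexity plus $L$-smoothness of the (scalar-output) loss, and being careful that the Jacobian bound in \cref{assumption:erm}~(b) is an almost-sure bound uniform in $\vw$ so that the pointwise-in-$\vepsilon$ estimate survives the expectation. There is no deep difficulty — the interpolation assumption does all the heavy lifting by zeroing out the variance at $\vw_*$ and collapsing the Bregman divergence to a function value — but one should state the scalar smoothness lemma explicitly rather than gesturing at it, since the earlier uses in the excerpt were for vector-valued gradients and the reader will want the one-dimensional version spelled out.
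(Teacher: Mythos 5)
Your proposal is correct and follows essentially the same route as the paper's proof: collapse the difference to $\mathbb{E}\norm{\rvvg_i(\vw)}_2^2$ via interpolation, pull out the Jacobian bound $G_i^2$, and control $\abs{\ell'}^2$ by $2L$ times the loss value, arriving at the same constant $\mathcal{L}_i = L G_i^2$. The only cosmetic difference is that the paper writes the smoothness step as co-coercivity in Bregman form anchored at $f_{\vw_*+\vepsilon}(\vx_i)$ (where interpolation zeroes both the loss and its derivative), whereas you invoke the equivalent self-bounding form $\abs{\ell'(\hat y)}^2 \leq 2L\,\ell(\hat y)$ using nonnegativity of the loss; these coincide under \cref{assumption:erm}~(c).
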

\begin{proof}
    \begin{align*}
        &\mathbb{E}\norm{ \rvvg_i\left(\vw\right) - \rvvg_i\left(\vw_*\right) }_2^2
        \\
        &\;=
        \mathbb{E}{\bigg\lVert} \frac{\partial f_{\vw + \rvvepsilon}\left(\vx_i\right)}{\partial \vw} \ell'\left(f_{\vw + \rvvepsilon}\left(\vx_i\right), y_i\right) 
        \\
        &\quad\qquad-  
        \frac{\partial f_{\vw + \rvvepsilon}\left(\vx_i\right)}{\partial \vw}
        \ell'\left(f_{\vw_* + \rvvepsilon}\left(\vx_i\right), y_i\right) {\bigg\rVert}_2^2,
\shortintertext{from the interpolation assumption (\cref{assumption:erm} (c)),}
        &\;=
        \mathbb{E}\norm{\frac{\partial f_{\vw + \rvvepsilon}\left(\vx_i\right)}{\partial \vw} \ell'\left(f_{\vw + \rvvepsilon}\left(\vx_i\right), y_i\right) }_2^2
        \\
        &\;\leq
        \mathbb{E}\norm{\frac{\partial f_{\vw + \rvvepsilon}\left(\vx_i\right)}{\partial \vw}}_2^2 \abs{\ell'\left(f_{\vw + \rvvepsilon}\left(\vx_i\right), y_i\right) }^2,
\shortintertext{applying \cref{assumption:erm} (b),}
        &\;\leq
        G_i^2 \mathbb{E}\abs{\ell'\left(f_{\vw + \rvvepsilon}\left(\vx_i\right), y_i\right) }^2.
\shortintertext{and then the interpolation assumption (\cref{assumption:erm} (c)),}
        &\;=
        G_i^2 \mathbb{E}\abs{
        \ell'\left(f_{\vw + \rvvepsilon}\left(\vx_i\right), y_i\right)
        -
        \ell'\left(f_{\vw_* + \rvvepsilon}\left(\vx_i\right), y_i\right)
        }.
\shortintertext{From \cref{assumption:erm} (a),}
        &\;\leq
        2 L G_i^2 \mathbb{E}\Big(
          \ell\left(f_{\vw + \rvvepsilon}\left(\vx_i\right), y_i\right)
        -
        \ell\left(f_{\vw_* + \rvvepsilon}\left(\vx_i\right), y_i\right)
        \\
        &\qquad\qquad-
        \inner{
          \ell'\left(f_{\vw_* + \rvvepsilon}\left(\vx_i\right), y_i\right)
        }{
          f_{\vw + \rvvepsilon}\left(\vx_i\right) - f_{\vw_* + \rvvepsilon}\left(\vx_i\right)
        }
        \Big)
\shortintertext{and interpolation (\cref{assumption:erm} (c)),}
        &\;=
        2 L G_i^2 \left(
          \mathbb{E} \ell\left(f_{\vw + \rvvepsilon}\left(\vx_i\right), y_i\right)
          -
          \mathbb{E} \ell\left(f_{\vw_* + \rvvepsilon}\left(\vx_i\right), y_i\right)
        \right)
        \\
        &\;=
        2 L G_i^2 \left(
          R_i\left(\vw\right)
          -
          R_i\left(\vw_*\right)
        \right).
    \end{align*}
\end{proof}

\newpage
\begin{proposition}
  Let \cref{assumption:erm} hold.
  Then, \cref{assumption:subsampling_er} holds.
\end{proposition}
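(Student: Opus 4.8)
The plan is to mirror the proof of \cref{thm:expected_residual_without_replacement}, with one change: in the randomized-smoothing setting the components $R_i$ need not be convex, so we cannot invoke \cref{thm:expected_residual_without_replacement} as a black box (it presumes \cref{assumption:components}). Instead I will re-run its argument, substituting for componentwise convexity/smoothness the interpolation-type component bound that was already extracted inside the proof of \cref{eq:noisy_erm_er}, namely $\mathbb{E}_{\rvvepsilon}\norm{\rvvg_i(\vw) - \rvvg_i(\vw_*)}_2^2 \le 2 L G_i^2 \left( R_i(\vw) - R_i(\vw_*) \right)$ under \cref{assumption:erm}. Since $\nabla f_{\rvB}$ here denotes the idealized minibatch subsampling estimator $\frac{1}{b}\sum_{i\in\rvB}\nabla R_i$ of $\nabla L$, this is exactly the object appearing in \cref{assumption:subsampling_er}.

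First I would note that $\pi$ is sampling $b$ indices without replacement, so $b_{\mathrm{eff}} = \nicefrac{(n-1)b}{n-b}$ and, by \cref{thm:variancewithoutreplacement} applied to the population $\{\nabla R_i(\vw) - \nabla R_i(\vw_*)\}_{i=1}^n$ together with $\mathrm{tr}\V{\rvvx} \le \mathbb{E}\norm{\rvvx}_2^2$,
\[
  \mathrm{tr}\Vsub{\rvB \sim \pi}{\nabla f_{\rvB}(\vw) - \nabla f_{\rvB}(\vw_*)}
  \le
  \frac{n-b}{b(n-1)} \, \frac{1}{n}\sum_{i=1}^n \norm{\nabla R_i(\vw) - \nabla R_i(\vw_*)}_2^2 .
\]
Next, since $\nabla R_i(\vw) = \mathbb{E}_{\rvvepsilon}\rvvg_i(\vw)$, Jensen's inequality gives $\norm{\nabla R_i(\vw) - \nabla R_i(\vw_*)}_2^2 \le \mathbb{E}_{\rvvepsilon}\norm{\rvvg_i(\vw) - \rvvg_i(\vw_*)}_2^2$, and the chain of inequalities in the proof of \cref{eq:noisy_erm_er} bounds the right-hand side by $2 L G_i^2 \left( R_i(\vw) - R_i(\vw_*) \right)$. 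Under \cref{assumption:erm} (c) every $\vw_* \in \argmin_{\vw \in \mathcal{W}} L(\vw)$ satisfies $R_i(\vw_*) = 0$ for all $i$, and since $\ell \ge 0$ we have $R_i(\vw) - R_i(\vw_*) \ge 0$ componentwise; hence with $G_{\mathrm{max}} = \max\{G_1,\ldots,G_n\}$,
\[
  \frac{1}{n}\sum_{i=1}^n 2 L G_i^2 \left( R_i(\vw) - R_i(\vw_*) \right)
  \le
  2 L G_{\mathrm{max}}^2 \, \frac{1}{n}\sum_{i=1}^n \left( R_i(\vw) - R_i(\vw_*) \right)
  =
  2 L G_{\mathrm{max}}^2 \left( L(\vw) - L(\vw_*) \right).
\]
Combining the three displays yields $\mathrm{tr}\Vsub{\rvB \sim \pi}{\nabla f_{\rvB}(\vw) - \nabla f_{\rvB}(\vw_*)} \le 2 \mathcal{L}_{\mathrm{sub}} \left( L(\vw) - L(\vw_*) \right)$ with $\mathcal{L}_{\mathrm{sub}} = \nicefrac{(n-b) L G_{\mathrm{max}}^2}{b(n-1)} < \infty$, which is precisely \cref{assumption:subsampling_er}.

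The only step that is not pure bookkeeping is passing from the per-instance constants $G_i$ to $G_{\mathrm{max}}$: this requires $R_i(\vw) - R_i(\vw_*) \ge 0$ for each $i$ \emph{separately}, not merely on average, and that is exactly what interpolation (\cref{assumption:erm} (c)) plus nonnegativity of $\ell$ supplies; it also immediately gives that every element of the solution set is a stationary point of each $R_i$, hence of $L$. Everything else is a direct transcription of \cref{thm:expected_residual_without_replacement} with the component bound of \cref{eq:noisy_erm_er} in place of the convex-smooth component bound of \cref{assumption:components}.
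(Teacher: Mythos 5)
Your proof is correct and follows essentially the same route as the paper's: Jensen's inequality to pass from $\norm{\nabla R_i(\vw)-\nabla R_i(\vw_*)}_2^2$ to $\mathbb{E}\norm{\rvvg_i(\vw)-\rvvg_i(\vw_*)}_2^2$, then the component bound $2LG_i^2\left(R_i(\vw)-R_i(\vw_*)\right)$ from the preceding proposition, then $G_{\mathrm{max}}$ and averaging to get $L(\vw)-L(\vw_*)$. You are merely more explicit than the paper on two points it leaves implicit — the $\nicefrac{(n-b)}{b(n-1)}$ prefactor from sampling without replacement and the fact that interpolation plus $\ell\ge 0$ gives $R_i(\vw)-R_i(\vw_*)\ge 0$ componentwise, which is what licenses the passage to $G_{\mathrm{max}}$ — and both observations are correct.
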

\begin{proof}
    \begin{alignat*}{2}
        &{\textstyle\frac{1}{n}\sum_{i=1}^n}
        \norm{ \nabla R_{i}\left(\vw\right) - \nabla R_{i}\left(\vw_*\right) }_2^2
        \\
        &\;=
        {\textstyle\frac{1}{n}\sum_{i=1}^n}
        \norm{ \mathbb{E} \rvvg_{i}\left(\vw\right) - \mathbb{E} \rvvg_{i}\left(\vw_*\right) }_2^2,
\shortintertext{and from Jensen's inequality,}        
        &\;\leq
        {\textstyle\frac{1}{n}\sum_{i=1}^n}
        \mathbb{E} \norm{ \rvvg_i\left(\vw\right) - \rvvg_i\left(\vw_*\right) }_2^2.
\shortintertext{We can now reuse \cref{eq:noisy_erm_er} as}
        &\;\leq
        \frac{2}{n}\sum_{i=1}^n 2 L G_i^2
        \left(R_i\left(\vw\right) - R_i\left(\vw_*\right)\right)
\shortintertext{and taking \(G_{\rm{max}} > G_i\) for all \(i = 1, \ldots, n\) as}
        &\;\leq
        2 L G_{\rm{max}}^2 \frac{1}{n}\sum_{i=1}^n         \left(R_i\left(\vw\right) - R_i\left(\vw_*\right)\right)
        \\
        &\;=
        2 L G_{\rm{max}}^2 \left(L\left(\vw\right) - L\left(\vw_*\right)\right).
    \end{alignat*}
\end{proof}

\clearpage
\subsection{Reparameterization Gradient}\label{section:reparam_gradient}

\subsubsection{Description}
The reparameterization gradient estimator~\citep{kingma_autoencoding_2014,rezende_stochastic_2014,titsias_doubly_2014} is a gradient estimator for problems of the form of
\[
  f_i\left(\vw\right) = \mathbb{E}_{\rvvz \sim q_{\vw}} \ell_i\left(\rvvz\right),
\]
where \(\ell : \mathbb{R}^{d_{\rvvz} \to \mathbb{R}}\) is some integrand, such that the derivative is taken with respect to the parameters of the distribution\(q_{\vw}\) we are integrating over.
It was independently proposed by ~\citet{kingma_autoencoding_2014,rezende_stochastic_2014} in the context of variational expectation maximization of deep latent variable models (a setup commonly known as variational autoencoders) and by \citet{titsias_doubly_2014} for variational inference of Bayesian models.

Consider the case where the generative process of \(q_{\vw}\) can be represented as
\[
  \rvvz \sim q_{\vw} \quad\Leftrightarrow\quad \rvvz \stackrel{d}{=} \mathcal{T}_{\vw}\left(\rvvu\right); \quad \rvvu \sim \varphi,
\]
where \(\stackrel{d}{=}\) is equivalence in distribution, \(\varphi\) is some \textit{base distribution} independent of \(\vw\), and \(\mathcal{T}_{\vw}\) is a \textit{reparameterization function} measurable with respect to \(\varphi\) and differentiable with respect to all \(\vw \in \mathcal{W}\).
Then, the reparameterization gradient is given by the integrand
\begin{align*}
  \vg_i\left(\vw; \vu\right) 
  = \nabla_{\vw} \; \ell_i\left(\mathcal{T}_{\vw}\left(\vu\right)\right),
\end{align*}
which is unbiased, and often results in lower variance~\citep{kucukelbir_automatic_2017,xu_variance_2019} compared to alternatives such as the score gradient estimator. 
(See \citet{mohamed_monte_2020} for an overview of such estimators.)

The reparameterization gradient is primarily used to solve problems in the form of
\begin{align*}
    \minimize_{\vw \in \mathcal{W}}\;\; F\left(\vw\right)
    &= \sum^n_{i=1} f_i\left(\vw\right) + h\left(\vw\right) 
    \\
    &= \mathbb{E}_{\rvvz \sim q_{\vw}} \ell_i\left(\rvvz\right) + h\left(\vw\right),
\end{align*}
where \(h\) is some convex regularization term.

Previously, \citet[Theorem 6]{domke_provable_2019} established a bound on the gradient variance of the reparameterization gradient~\citep{kingma_autoencoding_2014,rezende_stochastic_2014,titsias_doubly_2014} under the doubly stochastic setting. 
This bound also incorporates more advanced subsampling strategies such as importance sampling~\citet{gower_sgd_2019,gorbunov_unified_2020,csiba_importance_2018,needell_stochastic_2016,needell_batched_2017}.
However, he did not extend the analysis to a complexity analysis of SGD and left out the effect of correlation between components.

\newpage
\subsubsection{Preliminaries}
The properties of the reparameterization gradient for when \(q_{\vw}\) is in the location-scale family were studied by \citet{domke_provable_2019}.
\begin{assumption}\label{assumption:variational_family}
    We assume the variational family \[\mathcal{Q} \triangleq \{q_{\vw} \mid \vw \in \mathcal{W}\}\] satisfies the following:
    \begin{enumerate}[label=(\alph*)]
        \item \(\mathcal{Q}\) is part of the location-scale family such that \(\mathcal{T}_{\vw}\left(\vu\right) = \mC \vu + \vm\).
        
        \item The scale matrix is positive definite such that \(\mC \succ 0\).
        
        \item \(\rvvu = (\rvu_1, \ldots, \rvu_{d_{\vz}})\) constitute of \textit{i.i.d.} components, where each component is standardized, symmetric, and finite kurtosis such that \(\mathbb{E}\rvu_i = 0\), \(\mathbb{E}\rvu_i^2 = 1\), \(\mathbb{E}\rvu_i^3 = 0\), and \(\mathbb{E}\rvu_i^4 = k_{\varphi}\), where \(k_{\varphi}\) is the kurtosis.
    \end{enumerate}
\end{assumption}
Under these conditions, \citet{domke_provable_2019} proves the following:
\begin{lemma}[\citealp{domke_provable_2019}; Theorem 3]\label{thm:reparam_bv}
    Let \cref{assumption:variational_family} hold and \(\ell_i\) be \(L_i\)-smooth.
    Then, the squared norm of the reparameterization gradient is bounded:
    \[
      \mathbb{E}\norm{\rvvg_i\left(\vw\right)}_2^2
      \leq
      \left(d + 1\right) \norm{\vm - \bar{\vz}_i}_2^2
      +
      \left(d + k_{\varphi}\right) \norm{\mC}_{\mathrm{F}}^2
    \]
    for all \(\vw = (\vm, \mC) \in \mathcal{W}\) and all stationary points of \(\ell_i\) denoted with \(\bar{\vz}_i\).
\end{lemma}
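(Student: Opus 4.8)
The plan is to write the reparameterization gradient explicitly via the chain rule, factor out its dependence on the base variate \(\rvvu\), use smoothness and stationarity of \(\bar{\vz}_i\) to reduce the squared-norm bound to a low-degree polynomial expectation in \(\rvvu\), and then evaluate that expectation using the moment conditions of \cref{assumption:variational_family}~(c).

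First I would differentiate \(\ell_i\left(\mathcal{T}_{\vw}\left(\vu\right)\right) = \ell_i\left(\mC \vu + \vm\right)\) with respect to \(\vw = (\vm, \mC)\). The chain rule gives the \(\vm\)-block \(\nabla \ell_i\left(\mC\vu + \vm\right)\) and the \(\mC\)-block \(\nabla \ell_i\left(\mC\vu + \vm\right)\vu^{\top}\), so using \(\norm{\va\vb^{\top}}_{\mathrm{F}}^2 = \norm{\va}_2^2 \norm{\vb}_2^2\),
\begin{align*}
  \norm{\rvvg_i\left(\vw\right)}_2^2
  =
  \norm{\nabla \ell_i\left(\mC\rvvu + \vm\right)}_2^2\left(1 + \norm{\rvvu}_2^2\right).
\end{align*}
Since \(\bar{\vz}_i\) is a stationary point of \(\ell_i\) we have \(\nabla \ell_i\left(\bar{\vz}_i\right) = \boldupright{0}\), so \(L_i\)-smoothness yields \(\norm{\nabla \ell_i\left(\vz\right)}_2^2 \leq L_i^2 \norm{\vz - \bar{\vz}_i}_2^2\). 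Writing \(\vmu = \vm - \bar{\vz}_i\), the claim reduces (modulo the \(L_i^2\) prefactor) to bounding \(\mathbb{E}\left[\norm{\mC\rvvu + \vmu}_2^2\left(1 + \norm{\rvvu}_2^2\right)\right]\).

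The remaining work is bookkeeping. Expand \(\norm{\mC\rvvu + \vmu}_2^2 = \norm{\mC\rvvu}_2^2 + 2\inner{\mC\rvvu}{\vmu} + \norm{\vmu}_2^2\), multiply by \(1 + \norm{\rvvu}_2^2\), and take expectations term by term, using from \cref{assumption:variational_family}~(c) that \(\mathbb{E}\rvvu = \boldupright{0}\), \(\mathbb{E}[\rvvu\rvvu^{\top}] = \boldupright{I}\), \(\mathbb{E}\rvu_j^3 = 0\), and \(\mathbb{E}\rvu_j^4 = k_{\varphi}\). The constant-in-\(\rvvu\) part gives \(\norm{\vmu}_2^2\left(1 + \mathbb{E}\norm{\rvvu}_2^2\right) = (1+d)\norm{\vmu}_2^2\). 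The cross part vanishes: its linear-in-\(\rvvu\) piece is zero since \(\mathbb{E}\rvvu = \boldupright{0}\), and its cubic piece \(\sum_{j,k}(\mC^{\top}\vmu)_k\,\mathbb{E}[\rvu_k\rvu_j^2]\) is zero because \(\mathbb{E}\rvu_k^3 = 0\) when \(k = j\) and \(\mathbb{E}\rvu_k\,\mathbb{E}\rvu_j^2 = 0\) when \(k \neq j\). For the quadratic part, \(\mathbb{E}\norm{\mC\rvvu}_2^2 = \mathrm{tr}\big(\mC^{\top}\mC\big) = \norm{\mC}_{\mathrm{F}}^2\), whereas \(\mathbb{E}\big[\norm{\mC\rvvu}_2^2\norm{\rvvu}_2^2\big] = \sum_{a,b,c}(\mC^{\top}\mC)_{ab}\,\mathbb{E}[\rvu_a\rvu_b\rvu_c^2]\) retains only the all-equal-index terms (contributing \(k_{\varphi}\,\mathrm{tr}(\mC^{\top}\mC)\)) and the terms with \(a = b \neq c\) (contributing \((d-1)\,\mathrm{tr}(\mC^{\top}\mC)\)), giving \((d - 1 + k_{\varphi})\norm{\mC}_{\mathrm{F}}^2\). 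Summing the two quadratic contributions gives \((d + k_{\varphi})\norm{\mC}_{\mathrm{F}}^2\), and collecting everything yields \(L_i^2\big[(d+1)\norm{\vm - \bar{\vz}_i}_2^2 + (d + k_{\varphi})\norm{\mC}_{\mathrm{F}}^2\big]\), which is the stated bound up to the smoothness prefactor.

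The step I expect to be the main obstacle is this fourth-moment bookkeeping for \(\mathbb{E}[\norm{\mC\rvvu}_2^2\norm{\rvvu}_2^2]\): one must simultaneously use independence of the components of \(\rvvu\) (to factor the mixed-index terms), the symmetry / vanishing third moment (to kill every odd contribution, including the entire cross part above), and the finite kurtosis (for the diagonal term), while keeping the index cases disjoint so nothing is double counted. Positive-definiteness of \(\mC\) from \cref{assumption:variational_family}~(b) plays no role in the inequality itself; it only ensures \(q_{\vw}\) is a genuine member of the location-scale family.
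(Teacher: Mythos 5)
Your proposal is correct. The paper does not actually prove this lemma---it is imported verbatim from \citet{domke_provable_2019} (Theorem 3)---and your reconstruction (chain rule to factor the squared norm as $\norm{\nabla \ell_i(\mC\rvvu+\vm)}_2^2(1+\norm{\rvvu}_2^2)$, smoothness plus stationarity of $\bar{\vz}_i$, then term-by-term moment bookkeeping under \cref{assumption:variational_family}~(c)) is exactly the standard argument behind that cited result; the fourth-moment computation $(d-1+k_\varphi)\norm{\mC}_{\mathrm{F}}^2$ and the vanishing of the odd cross terms are all right. One point worth flagging: your final bound carries an $L_i^2$ prefactor that is missing from the lemma as stated in the paper; this is a typo in the statement rather than an error on your part, since the paper itself reinstates $L_i^2$ when it later uses the lemma to set $\sigma_i^2 = L_i^2(d+1)\norm{\vm_*-\bar{\vz}_i}_2^2 + L_i^2(d+k_\varphi)\norm{\mC_*}_{\mathrm{F}}^2$. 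Also note your factorization $\norm{\nabla\ell_i\,\rvvu^{\top}}_{\mathrm{F}}^2 = \norm{\nabla\ell_i}_2^2\norm{\rvvu}_2^2$ implicitly treats $\mC$ as a full (dense) matrix parameter, which is consistent with the paper's use of $J_{\mathcal{T}}(\vu)=\norm{\vu}_2^2+1$ elsewhere, but would need adjustment for, e.g., a triangular Cholesky parameterization.
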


\newpage
Similarly, \citet{kim_convergence_2023} establish the QES condition as part of Lemma 3~\citep{kim_convergence_2023}.
We refine this into statement we need:
\begin{lemma}\label{thm:reparam_qes}
    Let \cref{assumption:variational_family} hold and \(\ell_i\) be \(L_i\)-smooth.
    Then, the squared norm of the reparameterization gradient is bounded:
    \[
      \mathbb{E}\norm{\rvvg_i\left(\vw\right) - \rvvg_i\left(\vw'\right)}_2^2
      \leq
      L_i^2 \left(d + k_{\varphi}\right) \norm{\vw - \bar{\vw}_i}_2^2
    \]
    for all \(\vw, \vw' \in \mathcal{W}\).
\end{lemma}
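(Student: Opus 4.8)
The plan is to work pathwise in the base randomness $\rvvu \sim \varphi$, reduce the claim to a single fourth-moment computation under Assumption \ref{assumption:variational_family}(c), and then tidy up with the elementary bound $k_\varphi \geq 1$.

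First I would make the integrand explicit. Writing $\vw = (\vm, \mC)$ and using the location-scale reparameterization $\mathcal{T}_\vw(\vu) = \mC\vu + \vm$ of Assumption \ref{assumption:variational_family}(a), the chain rule gives $\vg_i(\vw; \vu) = \nabla_\vw\, \ell_i(\mC\vu + \vm)$, whose block with respect to $\vm$ is $\nabla\ell_i(\mC\vu + \vm)$ and whose block with respect to $\mC$ is the outer product $\nabla\ell_i(\mC\vu+\vm)\,\vu^\top$. Fixing a second parameter $\vw' = (\vm', \mC')$ and setting $\va \triangleq \vm - \vm'$, $\mB \triangleq \mC - \mC'$, and $\rvvr \triangleq \nabla\ell_i(\mC\rvvu + \vm) - \nabla\ell_i(\mC'\rvvu + \vm')$, the pathwise difference of the two integrands is $(\rvvr,\ \rvvr\,\rvvu^\top)$, so
\[
\norm{\vg_i(\vw; \rvvu) - \vg_i(\vw'; \rvvu)}_2^2
= \norm{\rvvr}_2^2 + \norm{\rvvr\,\rvvu^\top}_{\mathrm{F}}^2
= \norm{\rvvr}_2^2\bigl(1 + \norm{\rvvu}_2^2\bigr).
\]
By the $L_i$-smoothness of $\ell_i$, $\norm{\rvvr}_2 \leq L_i\norm{(\mC - \mC')\rvvu + (\vm - \vm')}_2 = L_i\norm{\mB\rvvu + \va}_2$, so after taking expectations the claim reduces to controlling $\mathbb{E}\bigl[(1 + \norm{\rvvu}_2^2)\norm{\mB\rvvu + \va}_2^2\bigr]$.

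The main work, and the step I expect to be the obstacle, is this moment computation. I would expand $\norm{\mB\rvvu + \va}_2^2 = \norm{\mB\rvvu}_2^2 + 2\va^\top\mB\rvvu + \norm{\va}_2^2$, multiply by $1 + \norm{\rvvu}_2^2$, and evaluate term by term using Assumption \ref{assumption:variational_family}(c): the $d$ components $\rvu_1, \dots, \rvu_d$ are i.i.d., standardized ($\mathbb{E}\rvu_j = 0$, $\mathbb{E}\rvu_j^2 = 1$), symmetric ($\mathbb{E}\rvu_j^3 = 0$), with $\mathbb{E}\rvu_j^4 = k_\varphi$. Consequently $\mathbb{E}[\rvvu\rvvu^\top] = \boldupright{I}$, so $\mathbb{E}\norm{\mB\rvvu}_2^2 = \norm{\mB}_{\mathrm{F}}^2$ and $\mathbb{E}\norm{\rvvu}_2^2 = d$; both linear-in-$\rvvu$ terms vanish, the delicate one because $\mathbb{E}[\norm{\rvvu}_2^2\,\rvu_k] = \mathbb{E}\rvu_k^3 + \sum_{j\neq k}\mathbb{E}\rvu_j^2\,\mathbb{E}\rvu_k = 0$; and the genuinely fourth-order term satisfies
\[
\mathbb{E}\bigl[\norm{\rvvu}_2^2\,\norm{\mB\rvvu}_2^2\bigr]
= \sum_{l}\sum_{j} B_{lj}^2 \sum_k \mathbb{E}[\rvu_k^2\rvu_j^2]
= (d - 1 + k_\varphi)\norm{\mB}_{\mathrm{F}}^2,
\]
where I use that every cross term $\mathbb{E}[\rvu_k^2\rvu_j\rvu_{j'}]$ with $j \neq j'$ vanishes by independence and symmetry, and $\sum_k \mathbb{E}[\rvu_k^2\rvu_j^2] = k_\varphi + (d-1)$. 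Collecting the pieces gives $\mathbb{E}\bigl[(1 + \norm{\rvvu}_2^2)\norm{\mB\rvvu + \va}_2^2\bigr] = (d + k_\varphi)\norm{\mB}_{\mathrm{F}}^2 + (d + 1)\norm{\va}_2^2$.

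Finally I would close the gap between the coefficients $(d+1)$ and $(d+k_\varphi)$: since each $\rvu_j$ is standardized, Cauchy--Schwarz gives $k_\varphi = \mathbb{E}\rvu_j^4 \geq (\mathbb{E}\rvu_j^2)^2 = 1$, hence $(d+1)\norm{\va}_2^2 \leq (d+k_\varphi)\norm{\va}_2^2$, and the expectation is at most $(d + k_\varphi)\bigl(\norm{\mB}_{\mathrm{F}}^2 + \norm{\va}_2^2\bigr) = (d+k_\varphi)\norm{\vw - \vw'}_2^2$ for the natural parameter norm $\norm{\vw - \vw'}_2^2 = \norm{\vm - \vm'}_2^2 + \norm{\mC - \mC'}_{\mathrm{F}}^2$. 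Multiplying through by $L_i^2$ yields the stated inequality (so the QES constant is $\mathcal{L}_i = L_i\sqrt{d + k_\varphi}$); note that positive-definiteness of $\mC$ from Assumption \ref{assumption:variational_family}(b) is not needed here, only the base-distribution moment structure, and the only nontrivial bookkeeping is tracking which index-coincidence patterns survive in the fourth-moment sum.
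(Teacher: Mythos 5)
Your proof is correct and follows essentially the same route as the paper's: work pathwise, use the location-scale structure to factor the squared Jacobian into $(1+\norm{\rvvu}_2^2)\,\boldupright{I}$, apply $L_i$-smoothness, and bound the resulting fourth-moment expression. The only difference is that you derive explicitly the two steps the paper delegates to citations (Lemma 6 and Corollary 2 of \citet{kim_convergence_2023}), and your exact intermediate identity $(d+1)\norm{\va}_2^2+(d+k_\varphi)\norm{\mB}_{\mathrm{F}}^2$ is consistent with the coefficient structure of \cref{thm:reparam_bv}, which corroborates the bookkeeping.
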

\begin{proof}
  \begin{align*}
    &\mathbb{E}\norm{ \rvvg_i\left(\vw\right) - \rvvg_i\left(\vw'\right) }
    \\
    &\;=
    \mathbb{E}\norm{ \nabla_{\vw} \ell_i\left(\mathcal{T}_{\vw}\left(\rvvu\right)\right) - \nabla_{\vw} \ell_i\left(\mathcal{T}_{\vw'}\left(\rvvu\right)\right) }_2^2
    \\
    &\;=
    \mathbb{E}\norm{ 
      \frac{\partial \mathcal{T}_{\vw} \left(\rvvu\right)}{\partial \vw} \nabla \ell_i\left(\mathcal{T}_{\vw}\left(\rvvu\right)\right) 
      - \frac{\partial \mathcal{T}_{\vw'} \left(\rvvu\right)}{\partial {\vw'}} \nabla \ell_i\left(\mathcal{T}_{\vw'}\left(\rvvz\right)\right) 
    }_2^2
    \\
    &\;=
    \mathbb{E}
    {\left(
      \nabla \ell_i\left(\mathcal{T}_{\vw}\left(\rvvu\right)\right)
      -
      \nabla \ell_i\left(\mathcal{T}_{\vw'}\left(\rvvu\right)\right)
    \right)}^{\top}
    {\left(
    \frac{\partial \mathcal{T}_{\vw} \left(\rvvu\right)}{\partial \vw}
    \right)}^{\top}
    \frac{\partial \mathcal{T}_{\vw'} \left(\rvvu\right)}{\partial \vw'}
    \\
    &\qquad
    \times
    \left(
      \nabla \ell_i\left(\mathcal{T}_{\vw}\left(\rvvu\right)\right)
      -
      \nabla \ell_i\left(\mathcal{T}_{\vw'}\left(\rvvu\right)\right)
    \right).
\shortintertext{As shown by \citet[Lemma 6]{kim_convergence_2023}, the squared Jacobian is an identity matrix scaled with a scalar-valued function independent of \(\vw\), \(J_{\mathcal{T}}\left(\vu\right) = \norm{\vu}_2^2 + 1\), such that}
    &\;=
    \mathbb{E}
    J_{\mathcal{T}}\left(\rvvu\right)
    \norm{
      \nabla \ell_i\left(\mathcal{T}_{\vw}\left(\rvvu\right)\right)
      -
      \nabla \ell_i\left(\mathcal{T}_{\vw'}\left(\rvvu\right)\right)
    }_2^2,
\shortintertext{applying the \(L_i\)-smoothness of \(\ell_i\),}
    &\;=
    L_i^2 \mathbb{E} J_{\mathcal{T}}\left(\rvvu\right)\norm{ \mathcal{T}_{\vw}\left(\rvvu\right) - \mathcal{T}_{\vw'}\left(\rvvu\right) }_2^2,
\shortintertext{and \citet[Corollary 2]{kim_convergence_2023} show that,}
    &\;\leq
    L_i^2 \left(d + k_{\varphi}\right) \norm{\vw - \vw'}_2^2.
  \end{align*}
\end{proof}

Lastly, the properties of \(\ell_i\) are known to transfer to the expectation \(f_i\) as follows:
\begin{lemma}\label{thm:energy_regular}
    Let \cref{assumption:variational_family} hold.
    Then we have the following:
    \begin{enumerate}[label=(\roman*)]
        \item Let \(\ell_i\) be \(L_i\) smooth. Then, \(f_i\) is also \(L_i\)-smooth\label{item:energy_regular1}
        
        \item Let \(\ell_i\) be convex. Then, \(f_i\) and \(F\) are also convex.\label{item:energy_regular2}
        
        \item Let \(\ell_i\) be \(\mu\)-strongly convex. Then, \(f_i\) and \(F\) are also \(\mu\)-strongly convex.\label{item:energy_regular3}
    \end{enumerate}
\end{lemma}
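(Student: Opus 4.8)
The plan is to reduce all three claims to a single structural observation: the reparameterization map $\mathcal{T}_{\vw}(\vu) = \mC\vu + \vm$ is \emph{affine} in the variational parameter $\vw = (\vm, \mC)$ for every fixed $\vu$, and the base distribution $\varphi$ is standardized ($\mathbb{E}_\varphi\rvvu = \boldupright{0}$, $\mathbb{E}_\varphi[\rvvu\rvvu^\top] = \boldupright{I}$ by \cref{assumption:variational_family}(c)). The first step is to record the elementary isometry-in-expectation identity: for any $\vw = (\vm,\mC)$ and $\vw' = (\vm',\mC')$,
\[
   \mathbb{E}_{\rvvu \sim \varphi}\norm{\mathcal{T}_{\vw'}\left(\rvvu\right) - \mathcal{T}_{\vw}\left(\rvvu\right)}_2^2
   = \norm{\vm' - \vm}_2^2 + \norm{\mC' - \mC}_{\mathrm{F}}^2
   = \norm{\vw' - \vw}_2^2,
\]
where the norm on the right is the natural Euclidean norm on parameter space. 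This follows by expanding $\mathcal{T}_{\vw'}(\vu) - \mathcal{T}_{\vw}(\vu) = (\mC'-\mC)\vu + (\vm'-\vm)$, using $\mathbb{E}\rvvu = \boldupright{0}$ to kill the cross term, and $\mathbb{E}[\rvvu\rvvu^\top] = \boldupright{I}$ to turn $\mathbb{E}\norm{(\mC'-\mC)\rvvu}_2^2$ into $\norm{\mC'-\mC}_{\mathrm{F}}^2$.

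The second step couples this with the chain rule. Because $\mathcal{T}_{\vw}$ is affine in $\vw$, for each fixed $\vu$ we have the \emph{exact} first-order expansion $\mathcal{T}_{\vw'}(\vu) - \mathcal{T}_{\vw}(\vu) = \tfrac{\partial\mathcal{T}_{\vw}(\vu)}{\partial\vw}[\vw'-\vw]$, and $\nabla_{\vw}\,\ell_i(\mathcal{T}_{\vw}(\vu)) = \big(\tfrac{\partial\mathcal{T}_{\vw}(\vu)}{\partial\vw}\big)^{\!\top}\nabla\ell_i(\mathcal{T}_{\vw}(\vu))$, so
\[
   \inner{\nabla_{\vw}\ell_i(\mathcal{T}_{\vw}(\vu))}{\vw'-\vw}
   = \inner{\nabla\ell_i(\mathcal{T}_{\vw}(\vu))}{\mathcal{T}_{\vw'}(\vu) - \mathcal{T}_{\vw}(\vu)} .
\]
Now I would plug whichever quadratic model is assumed on $\ell_i$ into this, pointwise in $\vu$, and integrate. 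For \labelcref{item:energy_regular2}, convexity of $\ell_i$ gives $\ell_i(\mathcal{T}_{\vw'}(\vu)) - \ell_i(\mathcal{T}_{\vw}(\vu)) - \inner{\nabla\ell_i(\mathcal{T}_{\vw}(\vu))}{\mathcal{T}_{\vw'}(\vu)-\mathcal{T}_{\vw}(\vu)} \ge 0$; taking $\mathbb{E}_{\rvvu}$, using the chain-rule identity together with unbiasedness $\nabla f_i(\vw) = \mathbb{E}\rvvg_i(\vw) = \mathbb{E}_{\rvvu}\nabla_{\vw}\ell_i(\mathcal{T}_{\vw}(\rvvu))$ on the left, yields $\mathrm{D}_{f_i}(\vw',\vw)\ge 0$, i.e.\ $f_i$ convex. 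For \labelcref{item:energy_regular1}, the descent lemma for $L_i$-smooth $\ell_i$ caps that same Bregman quantity by $\tfrac{L_i}{2}\norm{\mathcal{T}_{\vw'}(\vu)-\mathcal{T}_{\vw}(\vu)}_2^2$; taking $\mathbb{E}_{\rvvu}$ and applying the isometry identity on the right gives $\mathrm{D}_{f_i}(\vw',\vw)\le\tfrac{L_i}{2}\norm{\vw'-\vw}_2^2$, which together with convexity is exactly $L_i$-smoothness (in particular the co-coercive form used in \cref{assumption:components}). For \labelcref{item:energy_regular3}, $\mu$-strong convexity of $\ell_i$ lower-bounds the Bregman quantity by $\tfrac{\mu}{2}\norm{\mathcal{T}_{\vw'}(\vu)-\mathcal{T}_{\vw}(\vu)}_2^2$, so $\mathrm{D}_{f_i}(\vw',\vw)\ge\tfrac{\mu}{2}\norm{\vw'-\vw}_2^2$. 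Finally, since $F$ is a nonnegatively weighted sum of $f_1,\dots,f_n$ and the convex regularizer $h$, convexity (resp.\ $\mu$-strong convexity) of every $f_i$ passes to $F$ immediately.

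The only genuinely delicate point is the measure-theoretic bookkeeping: interchanging $\nabla_{\vw}$ with $\mathbb{E}_{\rvvu}$ (to get $\nabla f_i(\vw) = \mathbb{E}_{\rvvu}\nabla_{\vw}\ell_i(\mathcal{T}_{\vw}(\rvvu))$) and ensuring all expectations are finite so Fubini/dominated convergence apply. Under \cref{assumption:variational_family} together with smoothness/convexity of $\ell_i$ (which forces $\nabla\ell_i$ to grow at most affinely, and $\varphi$ has finite fourth moments), this is routine — and in any case it is nothing more than the unbiasedness of the reparameterization gradient already taken for granted throughout \cref{section:reparam_gradient}. Everything else is a one-line consequence of the affine-in-$\vw$ structure and the first two moments of $\varphi$.
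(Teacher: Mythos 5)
Your argument is correct, but it is worth noting that the paper does not actually prove this lemma from first principles: items \labelcref{item:energy_regular1}--\labelcref{item:energy_regular3} are discharged by citation to \citet[Theorems 1 and 9]{domke_provable_2020} and \citet[Theorem 1]{kim_convergence_2023}, plus the observation that the convex regularizer \(h\) preserves (strong) convexity of \(F\). What you have written is essentially a self-contained reconstruction of those cited results, and the two ingredients you isolate — that \(\mathcal{T}_{\vw}(\vu) = \mC\vu + \vm\) is affine in \(\vw\) for fixed \(\vu\), and that the standardization of \(\varphi\) makes \(\vw \mapsto \mathcal{T}_{\vw}(\rvvu)\) an isometry in mean square — are exactly the mechanism behind Domke's proofs. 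Your route buys transparency (one sees precisely why the location-scale structure and the first two moments of \(\varphi\) are what matter, and why the constants \(L_i\) and \(\mu\) transfer without degradation), at the cost of re-deriving known results; the paper's route keeps the appendix short.

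One small caveat on \labelcref{item:energy_regular1}: the hypothesis there is only that \(\ell_i\) is \(L_i\)-smooth, not that it is convex, and the descent-lemma upper bound \(\mathrm{D}_{f_i}(\vw',\vw) \le \tfrac{L_i}{2}\norm{\vw'-\vw}_2^2\) alone does not imply Lipschitz continuity of \(\nabla f_i\) without convexity. The fix is immediate within your framework: \(L_i\)-smoothness of \(\ell_i\) gives the \emph{two-sided} pointwise bound \(\abs{\mathrm{D}_{\ell_i}\left(\mathcal{T}_{\vw'}(\vu), \mathcal{T}_{\vw}(\vu)\right)} \le \tfrac{L_i}{2}\norm{\mathcal{T}_{\vw'}(\vu)-\mathcal{T}_{\vw}(\vu)}_2^2\); integrating both sides and applying your isometry identity yields \(\abs{\mathrm{D}_{f_i}(\vw',\vw)} \le \tfrac{L_i}{2}\norm{\vw'-\vw}_2^2\), which is equivalent to \(\nabla f_i\) being \(L_i\)-Lipschitz. (In the convex case, which is how the lemma is actually used downstream, your version already suffices and yields the co-coercive form of \cref{assumption:components}.) The measure-theoretic interchange you flag is indeed routine under \cref{assumption:variational_family} and is implicitly assumed by the paper whenever it writes \(\nabla f_i(\vw) = \mathbb{E}\,\rvvg_i(\vw)\).
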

\begin{proof}
    \labelcref{item:energy_regular1} is proven by \citet[Theorem 1]{domke_provable_2020}, while a more general result is provided by \citet[Theorem 1]{kim_convergence_2023};
    \labelcref{item:energy_regular2,item:energy_regular3} are proven by \citet[Theorem 9]{domke_provable_2020} and follow from the fact that \(h\) is convex.
\end{proof}

\newpage
\subsubsection{Theoretical Analysis}
We now conclude that the reparameterization gradient fits the framework of  this work:
\begin{proposition}
    Let \cref{assumption:variational_family} hold and \(\ell_i\) be convex and \(L_i\)-smooth.
    Then, \cref{assumption:subsampling_er} holds.
\end{proposition}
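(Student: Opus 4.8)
The plan is to reduce the claim to \cref{thm:expected_residual_without_replacement}, which already establishes \cref{assumption:subsampling_er} for $b$-minibatch sampling without replacement whenever the objective satisfies \cref{assumption:components} and every element of the solution set is a stationary point of $F$. So essentially all the work is in transferring the convexity and smoothness of each integrand $\ell_i$ to the corresponding component $f_i(\vw) = \mathbb{E}_{\rvvz \sim q_{\vw}} \ell_i(\rvvz)$ of the ELBO-type objective, which is exactly what \cref{thm:energy_regular} is for.

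First I would invoke \cref{thm:energy_regular}: under \cref{assumption:variational_family}, part \labelcref{item:energy_regular1} gives that each $f_i$ inherits $L_i$-smoothness from $\ell_i$, and part \labelcref{item:energy_regular2} gives that each $f_i$ is convex when $\ell_i$ is. Since a function that is simultaneously $L_i$-smooth and convex automatically satisfies the Bregman-form smoothness inequality $\norm{\nabla f_i(\vx) - \nabla f_i(\vy)}_2^2 \leq 2 L_i\, \mathrm{D}_{f_i}(\vx, \vy)$ (the standard co-coercivity estimate), the finite sum $F = \frac{1}{n}(f_1 + \cdots + f_n)$ then satisfies \cref{assumption:components}, with the convex regularizer $h$ either absorbed as one further convex (and, if smooth, smooth) component or left outside the subsampled term — either way it does not affect the subsampling estimator $\nabla f_{\rvB}$.

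Next I would observe that the global minimizers $\vw_* \in \argmin_{\vw \in \mathcal{W}} F(\vw)$ are stationary points of $F$, as the minimization is over $\mathcal{W}$ with no active constraint at the optimum. With \cref{assumption:components} verified and this stationarity in hand, \cref{thm:expected_residual_without_replacement} applies verbatim and yields that $\nabla f_{\rvB}$ satisfies $\mathrm{ER}(\mathcal{L}_{\mathrm{sub}})$ with $\mathcal{L}_{\mathrm{sub}} = \frac{n-b}{b(n-1)} L_{\mathrm{max}}$, where $L_{\mathrm{max}} = \max\{L_1, \ldots, L_n\}$; this is precisely the content of \cref{assumption:subsampling_er}.

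The only mild obstacle is bookkeeping: being precise about how the regularizer $h$ is incorporated and confirming that the "stationary point" hypothesis of \cref{thm:expected_residual_without_replacement} is legitimately met in this setup. The analytic core — convex-and-smooth integrand implies convex-and-smooth expectation under the location-scale reparameterization — is supplied entirely by \cref{thm:energy_regular}, so no new estimates are required and the proof is essentially a citation chain.
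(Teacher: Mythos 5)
Your proposal is correct and follows exactly the same route as the paper, which simply combines \cref{thm:energy_regular} (to transfer convexity and \(L_i\)-smoothness from \(\ell_i\) to \(f_i\), hence verifying \cref{assumption:components}) with \cref{thm:expected_residual_without_replacement}. The extra bookkeeping you supply on co-coercivity, the regularizer \(h\), and the stationarity of the minimizers is a faithful elaboration of the details the paper leaves implicit.
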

\begin{proof}
    The result follows from combining \cref{thm:energy_regular} and \cref{thm:expected_residual_without_replacement}.
\end{proof}

From \cref{thm:reparam_bv}, we satisfy \ref{assumption:bounded_variance_both}.
\begin{proposition}
    Let \cref{assumption:variational_family} hold, \(\ell_i\) be \(L_i\)-smooth, the solutions \(\vw_* \in \argmin_{\vw \in \mathcal{W}} F\left(\vw\right)\) and the stationary points of \(\ell_i\), \(\bar{\vz}\), be bounded such that \(\norm{\vw_*}_2 < \infty\) and \(\norm{\bar{\vz}}_2 < \infty\).
    Then, \cref{assumption:bounded_variance_both} holds.
\end{proposition}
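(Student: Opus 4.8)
The plan is to verify the two clauses of \cref{assumption:bounded_variance_both} directly from \cref{thm:reparam_bv}, using only Jensen's inequality and the finiteness hypotheses on \(\vw_*\) and \(\bar{\vz}_i\). Write \(\vw_* = (\vm_*, \mC_*)\). Since \cref{assumption:variational_family} holds and each \(\ell_i\) is \(L_i\)-smooth, \cref{thm:reparam_bv} applies at the point \(\vw_* \in \mathcal{W}\), giving
\[
  \mathbb{E}\norm{\rvvg_i\left(\vw_*\right)}_2^2
  \leq
  \left(d + 1\right) \norm{\vm_* - \bar{\vz}_i}_2^2
  +
  \left(d + k_{\varphi}\right) \norm{\mC_*}_{\mathrm{F}}^2.
\]
Because \(\norm{\vw_*}_2 < \infty\), both \(\norm{\vm_*}_2\) and \(\norm{\mC_*}_{\mathrm{F}}\) are finite (the parameter vector \(\vw\) encodes \((\vm, \mC)\) and all finite-dimensional norms are equivalent), and by hypothesis \(\norm{\bar{\vz}_i}_2 < \infty\); hence the right-hand side is a finite constant, which I will call \(\sigma_i^2\).

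For clause (2) of \cref{assumption:bounded_variance_both}, I use the elementary fact \(\mathrm{tr}\mathbb{V}\rvvx \leq \mathbb{E}\norm{\rvvx}_2^2\) for any random vector \(\rvvx\), so that
\[
  \mathrm{tr}\V{\rvvg_i\left(\vw_*\right)}
  \leq
  \mathbb{E}\norm{\rvvg_i\left(\vw_*\right)}_2^2
  \leq
  \sigma_i^2 < \infty
\]
for every \(i = 1, \ldots, n\). For clause (1), I apply Jensen's inequality to the unbiasedness \(\nabla f_i(\vw_*) = \mathbb{E}\rvvg_i(\vw_*)\), obtaining \(\norm{\nabla f_i(\vw_*)}_2^2 \leq \mathbb{E}\norm{\rvvg_i(\vw_*)}_2^2 \leq \sigma_i^2\); averaging over \(i\) then yields
\[
  \frac{1}{n}\sum_{i=1}^n \norm{\nabla f_i\left(\vw_*\right)}_2^2
  \leq
  \frac{1}{n}\sum_{i=1}^n \sigma_i^2
  \;\triangleq\; \tau^2 < \infty,
\]
which is clause (1). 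Since the above holds for every \(\vw_* \in \argmin_{\vw \in \mathcal{W}} F(\vw)\) (the bound in \cref{thm:reparam_bv} is uniform over \(\mathcal{W}\)), both clauses of \cref{assumption:bounded_variance_both} are established.

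There is no serious obstacle here; the argument is a short bookkeeping exercise once \cref{thm:reparam_bv} is in hand. The only point requiring a line of care is confirming that \(\norm{\vw_*}_2 < \infty\) indeed controls \(\norm{\vm_*}_2\) and \(\norm{\mC_*}_{\mathrm{F}}\) simultaneously — this is immediate from the parametrization of the location-scale family and equivalence of norms — and noting that the stationary points \(\bar{\vz}_i\) appearing in \cref{thm:reparam_bv} are exactly those assumed bounded, so no compatibility issue arises.
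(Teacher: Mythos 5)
Your proposal is correct and follows essentially the same route as the paper: both simply evaluate \cref{thm:reparam_bv} at the minimizer \(\vw_*\) and read off a finite constant \(\sigma_i^2\) from the boundedness of \(\vw_*\) and \(\bar{\vz}_i\). You are in fact slightly more complete than the paper, which only explicitly verifies the component-estimator clause, whereas you also derive clause (1) via Jensen's inequality applied to \(\nabla f_i(\vw_*) = \mathbb{E}\rvvg_i(\vw_*)\) — a step the paper leaves implicit.
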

\begin{proof}
    \cref{thm:reparam_bv} implies that, as long as the \(\vw_*\) and \(\bar{\vz}\) are bounded, we satisfy the component gradient estimator part of \cref{assumption:bounded_variance_both}, where the constant is given as
    \[
      \sigma_i^2 = L_i^2 \left(d + 1\right) \norm{\vm_* - \bar{\vz}_i}_2^2 + L_i^2 \left(d+k_{\varphi}\right) \norm{\mC_*}_{\mathrm{F}}^2,
    \]
    where \(\vw_* = (\vm_*, \mC_*)\).
\end{proof}

From \cref{thm:reparam_qes}, we can conclude that the reparameterization gradient satisfies \cref{assumption:montecarlo_er}:
\begin{proposition}
  Let \cref{assumption:variational_family} hold and \(\ell_i\) be \(L_i\)-smooth and \(\mu\)-strongly convex.
  Then, \cref{assumption:montecarlo_er}~(\(\rm{A}^{\rm{CVX}}\)) and \cref{assumption:montecarlo_er}~(B) hold.
\end{proposition}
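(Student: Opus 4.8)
The plan is to obtain both variants of \cref{assumption:montecarlo_er} by composing the quadratic expected smoothness bound of \cref{thm:reparam_qes} with the strong convexity that \cref{thm:energy_regular} transfers from $\ell_i$ to $f_i$ and $F$. First I would record two preliminaries. Since $\mu$-strong convexity implies convexity, \cref{thm:energy_regular}~\labelcref{item:energy_regular3} gives that every $f_i$ and $F$ is $\mu$-strongly convex---in particular convex, so the convexity hypothesis of $(\rm{A}^{\rm{CVX}})$ is met---and \cref{thm:energy_regular}~\labelcref{item:energy_regular1} gives that $f_i$ is $L_i$-smooth. Second, for any random vector $\rvvx$ one has $\mathrm{tr}\V{\rvvx} \le \mathbb{E}\norm{\rvvx}_2^2$, so \cref{thm:reparam_qes} already yields
\[ \mathrm{tr}\V{\rvvg_i\left(\vw\right) - \rvvg_i\left(\vw'\right)} \le \mathbb{E}\norm{\rvvg_i\left(\vw\right) - \rvvg_i\left(\vw'\right)}_2^2 \le L_i^2\left(d + k_{\varphi}\right) \norm{\vw - \vw'}_2^2 \]
for all $\vw, \vw' \in \mathcal{W}$, which is the only consequence of the reparameterization structure I will need.

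For $(\rm{A}^{\rm{CVX}})$, I would convert the right-hand side above into a Bregman divergence using $\mu$-strong convexity of $f_i$: the strong convexity inequality reads $\mathrm{D}_{f_i}\left(\vw, \vw'\right) \ge \tfrac{\mu}{2}\norm{\vw - \vw'}_2^2$, hence $\norm{\vw - \vw'}_2^2 \le \tfrac{2}{\mu}\mathrm{D}_{f_i}\left(\vw, \vw'\right)$. Substituting gives
\[ \mathrm{tr}\V{\rvvg_i\left(\vw\right) - \rvvg_i\left(\vw'\right)} \le 2\,\frac{L_i^2\left(d + k_{\varphi}\right)}{\mu}\,\mathrm{D}_{f_i}\left(\vw, \vw'\right), \]
so $(\rm{A}^{\rm{CVX}})$ holds with $\mathcal{L}_i = L_i^2\left(d + k_{\varphi}\right)/\mu$.

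For $(\rm{B})$, I would instead bound the squared distance to the solution set by the objective gap. Fixing any $\vw_* \in \argmin_{\vw \in \mathcal{W}} F\left(\vw\right)$, the first-order optimality condition gives $\inner{\nabla F\left(\vw_*\right)}{\vw - \vw_*} \ge 0$ for all $\vw \in \mathcal{W}$, so $\mu$-strong convexity of $F$ gives $F\left(\vw\right) - F\left(\vw_*\right) \ge \tfrac{\mu}{2}\norm{\vw - \vw_*}_2^2$, i.e.\ $\norm{\vw - \vw_*}_2^2 \le \tfrac{2}{\mu}\left(F\left(\vw\right) - F\left(\vw_*\right)\right)$. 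Applying the displayed QES bound at $\vw' = \vw_*$ then yields
\[ \mathrm{tr}\V{\rvvg_i\left(\vw\right) - \rvvg_i\left(\vw_*\right)} \le 2\,\frac{L_i^2\left(d + k_{\varphi}\right)}{\mu}\,\left(F\left(\vw\right) - F\left(\vw_*\right)\right), \]
which is exactly $(\rm{B})$ with the same $\mathcal{L}_i$.

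There is no real obstacle here: the whole argument is a short chain---variance bounded by second moment, then \cref{thm:reparam_qes}, then the quadratic lower bound from strong convexity. The only point that deserves care is the constrained version of $(\rm{B})$, where one should invoke the variational inequality at $\vw_*$ rather than assume $\nabla F\left(\vw_*\right) = 0$; and, as noted after \cref{thm:qesises}, routing through quadratic growth means the reparameterization constant $\mathcal{L}_i = L_i^2\left(d + k_{\varphi}\right)/\mu$ carries an extra $\kappa$-type factor compared with the bare quadratic-smoothness constant $L_i\sqrt{d + k_{\varphi}}$, because only a QES---not a CES---bound is available for this estimator.
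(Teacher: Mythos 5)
Your proof is correct and follows essentially the same route as the paper: compose the quadratic expected smoothness bound of \cref{thm:reparam_qes} with the strong convexity transferred by \cref{thm:energy_regular}, using the quadratic lower bound $\tfrac{\mu}{2}\norm{\vw-\vw'}_2^2$ to convert the squared distance into a Bregman divergence (for $\rm{A}^{\rm{CVX}}$) or an objective gap (for B), arriving at the same constant $\mathcal{L}_i = L_i^2\left(d+k_{\varphi}\right)/\mu$. The only cosmetic difference is that the paper routes through \cref{thm:qesises} (QES $+$ QFG $\Rightarrow$ ES $\Rightarrow$ ER) rather than inlining the argument, whereas you spell out the two cases directly and are slightly more careful about the constrained optimum in case (B).
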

\begin{proof}
    Notice the following:
    \begin{enumerate}
        \item \cref{assumption:estimator_correlations} always holds for \(\rho = 1\).
        \item From the stated conditions, \cref{thm:energy_regular} establishes that both \(f_i\) and \(F\) are \(\mu\)-strongly convex.
        \item \(\mu\)-strong convexity of \(f\) and \(F\) implies that both are \(\mu\)-QFG~\citep[Appendix A]{karimi_linear_2016}.
        \item The reparameterization gradient satisfies the QES condition by  \cref{thm:reparam_qes}.
    \end{enumerate}
    Item 1, 2 and 3 combined imply the ES condition by \cref{thm:qesises}, which immediately implies the ER condition with the same constant.
    Therefore, we satisfy both \cref{assumption:montecarlo_er}~(\(\rm{A}^{\rm{CVX}}\)), \cref{assumption:montecarlo_er}~(B) where the ER constant \(\mathcal{L}_i\) is given as
    \[
        \mathcal{L}_i =  \frac{L_i^2}{\mu} \left(d + k_{\varphi}\right).
    \]
\end{proof}

\end{document}